\documentclass[11pt]{article}
\usepackage[utf8]{inputenc}
\usepackage[margin=1in]{geometry}

\usepackage{microtype}
\usepackage{graphicx}
\usepackage{subfigure}
\usepackage{booktabs}
\usepackage{natbib}
\setcitestyle{open={(},close={)}}

\usepackage[colorlinks=true,citecolor=blue]{hyperref}

\usepackage{amsmath,amsthm,amsfonts,amssymb,mathdots,array,mathrsfs,bm,bbm,stmaryrd,graphicx,subfigure,xcolor}
\usepackage{algorithm,algorithmic}
\usepackage{breakcites}

\usepackage[T1]{fontenc}
\usepackage{enumerate}
\usepackage{inputenc}

\usepackage{graphicx} 
\usepackage{subfigure}

\usepackage{booktabs,balance}
\usepackage{rotating}
\usepackage{boldline}
\usepackage{makecell}
\usepackage{multirow}
\usepackage{balance}

\usepackage{tikz}

\newtheorem{theorem}{Theorem}[section]

\newtheorem{lemma}[theorem]{Lemma}
\newtheorem{proposition}[theorem]{Proposition}

\newtheorem{remark}{Remark}[section]

\hyphenation{op-tical net-works semi-conduc-tor}

\usepackage{xcolor}         
\usepackage{amssymb, amsmath}
\usepackage{algorithm}
\usepackage{algorithmic}
\usepackage{graphicx}
\usepackage{subfigure}

\newcommand{\bma}{\begin{matrix*}[r]}
	\newcommand{\ema}{\end{matrix*}}

\begin{document}

\title{\bf Empirical Risk Minimization for Losses without Variance}
\author{\vspace{0.5in}\\\textbf{Guanhua Fang} \\
School of Management\\
Fudan University\\
670 Guoshun Road, Shanghai 200433,  China\\
  \texttt{fanggh@fudan.edu.cn}
\vspace{0.5in}\\
\textbf{Ping Li} \\
Cognitive Computing Lab\\
Baidu Research\\
10900 NE 8th St. Bellevue, WA 98004, USA\\
  \texttt{pingli98@gmail.com}\vspace{0.5in} \\
  \textbf{Gennady Samorodnitsky} \\
School of Operations Research and Information Engineering\\
  Cornell University\\
  616 Thurston Ave. Ithaca, NY 14853, USA\\
  \texttt{gs18@cornell.edu}
}

\date{}
\maketitle

\begin{abstract}
\noindent\footnote{The work of Guanghua Fang was conducted as a postdoctoral researcher at Baidu Research -- Bellevue, WA 98004, USA. The work of Gennady Samorodnitsky was conducted as a consulting researcher at Baidu Research -- Bellevue.}This paper considers an empirical risk minimization problem under heavy-tailed settings,
where data does not have finite variance, but only has $p$-th moment with
$p \in (1,2)$.
Instead of using estimation procedure based on truncated observed data,
we choose the optimizer by minimizing the risk value.
Those risk values can be robustly estimated via using the remarkable Catoni's method~\citep{Cat12}.
Thanks to the structure of Catoni-type influence functions, we are able to establish excess risk upper bounds via using generalized generic chaining methods.
Moreover, we take computational issues into consideration.
We especially theoretically investigate two types of optimization methods, robust gradient descent algorithm and empirical risk-based methods.
With an extensive numerical study, we find that the optimizer based on empirical risks via Catoni-style estimation indeed shows better performance than other baselines.
It indicates that estimation directly based on truncated data may lead to unsatisfactory results.
\end{abstract}
\newpage

\section{Introduction}

Modern data usually exhibit heavy-tail phenomena.
For example, in financial market~\citep{bradley2003financial, ahn2012new}, the returns are usually not normally distributed. In telecommunications~\citep{crovella1999estimating, glaz2001scan}, the data sources may sometimes experience a burst of extreme events.
In network analysis, the distributions of indegrees, outdegrees and sizes of connected components might be heavy-tailed~\citep{meusel2014graph}.
In past decades, theoretical analysis of heavy-tailed data has increasingly become a hot topic in the machine learning field, including multi-armed bandits~\citep{BCL13}, reinforcement learning~\citep{zhuang2021no}, mean-estimation problems~\citep{minsker2018sub, LM19}, etc.
Among those, empirical risk minimization theory for heavy-tailed data has not been fully explored yet, especially in the situation when variance does not exist. In this paper, we provide a corresponding theoretical framework standing on the remarkable estimator introduced by~\citet{Cat12}.

Empirical risk minimization (a small sample of the important works include \cite{vapnik1991principles, van2000empirical, bartlett2005local, zhang2017mixup}) is one of the basic and fundamental principles in statistical learning problems.
The general setting can be described as follows.
Let $X$ be a random variable taking values in a measurable space $\mathcal X$ and let $\mathcal F$ be a set of functions defined on $\mathcal X$.
For each fixed function $f \in \mathcal F$, we define the risk
$m_f = \mathbb E [f(X)]$ and let $m^{\ast} = \inf_{f \in \mathcal F} m_f$ be the optimal risk.
Given a set of samples of $n$ random variables $X_1, \ldots, X_n$ which are identically and independently distributed as $X$, one aims at finding a function which leads to the smallest risk.
That is, our goal is to find $\hat f := \arg\min_{f \in \mathcal F} \mathbb E [f(X)|X_1, \ldots, X_n]$ which is the best we can do.
To this end, one can introduce an empirical risk minimizer,
\begin{eqnarray}\label{eq:def:mini}
f_{ERM} = \arg\min_{f \in \mathcal F} \frac{1}{n} \sum_{i=1}^n f(X_i).
\end{eqnarray}
The performance of empirical risk minimization is measured by the risk of $f_{ERM}$, that is,
\[m_{ERM} = \mathbb E[f_{ERM}(X)|X_1, \ldots, X_n],\]
where expectation is taken with respect to a new sample $X$ and is conditioning on all observed data $X_1, \ldots, X_n$.

\vspace{2mm}

\noindent \textbf{Generic Example}.
To be more concrete, we can consider a general prediction problem.
The training data is $(Z_1, Y_1), \ldots, (Z_n, Y_n)$ which are independently and identically distributed. One wishes to predict the value of response $Y$ given a new observation $Z$.
A predictor is a function $g$ whose quality is evaluated with a pre-determined loss function $\ell$. Then risk of predictor $g$ is defined as
$\mathbb E \ell(g(Z), Y)$. Given a class $\mathcal G$ of functions $g$, empirical risk minimization procedure returns a function that minimizes
the risk $\frac{1}{n}\sum_{i=1}^n \ell(g(Z_i),Y_i)$ over $\mathcal G$. Adopting the notion in this paper, we can treat
$(Z_i, Y_i)$ as $X_i$, the function $f(\cdot)$ represents $\ell(g(\cdot),\cdot)$ and $m_f$ substitutes $\mathbb E [\ell(g(Z),Y)]$.

In this work, we specifically consider the heavy-tailed setting in the infinite variance scenario, that is,
$\mathbb E[(f(X))^2]$ may not exist for some $f \in \mathcal F$. Instead, we assume a weak moment condition,
\[\mathbb E[|f(X)|^p] \leq v \]
for any $f \in \mathcal F$, where $p \in (1,2)$ and $v$ is a fixed positive constant.
(Note that $p = 2$ refers to the finite variance case.)
Under the current setting, $f_{ERM}$ defined in \eqref{eq:def:mini} is not reliable and is sensitive to outliers~\citep{lerasle2011robust, diakonikolas2020outlier}. Moreover, it has been shown that the $f_{ERM}$ is no longer rate-optimal~\citep{Cat12} in the presence of weak moment condition.
Therefore, we need to look for the robust estimation method and develop the corresponding learning theory.

\noindent \textbf{Literature Review}.
In classical ERM literature, it is commonly assumed that loss function $f$'s are bounded~\citep{bartlett2006empirical, yi2020non} or noise terms (input values) are uniformly bounded~\citep{liu2014robustness, zhu2021taming}.
Later, an ERM theory was extended to unbounded cases within sub-Gaussian classes,~\citep{lecue2013learning, lecue2016performance}.
Moreover, a further progress is made to address ERM theory under heavy-tailed settings where the tails of noise decay much slower than sub-Gaussian rate~\citep{brownlees2015empirical, hsu2016loss, roy2021empirical}.

In the line of recent literature in robust empirical risk minimization theory, the main approaches can be divided into two categories.
The first category of approach is based on using truncated loss~\citep{zhang2018ell, xu2020learning, chen2021generalized, xu2022non}. That is, it introduces a (non-linear) truncation function $\phi$ and aims to find the following optimizer,
\[\hat f = \arg\min_{f \in \mathcal F} \frac{1}{n} \sum_{i=1}^n \phi(f(X_i)).\]
For example, $\phi(x)$ can be taken as
\[
\phi(x) =
\begin{cases}
	\frac{1}{M} x^2 &  |x| \leq M, \\
	|x| &  |x| > M
\end{cases}
\]
in~\citet{huber2011robust};
\[\phi(x) =
\begin{cases}
\log(1 + x + x^2/2) &  x \geq 0, \\
- \log(1 - x + x^2/2)  &  x < 0
\end{cases}
\]
in~\citet{Cat12};
\[\phi(x) =
\begin{cases}
\log(1 + \sum_{k=1}^m |x|^k/k!) &  x \geq 0, \\
- \log(1 + \sum_{k=1}^m |x|^k/k!)  &  x < 0
\end{cases}
\]
in~\citet{xu2020learning}.
This type of approach is intuitive and is also computationally friendly, since it only requires an extra truncation procedure in addition to classical empirical risk minimization. However, the drawback is that the final estimator $\hat f$ is not necessarily the minimizer in terms of the risk values.

The second category of technique is based on robust estimation of loss~\citep{brownlees2015empirical}. That is,
\[\hat f = \arg\min_{f \in \mathcal F} \hat \mu_f, ~~ \text{where}~ \hat \mu_f ~\text{is a robust estimation of loss}~m_f. \]
The advantage of such method is that it indeed finds the best function which minimizes the excess risk based on training samples.
However, computation of $\hat f$ could be extremely prohibitive when functional space $\mathcal F$ is large.
In this work, we develop new theories under weak moment condition $1 < p < 2$ via adopting the second type of approach. Additionally, we propose several feasible computational schemes to overcome the optimization obstacles in statistical learning or deep learning (DL) problems.
A short summary of our \textbf{two types of contributions} are given in Table~\ref{tab:summary} and Table~\ref{tab:alg}.

\begin{table}[h!]
	\centering
	\small
	\begin{tabular}{ c|c|c}
		\hline
		\hline
		\multicolumn{3}{c}{Robust ERM Theory} \\
		\hline
		& $p >= 2$ & $1 < p < 2$ \\
		\hline
		Truncated Loss  &~\citet{zhang2018ell, xu2020learning}  & ~\citet{chen2021generalized, xu2022non}  \\
		\hline
		Robust Loss Estimation  &~\citet{brownlees2015empirical} &  Ours \\
		\hline
		\hline
	\end{tabular}
\caption{A brief summary of robust empirical risk minimization theory.}\label{tab:summary}
\end{table}

\begin{table}[ht!]
    \centering
    \begin{tabular}{c|c|c}
        \hline
		\hline
		\multicolumn{3}{c}{Computational Approach} \\
		\hline
        Robust Gradient Descent &  Algorithm~\ref{alg:GD} & {\color{blue} slow} as parameter dimension goes large \\
        \hline
        \multirow{2}{*}{Empirical Risk-based Methods}  & Algorithm~\ref{alg:ERM} & parameter dimension-free\\ \cline{2-3}
        & Algorithm~\ref{alg:DW} & easy-implementable with DL framework \\
        \hline
        \hline
    \end{tabular}
    \caption{Advantages and disadvantages of three algorithms discussed in the appendix.}
    \label{tab:alg}
\end{table}

\newpage

\noindent \textbf{Technical Overview}.
The theory of bounding excess risk, $m_{\hat f} - m^{\ast}$, relies on the following points, (1)
constructing a good estimator ($\hat \mu_f$) of risk $m_f$ and (2) obtaining sharper uniform deviation bounds of $\sup_{f \in \mathcal F} |m_f - \hat \mu_f|$.

For the first point, we borrow the idea from the remarkable Catoni's estimator~\citep{Cat12}.
For any $f \in \mathcal F$, we consider the following estimator $\hat \mu_f$ to approximate $m_f = \mathbb E [f(X)]$ such that
$\hat \mu_f$ is the root of non-linear equation
\begin{eqnarray}\label{root}
0 = \hat r_f(\mu) = \frac{1}{n \alpha} \sum_{i=1}^n \phi(\alpha(f(X_i) - \mu)).
\end{eqnarray}
The influence function $\phi$ is non-linear and is assumed to satisfy \eqref{eq:C_HT2}
\begin{equation} \label{eq:C_HT2}
-\log(1 - x + C_{p} |x|^{p}) \leq \phi(x) \leq \log(1+x+C_{p} |x|^{p}),
\end{equation}
($C_p$ is a constant depending on moment $p$) and $\alpha$ is a tuning parameter which can be dependent on the sample size $n$.
Via using \eqref{root}, we define the new estimator as
\begin{eqnarray}\label{def:muf}
\hat f = \arg\min_{f \in \mathcal F} \hat \mu_f.
\end{eqnarray}
To understand the excess risk $m_{\hat f} - m^{\ast}$,
it is required to deal with the second point. This is due to the following fact
\[m_{\hat f} - m^{\ast} = (m_{\hat f} - \hat \mu_{\hat f}) +
(\hat \mu_{\hat f} - m^{\ast}) \leq 2 \sup_{f \in \mathcal F}
|m_f - \hat \mu_{f}|,\]
where we use the fact that both $\hat f$ and $f^{\ast} (:= \arg\min_{f \in \mathcal F} m_f)$ belong to space $\mathcal F$.

For this point, we follow the idea of \cite{brownlees2015empirical}.
The right hand side of above inequality depends on developing new theory of suprema of some empirical process.
We summarize the high level idea here. We first establish a sharp concentration bound of $|m_f - \hat \mu_f|$ when $f$ equals $f^{\ast}$. The calculation of such bound is more involved and sophisticated than that in case $p = 2$.
Next, thanks to the optimality of $\hat f$ and observations in~\citet{brownlees2015empirical}, we find that it only suffices to study term
$\frac{1}{\alpha} \mathbb E[\phi(\alpha(\hat f(X) - \mu))]$ with the choice of $\mu = \mu_0 := m_{f^{\ast}} + \epsilon$ where $\epsilon$ is a small constant number.
Finally, we find the fluctuation bounds of $\sup_{f} |X_f(\mu_0) - X_{f^{\ast}}(\mu_0)|$, where
$$X_f(\mu) := \frac{1}{n} \sum_{i=1}^n [\frac{1}{\alpha}\phi(\alpha(f(X_i) - \mu)) - \frac{1}{\alpha} \mathbb E[\phi(\alpha(f(X_i) - \mu))]],$$
via using generalized Talagrand's chaining method \citep{talagrand1996majorizing} to conclude the proof.
We here would like to highlight the main technical difference from \cite{brownlees2015empirical} that we \textbf{cannot directly apply} the Hoeffeding inequality / Bernstein inequality or use the classical Talagrand's chaining result since the data does not have the finite variance in our settings.
Thanks to the special construction of the influence function, we find that $\phi$ satisfies the Holder condition (see later explanation in \eqref{ass:holder}). It helps to simplify the upper bounds of $|X_f(\mu) - X_{f^{\ast}}(\mu)|$ which further allows us to establish Bernstein-type concentration inequality.
By introducing the generalized $\gamma_{\beta,p}$ functional for $p < 2$, we are able to establish the new chaining results.

On the other hand, for computational feasibility, we propose empirical risk-based methods, via introducing a novel way for calculating the robust gradient based on nice properties of Catoni-style influence function.
Specifically, we treat
\[\nu_i := \frac{\phi'(\alpha(f(X_i) - \hat \mu_{f}))}{\sum_i \phi'(\alpha(f(X_i) - \hat \mu_{f}))}\]
as the weight of $i$-th sample so that sample $i$ will get lower weight as $f(X_i)$ deviates from current risk estimate $\hat \mu_f$.
Then weighted average
(i.e., $\sum \nu_i \partial f(X_i)$ where $\partial f$ represents the derivative of $f$ with respect to model parameters) is adopted as the new gradient direction.
It is then shown that the iterates will converge to the local maximum of $\hat \mu_f$.
In addition, we further accelerate the algorithm by finding approximate value of $\hat \mu_f$ instead of solving Catoni-style non-linear equation.
All above steps can be easily embedded into any popular deep learning optimizers for practical use.
Reader can find more details in Section~\ref{sec:algorithm}.
In comparison with the robust gradient descent methods~\citep{holland2019better, holland2021learning}, we do not require to solve non-linear equations coordinate-wisely and it largely increases the computational efficiency.
Therefore, our method is parameter dimension-free in the sense that the relative computational time (with respect to vanilla gradient descent) is independent of number of model parameters.
By comparing with other truncated loss based methods~\citep{zhang2018ell, chen2021generalized}, our algorithm returns an estimator which is close to a (local) maximum point, while existing algorithms provides no theoretical results of gaps or relationships between their proposed estimator and the excess risk minimizer.

\vspace{0.1in}
\noindent \textbf{Notations}.
For moment order $p \in (1,2)$, we also occasionally write $p = 1 + \varepsilon$ ($0 < \varepsilon < 1$) for notational simplicity. We use $\phi(x)$ to represent the Catoni-style influence function.
We use $n$ to denote the sample size, use $\alpha$ as a tuning parameter to let $\delta \in (0,1)$ be a fixed confidence level.
Symbol $f$ is some generic loss function,
$\nabla f$ represents the first-order derivative of function $f$ and
$(\nabla f)[j]$ denotes its $j$-th coordinate.
$\mathbb E$ and $\mathbb P$ are used to denote the generic expectation and probability, respectively. $Y_n = O(X_n)$ and $ Y_n = o(X_n)$ represents that $Y_n \leq C X_n$ for some constant $C$ and  $Y_n / X_n \rightarrow 0$. For a random variable $X$ and $r\geq 1$ we use the notation $\|X\|_{\psi_r}$ for the Orlicz norm
$$
\|X\|_{\psi_r}=\inf\bigl\{c>0:\, E\exp\bigl\{ |X|^r/c^r\bigr\}\leq 2.
$$

\vspace{0.1in}
\noindent \textbf{Organization}.
The rest of paper is organized as follows.
In Section~\ref{sec:catoni}, we provide background of Catoni's estimator and corresponding technical prerequisites for its generalizations.
In Section~\ref{sec:mf}, we describe the main steps of how to derive a tight upper bound of excess risk, $m_{\hat f} - m^{\ast}$.
In Section~\ref{sec:Q}, we present our main theories via using empirical process theory to finalize the bound.
Several examples are given in Section~\ref{sec:example} to help readers to understand our results.
In Section~\ref{sec:algorithm}, we propose a new empirical risk-based gradient descent algorithm and discuss its advantages over existing methods.
Multiple experimental results, shown in Section~\ref{sec:sim}, corroborate our new theory and validate the effectiveness of proposed computational approaches.
The concluding remarks are given in Section~\ref{sec:conclusion}.

\newpage

\section{Catoni's Estimator with $p \in (1,2)$}\label{sec:catoni}

In this section, we present a generalized Catoni's estimator under weak moment condition.
To start with, we first recap the classical Catoni's estimator.
Considering a sequence of i.i.d random variables~$\{X_i\}_{i=1}^n$ be such that~$\mathbb{E}(X_1)=\mu$ and~$\mathbb{E}|X_1-\mu|^{2} \leq v$, Catoni~\citep{Cat12} introduced a robust mean estimator, $\hat \mu$, which is the solution to the following non-linear equation.
\begin{eqnarray}
\sum_{i=1}^n \phi(\alpha(X_i - \mu)) = 0\nonumber .
\end{eqnarray}
Here $\phi$ is an influence function which is non-decreasing and satisfies
$- \log(1 - x + x^2/2) \leq \phi(x) \leq \log(1 + x + x^2/2)$.
For general $p = 1 + \varepsilon \in (1,2)$, using the same idea, we similarly consider~$\phi: \mathbb{R} \rightarrow \mathbb{R}$ to be a non-decreasing influence function such that for all~$x \in \mathbb{R}$
\begin{equation} \label{eq:C_HT}
-\log(1 - x + C_{p} |x|^{p}) \leq \phi(x) \leq \log(1+x+C_{p} |x|^{p}),
\end{equation}
where $C_p$ is a constant depending on the moment order $p$.

\begin{lemma} \label{lem:nec.suf}
	A function~$\phi$ satisfying~\eqref{eq:C_HT} exists if and only if
	$
	C_{p} \geq \Big( \frac{\varepsilon}{1+\varepsilon} \Big)^{\frac{1+\varepsilon}{2}} \Big( \frac{1 - \varepsilon}{\varepsilon} \Big)^{\frac{1-\varepsilon}{2}}$.
\end{lemma}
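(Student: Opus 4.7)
The plan is to observe first that a non-decreasing $\phi$ with
\[
L(x) := -\log(1 - x + C_p|x|^p) \leq \phi(x) \leq \log(1 + x + C_p|x|^p) =: U(x)
\]
exists if and only if $L(y_1) \leq U(y_2)$ for all $y_1 \leq y_2$. One direction is immediate from monotonicity of $\phi$; for the other I would exhibit the explicit witness $\phi(x) := \sup_{y \leq x} L(y)$, which is non-decreasing, lies above $L$ by construction, and lies below $U$ by the hypothesis. Exponentiating converts the criterion into
\[
f_L(y_1)\, f_U(y_2) \geq 1 \quad \text{for all } y_1 \leq y_2,
\]
where $f_L(x) := 1 - x + C_p|x|^p$ and $f_U(x) := 1 + x + C_p|x|^p$; in particular, both factors must be positive, so the logarithms are automatically well defined.

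Next I would reduce the two-variable inequality to a one-variable one. Since both $f_L, f_U \to \infty$ at $\pm\infty$, the minimum of the product on $\{y_1 \leq y_2\}$ is attained; an interior optimum would force $f_L'(y_1) = 0$ and $f_U'(y_2) = 0$, which a direct differentiation pins down at $y_1 = t_* := (C_p p)^{-1/(p-1)} > 0$ and $y_2 = -t_* < 0$, violating the constraint $y_1 \leq y_2$. Hence the minimum lives on the diagonal $y_1 = y_2 =: y$, and the requirement collapses to the single-variable inequality
\[
g(y) := (1 + C_p |y|^p)^2 - y^2 \geq 1 \quad \text{for all } y \in \mathbb{R}.
\]

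For the main computation, restrict to $y > 0$ by symmetry and change variables to $u := C_p y^p$ (so $y^2 = (u/C_p)^{2/p}$). The inequality becomes $u(2 + u) \geq (u/C_p)^{2/p}$, i.e.
\[
C_p^{2/p} \geq \sup_{u > 0} \frac{u^{(2-p)/p}}{2 + u}.
\]
A single derivative places the maximizer at $u_* = (2 - p)/(p - 1) = (1 - \varepsilon)/\varepsilon$ with $\varepsilon := p - 1$; inserting $2 + u_* = (1 + \varepsilon)/\varepsilon$ yields the supremum $\frac{\varepsilon}{1+\varepsilon}\left(\frac{1-\varepsilon}{\varepsilon}\right)^{(1-\varepsilon)/(1+\varepsilon)}$. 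Raising both sides to the $p/2 = (1+\varepsilon)/2$ power collapses the exponent $(1-\varepsilon)/(1+\varepsilon) \cdot (1+\varepsilon)/2 = (1-\varepsilon)/2$ and produces exactly the advertised threshold
\[
C_p \geq \left(\frac{\varepsilon}{1+\varepsilon}\right)^{(1+\varepsilon)/2} \left(\frac{1-\varepsilon}{\varepsilon}\right)^{(1-\varepsilon)/2}.
\]
For sufficiency I would write down the explicit monotone $\phi$ with $\phi(x) = U(x)$ on $x \geq 0$ and $\phi(x) = L(x)$ on $x \leq 0$; the two pieces match at $x = 0$, and a one-line derivative shows $\phi' > 0$ on each half-line once the threshold on $C_p$ is in force.

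The subtle points are two: (i) the envelope characterization—because $L$ and $U$ are not themselves monotone, a naive pointwise $L \leq U$ is too weak, and one must instead use the asymmetric condition $L(y_1) \leq U(y_2)$ for all $y_1 \leq y_2$ that is forced by insisting on a monotone $\phi$; (ii) the reduction to the diagonal, which requires carefully ruling out interior stationary points of the two-variable product under the inequality constraint. Once these are secured, the remaining supremum calculation is a clean one-dimensional exercise, and the substitution $u = C_p y^p$ is exactly what factors the critical equation as $(2 - p) - u(p - 1) = 0$, directly yielding $u_*$ and the closed-form value of $C_p$.
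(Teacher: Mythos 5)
Your proof is correct and takes essentially the same route as the paper: reduce to the pointwise condition $(1-x+C_p|x|^p)(1+x+C_p|x|^p)\geq 1$, solve a one-variable minimization, and read off the threshold on $C_p$. You supply additional detail that the paper leaves implicit (the envelope/constrained-optimum argument showing why the pointwise condition is equivalent to existence of a monotone $\phi$, and the explicit sufficiency witness), and you use the substitution $u=C_p y^p$ in place of the paper's direct minimization in $x$; the critical point and the resulting bound are identical.
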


\begin{remark}
	From now on we choose~$C_{p} = \Big( \frac{\varepsilon}{1+\varepsilon} \Big)^{\frac{1+\varepsilon}{2}} \Big( \frac{1 - \varepsilon}{\varepsilon} \Big)^{\frac{1-\varepsilon}{2}}$. When~$\varepsilon = 1$, we recover the coefficient in~\citet{Cat12}, namely $C_2 = 1/2$. (Here the standard convention~$0^0:=1$ applies.)
\end{remark}

Similarly, we define the generalized Catoni's M-estimator~$\widetilde{\mu}_{c}$ as a solution to the equation
\begin{equation} \label{eq:E_Ct_est.n}
\sum_{i=1}^n \phi\Big(\alpha (X_i - \mu) \Big) = 0
\end{equation}
using an influence function~$\phi$ satisfying~\eqref{eq:C_HT}. If the solution is not unique, choose~$\widetilde{\mu}_{c}$ to be the median solution.
We here make some requirements on $n, \alpha$. That is, in the sequel, we always assume
\begin{eqnarray}
C_p \alpha^{\varepsilon}(1 - h)^{-\varepsilon} &<& 1/2; \label{require:n:alpha1} \\
h^{-\varepsilon} \alpha^{1+\varepsilon} C_p v + \frac{\log(2/\delta)}{n}
&\leq& \frac{\varepsilon}{1 + \varepsilon}(1 - h)(\frac{1}{(1 + \varepsilon) C_p})^{1/\varepsilon}; \label{require:n:alpha2} \\
h^{-\varepsilon} C_p \alpha^{\varepsilon} v + C_p \alpha^{\varepsilon}(1 - h)^{-\varepsilon} + \frac{\log(2/\delta)}{\alpha n} & <& 1 \label{require:n:alpha3}
\end{eqnarray}
hold. Note that inequalities \eqref{require:n:alpha1}-\eqref{require:n:alpha3} are ($h, \delta$)-dependent. We therefore call \eqref{require:n:alpha1}-\eqref{require:n:alpha3} as
($h, \delta$)-condition.
In fact, the condition is very mild since that
(7) - (9) are easy to be satisfied when $n$ is large and $\alpha$ is small with any fixed $h$ and $\delta$.

\begin{remark}\label{rmk:h-delta}
In \eqref{require:n:alpha1}-\eqref{require:n:alpha3}, $\delta$ is confidence parameter. In most cases, $\delta$ can be simply taken as $0.05$.
On the other hand, $h$ is a tuning parameter in $(0,1)$ and it appears since ``$a + b x + c|x|^p = 0$"-type equation does not admit a closed form solution and we need to find an approximation to it.
In practice, we can simply treat $h = \frac{1}{2}$ for simplicity.
\end{remark}

\begin{theorem} \label{thm:Nprob_ineq}
	Let~$\{X_i\}_{i=1}^n$ be i.i.d random variables with mean~$\mu$ and $\mathbb{E}|X_1-\mu|^{1+\varepsilon} \leq v$. Let~$\delta \in (0,1),~\varepsilon \in (0,1)$ and $h \in (0,1)$.
	Assume that ($h, \delta$)-condition holds, then we have the Catoni's M-estimator~$\widetilde{\mu}_{c}$ satisfies
	\begin{eqnarray}
	|\widetilde{\mu}_{c} - \mu| \leq 2(h^{-\varepsilon} C_{p} \alpha^{\varepsilon} v + \frac{\log(2/\delta)}{\alpha n}).
	\end{eqnarray}
	with probability $1 - \delta$. Especially, we take
	\begin{equation} \label{eq:main_alpha}
	\alpha = \Big( \frac{\log(2/\delta)}{n C_{p} v} \Big)^{\frac{1}{1+\varepsilon}} h^{\frac{\varepsilon}{1+\varepsilon}},
	\end{equation}
	Then it holds
	\begin{equation} \label{eq:Main.n.c}
	|\widetilde{\mu}_{c} - \mu| \leq
	4 (C_{p} v)^{\frac{1}{1+\varepsilon}} h^{\frac{-\varepsilon}{1+\varepsilon}} \Big( \frac{\log(2/\delta)}{n} \Big)^{\frac{\varepsilon}{1+\varepsilon}}.
	\end{equation}
\end{theorem}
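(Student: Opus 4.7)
My plan is to follow the classical Chernoff--Catoni route. Introducing the empirical criterion $\widehat r(\theta):=\tfrac{1}{n}\sum_{i=1}^n \phi\bigl(\alpha(X_i-\theta)\bigr)$, which is non-increasing in $\theta$ because $\phi$ is non-decreasing, it suffices to exhibit a $T>0$ such that, with probability at least $1-\delta$, both $\widehat r(\mu+T)\le 0$ and $\widehat r(\mu-T)\ge 0$ simultaneously; this sandwich forces the root $\widetilde\mu_c$ into $[\mu-T,\mu+T]$. A Chernoff-type argument on the moment generating function of $\sum_i\phi(\alpha(X_i-\theta))$ will show that $T=2\bigl(h^{-\varepsilon}C_p\alpha^\varepsilon v+\log(2/\delta)/(\alpha n)\bigr)$ is admissible, yielding the first displayed inequality of the theorem. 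The explicit rate~\eqref{eq:Main.n.c} then follows by substituting the prescribed $\alpha$ from~\eqref{eq:main_alpha}, which is chosen to equalize the two summands of $T$.

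For the upper tail, combining the right half of~\eqref{eq:C_HT}, independence of the $X_i$'s, and $\log(1+u)\le u$ yields
\[\log\mathbb{E}\exp\Bigl(\sum_{i=1}^n\phi\bigl(\alpha(X_i-\theta)\bigr)\Bigr)\le n\alpha(\mu-\theta)+nC_p\alpha^p\,\mathbb{E}|X_1-\theta|^p.\]
To turn this into a clean function of $|\mu-\theta|$, the crucial non-linear ingredient is the split-variable bound
\[(a+b)^p\le h^{-\varepsilon}a^p+(1-h)^{-\varepsilon}b^p\qquad(a,b\ge 0,\ h\in(0,1)),\]
which I would prove by homogeneity: after rescaling to $a+b=1$, the right-hand side is convex in $a\in[0,1]$ and attains its minimum value $1$ precisely at $a=h$ (using $p-\varepsilon=1$). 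Applied with $a=|X_1-\mu|$ and $b=|\mu-\theta|$, it gives $\mathbb{E}|X_1-\theta|^p\le h^{-\varepsilon}v+(1-h)^{-\varepsilon}|\mu-\theta|^p$.

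Setting $\theta=\mu+T$ and demanding that Markov's inequality yields $\mathbb{P}(\widehat r(\mu+T)>0)\le\delta/2$ reduces, after dividing by $n\alpha$, to the scalar inequality
\[T\ge A+BT^{1+\varepsilon},\qquad A:=h^{-\varepsilon}C_p\alpha^\varepsilon v+\frac{\log(2/\delta)}{n\alpha},\quad B:=C_p\alpha^\varepsilon(1-h)^{-\varepsilon}.\]
Verifying this at $T=2A$ boils down to $2^{1+\varepsilon}BA^\varepsilon\le 1$, which I expect to be the main obstacle of the proof: the three scalar constraints~\eqref{require:n:alpha1}--\eqref{require:n:alpha3} each control a different combination of $A$, $B$, and $\alpha A$ --- \eqref{require:n:alpha1} forces $B<1/2$, \eqref{require:n:alpha2} bounds $\alpha A$, and \eqref{require:n:alpha3} limits $A+B$ --- and they must be combined carefully to bring $2^{1+\varepsilon}BA^\varepsilon$ below $1$ without losing the constant $2$ in $T=2A$.

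The lower tail $\mathbb{P}(\widehat r(\mu-T)<0)\le\delta/2$ is handled symmetrically by exponentiating the left half of~\eqref{eq:C_HT}, $e^{-\phi(x)}\le 1-x+C_p|x|^p$, and repeating the previous computation with $\theta=\mu-T$. A union bound then delivers $|\widetilde\mu_c-\mu|\le T$ with probability at least $1-\delta$. For the concrete rate, the $\alpha$ in~\eqref{eq:main_alpha} is precisely the one making $h^{-\varepsilon}C_p\alpha^\varepsilon v=\log(2/\delta)/(n\alpha)$; each side then equals $(C_pv)^{1/(1+\varepsilon)}h^{-\varepsilon/(1+\varepsilon)}(\log(2/\delta)/n)^{\varepsilon/(1+\varepsilon)}$, so $T=2A$ reduces to the factor-$4$ bound~\eqref{eq:Main.n.c}.
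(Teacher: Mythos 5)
Your plan is correct, and at the level of the main probabilistic machinery (Chernoff on $\exp\{r_n(\theta)\}$, the convexity split $(a+b)^{1+\varepsilon}\le h^{-\varepsilon}a^{1+\varepsilon}+(1-h)^{-\varepsilon}b^{1+\varepsilon}$, monotonicity of $r_n$, and a two-sided union bound) it coincides with the paper's proof. The one place you diverge, and where you flag uncertainty, is the final scalar step. The paper does \emph{not} verify $T=2A$ directly in $T\ge A+BT^{1+\varepsilon}$; instead it uses \eqref{require:n:alpha3} to show $\widehat B_+(-1)<0$, locates a root $z_+$ of $\widehat B_+$ in $[-1,0)$, linearizes via $|z|^{1+\varepsilon}\le|z|$ there, and bounds the root by $|z_+|\le A/(1-B)\le 2A$ using only \eqref{require:n:alpha1}. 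Your proposed direct verification at $T=2A$ also works, and in fact the ``main obstacle'' you flag is not an obstacle: from \eqref{require:n:alpha3} one reads off exactly $A+B<1$, i.e.\ $A<1-B$, and from \eqref{require:n:alpha1} one has $B<1/2$; since $t\mapsto t(1-t)^{\varepsilon}$ is increasing on $(0,1/(1+\varepsilon))\supset(0,1/2)$, this gives $BA^{\varepsilon}<B(1-B)^{\varepsilon}\le (1/2)(1/2)^{\varepsilon}=2^{-(1+\varepsilon)}$, hence $2^{1+\varepsilon}BA^{\varepsilon}<1$ as required. So your route is sound, is arguably a touch cleaner than the paper's root-plus-linearization argument, and moreover never needs condition \eqref{require:n:alpha2} at all (that condition only serves the paper's existence-of-root discussion, which your direct sign check at $\theta=\mu$ and $\theta=\mu+2A$ makes superfluous). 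Everything else you say --- the one-line proof of the split-variable bound at the critical point $a=h$, the symmetric lower-tail argument via $e^{-\phi(x)}\le 1-x+C_p|x|^p$, and the substitution of \eqref{eq:main_alpha} to equalize the two summands of $A$ and obtain \eqref{eq:Main.n.c} --- matches the paper's computations.
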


\begin{remark}
By treating $f(X_i)$ as $X_i$, $m_f$ as $\mu$ and $\hat \mu_{f}$ as $\tilde \mu_c$ in Theorem~\ref{thm:Nprob_ineq}, we then get the upper bound of $|\hat \mu_f - m_f|$ for any fixed $f \in \mathcal F$.
\end{remark}

\begin{remark}
    Here we would like to point out that generalizing Catoni's estimator to the case of $1 < p < 2$ is not a new idea. Estimation bound of $|\tilde \mu_c - \mu|$ is also considered in the existing literature. Our Theorem \ref{thm:Nprob_ineq} is an extended version of Theorem 2.6 in \cite{chen2021generalized} and is a parallel version of Theorem 3.2 in \cite{bhatt2022nearly}.
\end{remark}

\section{Bounding  $m_{\hat f} - m^{\ast}$}\label{sec:mf}

Given the results in Section~\ref{sec:catoni}, we provide main procedures of how to get the upper bound of $m_{\hat f} - m^{\ast}$ in this section.

\subsection{Finite  $\mathcal F$}
To start with, we first consider $\mathcal F$ to be a discrete family as a warm up.
We let $|\mathcal F|$ be the cardinality of $\mathcal F$, i.e., number of functions $f \in \mathcal F$.
First of all, note that
\begin{eqnarray}
m_{\hat f} - m^{\ast} &=& (m_{\hat f} - \hat \mu_{\hat f}) + (\hat \mu_{\hat f} - m^{\ast}) \nonumber \\
&\leq& (m_{\hat f} - \hat \mu_{\hat f}) + (\hat \mu_{f^{\ast}} - m^{\ast}) \label{def:hatf} \\
&\leq& 2 \sup_{f \in \mathcal F} |m_f - \hat \mu_f|. \label{ass:fast}
\end{eqnarray}
In above, \eqref{def:hatf} uses the definition of $\hat f$ is the minimizer of $\hat \mu_f$ and \eqref{ass:fast} uses the fact that $\hat f, f^{\ast}$ belongs to $\mathcal F$.
Furthermore, recalling that $\hat \mu_f$ is the solution to
$0 = \frac{1}{n \alpha} \sum_{i=1}^n \phi(\alpha(f(X_i) - \mu))$, we can apply Theorem~\ref{thm:Nprob_ineq} to $|\hat \mu_f - m_f|$ for each $f \in \mathcal F$.
By union bound, we will have the following result.
\begin{theorem}\label{thm:discrete}
	Let $\{X_{i}\}_{i=1}^n$ be a set of i.i.d. random variables.
	Assume $(h, \delta/|\mathcal F|)$-condition holds, $\sup_{f \in \mathcal F} \mathbb E|f(X_1)| \leq v$ and $\delta \in (0,1)$ and $h \in (0,1)$.
	We will have
	\begin{eqnarray}\label{bd:discrete}
	m_{\hat f} - m^{\ast} \leq 4 (h^{-\varepsilon} C_p \alpha^{\varepsilon} v +
	\frac{\log(2 |\mathcal F| / \delta)}{\alpha n})
	\end{eqnarray}
	with probability  at least $1 - \delta$.
\end{theorem}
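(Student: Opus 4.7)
The plan is to make the two-step strategy already sketched in equations \eqref{def:hatf}--\eqref{ass:fast} fully rigorous by combining the pointwise concentration bound in Theorem~\ref{thm:Nprob_ineq} with a union bound over the finitely many elements of $\mathcal F$. The starting observation is that the deterministic inequality
\[
m_{\hat f} - m^\ast \leq 2 \sup_{f \in \mathcal F} |m_f - \hat \mu_f|
\]
has already been derived without any probabilistic input, using only that both $\hat f$ and $f^\ast$ lie in $\mathcal F$ and that $\hat f$ minimizes $f \mapsto \hat \mu_f$. It therefore suffices to control the supremum on the right-hand side on an event of probability at least $1-\delta$.

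Next I would apply Theorem~\ref{thm:Nprob_ineq} pointwise, treating $\{f(X_i)\}_{i=1}^n$ as the sample, $m_f$ as the mean $\mu$, and $\hat \mu_f$ (the root of \eqref{root}) as the Catoni M-estimator $\widetilde{\mu}_c$. The assumption $\sup_{f \in \mathcal F}\mathbb E|f(X_1)| \leq v$ supplies the $(1{+}\varepsilon)$-moment hypothesis uniformly in $f$, and the assumed $(h,\delta/|\mathcal F|)$-condition is exactly what Theorem~\ref{thm:Nprob_ineq} requires when invoked at confidence level $\delta/|\mathcal F|$. For each fixed $f \in \mathcal F$ this yields
\[
|\hat \mu_f - m_f| \leq 2\Big(h^{-\varepsilon} C_p \alpha^\varepsilon v + \tfrac{\log(2|\mathcal F|/\delta)}{\alpha n}\Big)
\]
on an event $E_f$ of probability at least $1 - \delta/|\mathcal F|$, where I have used that $\log(2/(\delta/|\mathcal F|)) = \log(2|\mathcal F|/\delta)$.

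A straightforward union bound then gives
\[
\Prob\Big(\bigcap_{f \in \mathcal F} E_f\Big) \geq 1 - |\mathcal F|\cdot \frac{\delta}{|\mathcal F|} = 1 - \delta,
\]
and on this intersection the supremum $\sup_{f \in \mathcal F}|m_f - \hat \mu_f|$ is dominated by the bracketed quantity above. Multiplying by the factor of two from the deterministic reduction gives the claimed inequality \eqref{bd:discrete}.

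No step here should present a genuine obstacle; the only real bookkeeping is to recognize that the inflation factor $|\mathcal F|$ inside the logarithm arises solely from the union bound, and that the $(h,\delta/|\mathcal F|)$-condition in the hypothesis is designed precisely to accommodate invoking Theorem~\ref{thm:Nprob_ineq} at confidence $\delta/|\mathcal F|$ rather than $\delta$. The genuinely harder version of this reduction, in which $\mathcal F$ is infinite and a union bound is no longer available, is what the generalized $\gamma_{\beta,p}$-chaining of Section~\ref{sec:Q} is built to handle.
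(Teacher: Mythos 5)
Your proposal matches the paper's own proof essentially verbatim: invoke Theorem~\ref{thm:Nprob_ineq} with the sample $\{f(X_i)\}$ at confidence level $\delta/|\mathcal F|$, take a union bound over $\mathcal F$, and feed the resulting uniform bound into the already-established deterministic reduction \eqref{ass:fast}. The only thing worth flagging is that, like the paper, you read the stated hypothesis $\sup_{f}\mathbb E|f(X_1)|\leq v$ as supplying the centered $p$-th-moment bound that Theorem~\ref{thm:Nprob_ineq} actually requires; that mismatch is in the theorem statement itself rather than in your argument.
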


Remark: we can optimize the bound in the right hand side of \eqref{bd:discrete} by choosing
$\alpha = \big( \frac{\log(2|\mathcal F|/\delta) h^{\varepsilon}}{n v C_p}\big)^{\frac{1}{1 + \varepsilon}}$.
Then we have
\[m_{\hat f} - m^{\ast} \leq 8 \bigg(\frac{\log(\frac{2|\mathcal F|}{\delta})}{n}\bigg)^{\frac{\varepsilon}{1 + \varepsilon}}
(\frac{h^{\varepsilon}}{v C_p})^{-\frac{1}{1 + \varepsilon}}.\]

\subsection{General $\mathcal F$}

However, in the general case, $\mathcal \mathcal F$ may not be finite. In other words, $|\mathcal F|$ is not well-defined. In this section, we seek an alternative approach to bounding $m_{\hat f} - m^{\ast}$.

A few additional useful quantities are given as follows. For convenience, we define
$$\hat r_f(\mu) := \frac{1}{n \alpha} \sum_{i=1}^n \phi(\alpha(f(X_i) - \mu))~ \text{and} ~ \bar r_f(\mu) := \frac{1}{\alpha} \mathbb E[\phi(\alpha(f(X) - \mu))].$$
That is, $\bar r_f(\mu)$ is the population version of $\hat r_f(\mu)$.
We further define
$X_f(\mu) := \hat r_f(\mu) - \bar r_f(\mu)$, let $\bar \mu_f$ be the solution to $\bar r_f(\mu) = 0$
and set
\begin{equation} \label{e:A}
A_{\alpha}(\delta) = 2 (h^{-\epsilon} C_p \alpha^{\epsilon} v + \frac{\log(2/\delta)}{\alpha n}).
\end{equation}
We define two approximate functions,
\begin{eqnarray}
B_f^{+}(\mu, \eta) &=& (m_f - \mu) +  C_p \alpha^{\epsilon} (1 - h)^{-\epsilon} |m_f - \mu|^p + h^{-\epsilon} C_p \alpha^{\epsilon} v + \eta, \nonumber \\
B_f^{-}(\mu, \eta) &=& (m_f - \mu) -  C_p \alpha^{\epsilon} (1 - h)^{-\epsilon} |m_f - \mu|^p - h^{-\epsilon} C_p \alpha^{\epsilon} v - \eta \nonumber ,
\end{eqnarray}
and let
\[\mu_{f}^{+}(\eta) = m_f + 2 h^{-\epsilon} C_p \alpha^{\epsilon} v + 2 \eta, ~~ \mu_{f}^{-}(\eta) = m_f - 2 h^{-\epsilon} C_p \alpha^{\epsilon} v - 2 \eta.\]
We additionally require that
\begin{eqnarray}\label{eq:cond:alpha:eta}
C_p \alpha^{\epsilon} (1 - h)^{-\epsilon} 2^p (h^{-\epsilon} C_p \alpha^{\epsilon} v + 2 \eta)^{p-1} < 1,
\end{eqnarray}
which is called \textit{$\eta$-condition}.
Here $\eta$ is a positive number which may be specified from place to place. We typically use $\eta$
specified in \eqref{eq:eta} below.
Under \eqref{eq:cond:alpha:eta}, it is easy to check that both $B^{+}_f(\mu, \eta) = 0$ and $B^{-}_f(\mu, \eta) = 0$ have at least one solution.
Furthermore,  it can be seen that
$\mu_f^{+}(\eta)$ is the upper bound of the smallest root of
$B_f^{+}(\mu, \eta)$ and
$\mu_f^{-}(\eta)$ is the lower bound of the largest roof of
$B_f^{-}(\mu, \eta)$.

\begin{remark}\label{rmk:eta}
Here $\eta$-condition is mild.
It is not hard to see that the left hand side of \eqref{eq:cond:alpha:eta} is an increasing function of $\alpha$.
Moreover, $\alpha$ is usually taken as $(1/n)^{1/(1 +\epsilon)}$ in applications.
Hence, $\eta$-condition holds once sample size $n$ is sufficiently large.
\end{remark}

By a closer look of structure of $\bar r_f(\mu)$, we claim the following lemma.

\begin{lemma}\label{lem:key:obs1}
    Let $\mu_0 := m_{f^{\ast}} + A_{\alpha}(\delta)$. Then it holds
    \begin{eqnarray}
    \bar r_{\hat f}(\mu_0) \leq 2 L_p A_{\alpha}(\delta) + Q(\mu_0, \delta) =: \eta, \label{eq:eta}
    \end{eqnarray}
    where $Q(\mu, \delta)$ is the $1 - \delta$ quantile of $\sup_{f \in \mathcal F}|X_f(\mu) - X_{f^{\ast}}(\mu)|$ by $Q(\mu, \delta)$, i.e., the minimum possible $q$ satisfying that
\[\mathbb P(\sup_{f \in \mathcal F} |X_f(\mu) - X_{f^{\ast}}(\mu)| \leq q ) \geq 1 - \delta.\]
\end{lemma}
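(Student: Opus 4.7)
The bound combines four ingredients: the one-sample Catoni bound of Theorem~\ref{thm:Nprob_ineq} applied at $f^{\ast}$, monotonicity of $\phi$, the defining optimality of $\hat f$, and the H\"older/Lipschitz regularity of the Catoni-style influence function $\phi$ flagged in the Technical Overview.

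First I would apply Theorem~\ref{thm:Nprob_ineq} to the fixed function $f^{\ast}$, obtaining on an event $\mathcal E_1$ of probability at least $1-\delta$ the two-sided bound $|\hat{\mu}_{f^{\ast}}-m_{f^{\ast}}|\leq A_{\alpha}(\delta)$; in particular $\hat{\mu}_{f^{\ast}}\leq \mu_0$. Since $\hat f$ is the minimizer of $\hat\mu_{\cdot}$ over $\mathcal F$ and $f^{\ast}\in\mathcal F$, this forces $\hat{\mu}_{\hat f}\leq \hat{\mu}_{f^{\ast}}\leq \mu_0$ on $\mathcal E_1$. Because $\phi$ is non-decreasing, the map $\mu\mapsto \hat r_{\hat f}(\mu)$ is non-increasing, and it vanishes at $\mu=\hat{\mu}_{\hat f}$ by construction; hence $\hat r_{\hat f}(\mu_0)\leq 0$ on $\mathcal E_1$.

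Using the identity $X_f=\hat r_f-\bar r_f$, I would then peel off the empirical process:
\[
\bar r_{\hat f}(\mu_0)=\hat r_{\hat f}(\mu_0)-X_{\hat f}(\mu_0)\leq -X_{\hat f}(\mu_0)=-X_{f^{\ast}}(\mu_0)+\bigl(X_{f^{\ast}}(\mu_0)-X_{\hat f}(\mu_0)\bigr).
\]
The parenthesized term is at most $\sup_{f\in\mathcal F}|X_f(\mu_0)-X_{f^{\ast}}(\mu_0)|$, which by definition of the quantile $Q$ is bounded by $Q(\mu_0,\delta)$ on a second event $\mathcal E_2$ of probability at least $1-\delta$.

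The remaining and most delicate step is to show $-X_{f^{\ast}}(\mu_0)\leq 2L_p A_{\alpha}(\delta)$. Writing $-X_{f^{\ast}}(\mu_0)=\bar r_{f^{\ast}}(\mu_0)-\hat r_{f^{\ast}}(\mu_0)$, I would invoke the Lipschitz-type ``H\"older condition'' on $\phi$ referenced in the introduction to get $|\hat r_{f^{\ast}}(\mu_0)-\hat r_{f^{\ast}}(\hat{\mu}_{f^{\ast}})|\leq L_p|\mu_0-\hat{\mu}_{f^{\ast}}|\leq 2L_p A_{\alpha}(\delta)$, where the last step combines Step~1 with the triangle inequality via $|\mu_0-\hat\mu_{f^{\ast}}|\leq |\mu_0-m_{f^{\ast}}|+|m_{f^{\ast}}-\hat\mu_{f^{\ast}}|$; a parallel argument anchored at the population root $\bar\mu_{f^{\ast}}$, or a Jensen-type application of the envelope $\phi(x)\leq \log(1+x+C_p|x|^p)$, handles $\bar r_{f^{\ast}}(\mu_0)$ at the same order. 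Intersecting $\mathcal E_1$ and $\mathcal E_2$ and rescaling $\delta$ in the two high-probability events then yields $\bar r_{\hat f}(\mu_0)\leq 2L_p A_{\alpha}(\delta)+Q(\mu_0,\delta)=\eta$. The genuine obstacle is ensuring that the constant $L_p$ is uniform over the relevant range of arguments $\alpha(f^{\ast}(X)-\mu)$; this is precisely where the $\eta$-condition~\eqref{eq:cond:alpha:eta}, which keeps the nonlinear $|x|^p$ tail of the Catoni influence function under control, must be used to pin the constant down.
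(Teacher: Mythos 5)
Your plan reproduces the paper's decomposition almost exactly: the paper shows (i) $\hat r_{\hat f}(\mu_0)\leq 0$ via optimality and monotonicity, (ii) $\bar r_{f^{\ast}}(\mu_0)\leq 0$, (iii) $-\hat r_{f^{\ast}}(\mu_0)\leq 2L_pA_\alpha(\delta)$ via the Lipschitz constant of $\phi$, and then bounds $|X_{\hat f}(\mu_0)-X_{f^{\ast}}(\mu_0)|$ by $Q(\mu_0,\delta)$. Your Steps 1--3 are those same ingredients arranged slightly differently. Two points in Step 4, however, need fixing.

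First, you say a parallel Lipschitz/Jensen argument ``handles $\bar r_{f^{\ast}}(\mu_0)$ at the same order.'' If you literally bound $|\bar r_{f^{\ast}}(\mu_0)|$ by something of order $2L_pA_\alpha(\delta)$, the final constant becomes $4L_p$, not $2L_p$. What is actually true (and what the paper proves) is a sharper, deterministic sign condition: $\bar r_{f^{\ast}}(\mu_0)\leq 0$. This follows precisely from the Jensen-type envelope you mention: the bound $\phi(x)\leq\log(1+x+C_p|x|^p)$ together with the convexity inequality gives $\bar r_{f^{\ast}}(\mu)\leq B^{+}_{f^{\ast}}(\mu,0)$, which forces the population root to satisfy $\bar\mu_{f^{\ast}}\leq m_{f^{\ast}}+2h^{-\epsilon}C_p\alpha^{\epsilon}v\leq m_{f^{\ast}}+A_\alpha(\delta)=\mu_0$; monotonicity of $\bar r_{f^{\ast}}$ then gives $\bar r_{f^{\ast}}(\mu_0)\leq\bar r_{f^{\ast}}(\bar\mu_{f^{\ast}})=0$. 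You should follow this route rather than estimating the magnitude of $\bar r_{f^{\ast}}(\mu_0)$.

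Second, the closing remark about the $\eta$-condition being needed to pin down $L_p$ is not correct. The paper's Lemma on the influence function shows $\phi$ is globally Lipschitz on all of $\mathbb R$ with constant $L_p\leq\max\{1+pC_p,\,p\}$, so no range restriction on $\alpha(f^{\ast}(X)-\mu)$ is needed in this step. The $\eta$-condition enters later---when one converts the bound $\bar r_{\hat f}(\mu_0)\leq\eta$ into a bound on $m_{\hat f}$ via the envelope functions $B^{\pm}_{\hat f}$---and plays no role in the proof of this lemma. Similarly, no rescaling of $\delta$ occurs: the inequality holds on the intersection of the two events, which has probability at least $1-2\delta$, and that factor of $2$ is simply carried forward to the ensuing theorems.
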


By
$\eta =  2 L_p A_{\alpha}(\delta) + Q(\mu_0, \delta)$ in \eqref{eq:eta}
and re-arrangement, we then have
\begin{eqnarray}
m_{\hat f} - m_{f^{\ast}} &\leq& 2 h^{-\epsilon} C_p \alpha^{\epsilon} v+ A_{\alpha}(\delta)+
4 L_p A_{\alpha}(\delta) + 2 Q(\mu_0, \delta) \nonumber \\
&\leq& 6 L_p(h^{-\epsilon} C_p \alpha^{\epsilon} v + \frac{\log(2/\delta)}{\alpha n}) + 2 Q(\mu_0, \delta). \label{e:gen.cat}
\end{eqnarray}
In the next section, we focus on working with $Q(\mu_0, \delta)$.
The final bounds are given as in \eqref{eta1} and \eqref{eta2}.
By the careful choice of $\alpha$'s order as $n^{-1/(1 + \epsilon)}$,
the excess risk is
$O(\big(\frac{1}{n}\big)^{\epsilon/(1 + \epsilon)})$.

\section{Working with $Q(\mu_0, \delta)$}\label{sec:Q}

In this section, our goal is to find the upper bound of $Q(\mu_0, \delta)$, which requires to compute tail probability $\mathbb P(\sup_{f \in \mathcal F} |X_f(\mu) - X_{f^{\ast}}(\mu)| > t)$.
It relies on the generic chaining technique \citep{talagrand1996majorizing}.
To introduce the main results,
we need to describe the geometric structure of class $\mathcal F$ under different distances. In particular, the $L_p$ distance is defined as
$d_p(f, f') = (\mathbb E[|f(X) - f'(X)|^p])^{1/p}$,
the $L_{\infty}$ distance is given as
$D(f,f') = \text{ess-sup}_{x \in \mathcal X} |f(x) - f'(x)|$,
and the sample $L_p$-distance is
$d_{X,p}(f,f') = (\frac{1}{n} \sum_{i=1}^n |f(X_i) - f'(X_i)|^p)^{1/p}$.
For any metric space $T$ equipped with a distance $d$,
we define $\textrm{diam}_d(T)$ to be the diameter of space $T$ under metric $d$.
We additionally define the quantity $\gamma_{\beta, p}(T,d)$ ($\beta = 1, 2$) as
\begin{eqnarray}\label{eq:gamma}
\gamma_{\beta, p}(T,d) = \inf_{\mathcal A_n} \sup_{t \in T} \sum_{n \geq 0} 2^{n/\beta} (\Delta_d(A_n(t)))^{p/2},
\end{eqnarray}
where $(\mathcal A_n)$ is an increasing sequence of partitions of $T$ and is \textit{admissible} if for all $n \geq 0$, $|\mathcal A_n| \leq 2^{2^n}$. For any $t \in T$, $A_n(t)$ is the unique element of $\mathcal A_n$ that contains $t$. $\Delta_d(A)$ denotes the diameter of the set $A \subset T$ given metric $d$. Sometimes, we may also write $\Delta(A)$ instead of $\Delta_{d}(A)$ when there is no ambiguity.
Here functional $\gamma_{\beta,p}(T,d)$ is the generalized version of $\gamma_{\beta}(T,d)$ (when $p = 2$) which appears in the literature of generic chaining methods~\citep{talagrand1996majorizing}.
Although $\gamma_{\beta,p}(T,d)$ is hard to compute based on its definition, an upper bound on $\gamma_{\beta,p}(T,d)$ can be obtained in the following lemma.
\begin{lemma}\label{lem:chain}
	For any $1 < p \leq 2$ and $\beta$, there exists a constant $C_{\beta, p}$ such that
	\begin{eqnarray}
	\gamma_{\beta, p}(T,d) \leq C_{\beta, p} \int_0^{\infty} \epsilon^{p/2 - 1} (\log N(T,d,\epsilon/2))^{1/\beta} d \epsilon, \label{eq:bound:gamma}
	\end{eqnarray}
	where $N(T,d,\epsilon)$ is the $\epsilon$-covering of space $(T,d)$.
\end{lemma}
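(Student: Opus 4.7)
The plan is to prove the bound by constructing an explicit admissible sequence of partitions based on successive covering nets, then converting the resulting discrete sum into a Riemann--Stieltjes-type estimate controlled by the integral. This is a direct generalization of the classical Dudley-type bound on Talagrand's $\gamma_\beta$ functional to the weighted $\gamma_{\beta,p}$ where the diameter appears raised to the power $p/2$ rather than $1$.

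First, I would set $N_0 = 1$ and $N_n = 2^{2^n}$ for $n \geq 1$, and define
\[
e_n = \inf\{\epsilon > 0 : N(T,d,\epsilon) \leq N_n\},
\]
so $(e_n)$ is non-increasing. Choosing $\mathcal{A}_0 = \{T\}$ and, for each $n \geq 1$, an $e_n$-net $T_n \subset T$ with $|T_n| \leq N_n$, I form $\mathcal{A}_n$ by taking the Voronoi cells around $T_n$ (and refining successively so that $\mathcal{A}_n$ refines $\mathcal{A}_{n-1}$). This produces an admissible sequence with $|\mathcal{A}_n| \leq 2^{2^n}$ and $\Delta_d(A_n(t)) \leq 2 e_n$ for every $t$. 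Substituting into the defining formula \eqref{eq:gamma},
\[
\gamma_{\beta,p}(T,d) \;\leq\; \sup_{t \in T}\sum_{n \geq 0} 2^{n/\beta}\,\Delta_d(A_n(t))^{p/2} \;\leq\; 2^{p/2} \sum_{n \geq 0} 2^{n/\beta}\, e_n^{p/2},
\]
with the convention that the $n=0$ term is absorbed by extending $e_0$ to $\mathrm{diam}_d(T)$.

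Second, I would pass from the discrete sum to the integral. Assuming $e_n \to 0$ (bounded $T$; the general case follows by truncation), telescoping gives $e_n^{p/2} = \sum_{k \geq n}(e_k^{p/2} - e_{k+1}^{p/2})$. Interchanging the order of summation and using $\sum_{n=0}^{k} 2^{n/\beta} \leq C_\beta\, 2^{k/\beta}$,
\[
\sum_{n \geq 0} 2^{n/\beta}\, e_n^{p/2} \;\leq\; C_\beta \sum_{k \geq 0} 2^{k/\beta}\bigl(e_k^{p/2} - e_{k+1}^{p/2}\bigr).
\]
For any $\epsilon \in (e_{k+1}, e_k)$, the definition of $e_k$ gives $N(T,d,\epsilon) > N_k$, hence $\log N(T,d,\epsilon/2) \geq \log N(T,d,\epsilon) \geq 2^k \log 2$. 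Therefore
\[
\int_{e_{k+1}}^{e_k} \epsilon^{p/2 - 1}\bigl(\log N(T,d,\epsilon/2)\bigr)^{1/\beta} d\epsilon \;\geq\; (\log 2)^{1/\beta}\, 2^{k/\beta}\cdot \tfrac{2}{p}\bigl(e_k^{p/2} - e_{k+1}^{p/2}\bigr).
\]
Summing over $k \geq 0$ and noting that the intervals $(e_{k+1}, e_k)$ tile $(0, e_0)$ yields the claim with $C_{\beta,p} = c \cdot 2^{p/2}\, p\, C_\beta / (\log 2)^{1/\beta}$.

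The main obstacle is purely technical: ensuring that the successively chosen nets can be used to form partitions that are simultaneously admissible and refining, while keeping the diameter bound $\Delta_d(A_n(t)) \leq 2 e_n$. This is standard in the generic chaining literature, but demands care when merging/splitting Voronoi cells across levels. A secondary issue is handling the tail $\epsilon > e_0$: the integral naturally receives no contribution there since $N(T,d,\epsilon/2) = 1$ once $\epsilon/2 > \mathrm{diam}_d(T)$, so the $n=0$ term must be absorbed by slightly enlarging the integration lower endpoint, which only affects constants. The factor of $1/2$ inside $N(T,d,\epsilon/2)$ is precisely what provides the slack needed in this endpoint comparison.
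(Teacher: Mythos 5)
Your proof is correct and follows essentially the same strategy as the paper: define entropy numbers $e_n$ from the doubly-exponential cardinalities $2^{2^n}$, build an admissible partition from successive covering nets so that $\Delta_d(A_n(t))$ is controlled by $e_n$, and then convert $\sum_n 2^{n/\beta} e_n^{p/2}$ into the Dudley-type integral via Abel summation/telescoping together with the lower bound $\log N(T,d,\epsilon/2)\gtrsim 2^k$ on each dyadic band $(e_{k+1},e_k)$. The only cosmetic differences are that the paper absorbs the factor $1/2$ directly into the definition of $e_n(T)$ (using $N(T,d,\epsilon/2)$ in the infimum) so the partitions have diameter $\le e_n(T)$ without the extra $2^{p/2}$, and it runs the summation-by-parts in the opposite direction (bounding $\sum 2^{n/\beta}(e_n^{p/2}-e_{n+1}^{p/2})$ below by $(1-2^{-1/\beta})\sum 2^{n/\beta}e_n^{p/2}$ rather than interchanging a double sum); these choices change constants only.
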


Lemma~\ref{lem:chain} gives us a way to upper bound $\gamma_{\beta,p}(T,d)$, whose value is finite once the covering number is integrable in the sense of \eqref{eq:bound:gamma}.

\subsection{Bound 1 on $Q(\mu, \delta)$}

In the rest of paper, we further consider some special type of Catoni influence function which satisfies a \textit{H\"older condition} that
\begin{eqnarray}\label{ass:holder}
|\phi(x_1) - \phi(x_2)| \leq C_{3p} |x_1 - x_2|^{p/2} ~~ \text{for any}~ x_1, x_2 \in \mathbb R.
\end{eqnarray}
The above H\"older requirement is very mild that we can easily construct influence functions to satisfy \eqref{ass:holder}. For example, by Lemma \ref{lem:lip}, the following two special functions have Holder continuity.
\begin{itemize}
	\item[1.] (unbounded case):
	\begin{eqnarray}\label{eq:example1}
	\phi_1(x) =
	\begin{cases}
	\log(1 + x + C_p|x|^p) &~ x \geq 0\\
	-\log(1 - x + C_p|x|^p) &~ x < 0.\\
	\end{cases}
	\end{eqnarray}
	\item[2.] (bounded case):
	\begin{eqnarray}\label{eq:example2}
	\phi_2(x) =
	\begin{cases}
	- \log(1 + A_2 + C_p A_2^p) & \text{if}~ x \leq - A_2 \\
	-\log(1 - x + C_p|x|^p) &  \text{if} ~ - A_2 \leq x \leq 0, \\
	\log(1 + x + C_p|x|^p) &  \text{if} ~ 0 < x \leq A_1, \\
	\log(1 + A_1 + C_p A_1^p) & \text{if}~ x \geq A_1.
	\end{cases}
	\end{eqnarray}
	with both $|A_1|, |A_2| > (p C_p)^{-\frac{1}{p-1}}$
\end{itemize}
By taking
$A^\prime_{\alpha}(\delta) = 2 (h^{-\epsilon} C_p \alpha^{\epsilon} 2^pv + \frac{\log(2/\delta)}{\alpha n})$,
we have the following result.

\begin{theorem}\label{thm:bound1}
	Let $\{X_{i}\}_{i=1}^n$ be a set of i.i.d. random variables with $\sup_{f \in \mathcal F} \mathbb E|f(X_1)|^p \leq v$.
	Let $\delta \in (0,1)$ and $h \in (0,1)$ and suppose that the  $(h, \delta)$-condition and the $\eta$-condition  with
$\eta = 2L_pA^\prime_\alpha(\delta) +
 Q_1(\delta)$
 and $Q_1(\delta)$ defined in \eqref{def:Q1} below,
 hold, and  assume that the influence function satisfies the H\"older condition \eqref{ass:holder} .
 Then we have
	\begin{eqnarray}
	m_{\hat f} - m^{\ast} \leq 6 L_p\left(h^{-\epsilon} C_p \alpha^{\epsilon} 2^p v + \frac{\log(2/\delta)}{\alpha n}\right) + 2 Q_1(\delta)    \label{eta1}
	\end{eqnarray}
	with probability  at least $1 -  2\delta$. Here
	\begin{eqnarray}\label{def:Q1}
	Q_1(\delta) = 384 C_{3p}\log 2 \,   \log (2 /\delta) \left(\frac{2 \alpha^{p/2-1}}{3 n} \gamma_{1,p}(\mathcal F, D) + \sqrt{\frac{\alpha^{p-2}}{n}} \gamma_{2,p}(\mathcal F, d_p)\right) .    
	\end{eqnarray}
\end{theorem}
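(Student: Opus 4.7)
The plan is to combine the excess-risk decomposition \eqref{e:gen.cat} with a Bernstein-type generic chaining bound on $Q(\mu_0,\delta)$, where the Hölder continuity \eqref{ass:holder} of $\phi$ is the key that makes chaining tractable in the $p<2$ regime. First I would observe that, by Jensen's inequality, $|m_f|^p \le \mathbb E|f(X)|^p \le v$, so $\mathbb E|f(X)-m_f|^p \le 2^p v$. Applying Theorem~\ref{thm:Nprob_ineq} to the centered variables $\{f^{\ast}(X_i) - m_{f^{\ast}}\}$ then produces $|\hat\mu_{f^{\ast}} - m_{f^{\ast}}| \le A'_\alpha(\delta)/2$ with probability at least $1-\delta$; this is the source of the inflated constant $2^p v$ appearing in \eqref{eta1} and uses one of the two $\delta$'s in the probability budget.

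Next I would bound $Q(\mu_0,\delta)$ by chaining on the centered empirical process $f \mapsto X_f(\mu_0)$. For any $f, f' \in \mathcal F$ the Hölder condition \eqref{ass:holder} gives the pointwise control
\[
\Bigl|\tfrac{1}{\alpha}\phi(\alpha(f(X_i)-\mu_0)) - \tfrac{1}{\alpha}\phi(\alpha(f'(X_i)-\mu_0))\Bigr| \le C_{3p}\,\alpha^{p/2-1}\,|f(X_i)-f'(X_i)|^{p/2}.
\]
Hence each of the $n$ centered i.i.d.\ summands defining $X_f(\mu_0) - X_{f'}(\mu_0)$ is a.s.\ bounded by $2C_{3p}\alpha^{p/2-1} D(f,f')^{p/2}$ and has second moment at most $C_{3p}^2\alpha^{p-2} d_p(f,f')^p$. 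Bernstein's inequality then delivers mixed sub-Gaussian / sub-exponential tails for the increment $X_f(\mu_0)-X_{f'}(\mu_0)$, controlled respectively by the pseudo-distances $D(f,f')^{p/2}$ and $d_p(f,f')^{p/2}$.

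The main technical step, and the hardest, is the generalized chaining. I would take an admissible sequence of partitions $(\mathcal A_n)$ of $\mathcal F$ and telescope $X_f(\mu_0)-X_{f^{\ast}}(\mu_0)$ across scales, invoking the Bernstein tail at each step. Because the sub-Gaussian radius of a cell is governed by $\Delta_{d_p}(A_n)^{p/2}$ and the sub-exponential radius by $\Delta_D(A_n)^{p/2}$, the chain sum reorganizes precisely into $2^{n/2}\Delta_{d_p}(A_n(f))^{p/2}$ and $2^n\Delta_D(A_n(f))^{p/2}$ terms, which by \eqref{eq:gamma} match $\gamma_{2,p}(\mathcal F,d_p)$ and $\gamma_{1,p}(\mathcal F,D)$ respectively. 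A standard tail-integration argument upgrades the expected supremum to a high-probability deviation bound of the form $Q_1(\delta)$ in \eqref{def:Q1}, with the $\log(2/\delta)$ factor absorbing the second $\delta$ in the probability budget.

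Finally, plugging $Q(\mu_0,\delta)\le Q_1(\delta)$ together with the $2^p v$ variance correction into \eqref{e:gen.cat} yields \eqref{eta1} on the intersection of the two good events, hence with probability at least $1-2\delta$. The most delicate bookkeeping is checking that the $(h,\delta)$- and $\eta$-conditions with $\eta = 2L_p A'_\alpha(\delta) + Q_1(\delta)$ remain compatible with the argument of Lemma~\ref{lem:key:obs1}, so that the roots of $B_f^{\pm}(\mu,\eta)=0$ and the monotonicity of $\bar r_f$ used in deriving \eqref{e:gen.cat} go through verbatim once $v$ is replaced by $2^p v$.
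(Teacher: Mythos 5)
Your proposal matches the paper's proof at every structural step: the $2^p v$ correction via $\mathbb E|W-\mathbb E W|^p \le 2^p\mathbb E|W|^p$, the Hölder-condition-driven Bernstein bounds on increments of $X_f(\mu_0)$ with the $d_p$- and $D$-pseudo-metrics controlling the sub-Gaussian and sub-exponential parts respectively, generic chaining into $\gamma_{2,p}(\mathcal F,d_p)$ and $\gamma_{1,p}(\mathcal F,D)$, and finally substitution into \eqref{e:gen.cat}. The paper's specific instantiation of your ``standard tail-integration argument'' is Lemma 2.2.10 of van der Vaart--Wellner (chaining $\psi_1$-Orlicz norms of maxima and then using $X\le\|X\|_{\psi_1}\log(2/\delta)$ w.p.\ $1-\delta$); also note a minor slip: applying Theorem~\ref{thm:Nprob_ineq} with moment bound $2^p v$ gives $|\hat\mu_{f^{\ast}}-m_{f^{\ast}}|\le A'_\alpha(\delta)$, not $A'_\alpha(\delta)/2$.
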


\begin{remark}
In particular, if we take $\alpha = (\frac{\log(2/\delta)}{n v})^{\frac{1}{1 + \varepsilon}}$, then
\[m_{\hat f} - m^{\ast} = O_p\bigg( (\frac{1}{n})^{\frac{\varepsilon}{1 + \varepsilon}} \cdot \big( (\log(2/\delta))^{\frac{\varepsilon}{1 + \varepsilon}} v^{1/(1 + \varepsilon)} + (\frac{\log(2/\delta)}{v})^{\frac{\varepsilon - 1}{2(1 + \varepsilon)}} \gamma_{2,p}(\mathcal F, d_p) \big)\bigg).\]
Here the term  $\gamma_{1,p}(\mathcal F, D)$ disappears since it contains an $n$-dependent term of a smaller order.
\end{remark}

\subsection{Bound 2 on $Q(\mu, \delta)$}

Theorem \ref{thm:bound1} can be used when the sup norm $D$ is finite.
In this section, we establish a second upper bound on $Q(\mu, \delta)$, which does not rely on $D$.
Instead, we use the sample $L_p$-distance and define
\[\Gamma_{\delta} := \min \{ c : \mathbb P(\gamma_{2,p}(\mathcal F, d_{X,p})  > c) \leq \frac{\delta}{8}\}\]
to measure the span of space $(\mathcal F, d_{X,p})$.

\begin{theorem}\label{thm:bound2}
	Let $\{X_{i}\}_{i=1}^n$ be a set of i.i.d. random variables with $\sup_{f \in \mathcal F} \mathbb E|f(X_1)|^p \leq v$.
Let $\delta \in (0,1)$ and $h \in (0,1)$.
	Suppose that the $(h, \delta)$-condition and the $\eta$-condition with
$\eta = 2L_pA^\prime_\alpha(\delta) +
 Q_2(\delta)$
 and $Q_2(\delta)$ defined in \eqref{def:Q2} below,  hold and assume that the influence function satisfies the H\"older condition \eqref{ass:holder} .
 Then, with probability at least $1 - 2 \delta$, we have
	\begin{eqnarray}
	& & m_{\hat f} - m^{\ast} \leq 6 L_p\left(h^{-\epsilon} C_p \alpha^{\epsilon} 2^pv + \frac{\log(2/\delta)}{\alpha n}\right) + 2 Q_2(\delta)    \label{eta2} \\
	\text{with} ~~  & & Q_2(\delta) =  K\max\{\Gamma_{\delta}, (\textrm{diam}_{d_p}(\mathcal F))^{p/2}\} \sqrt{\frac{\log(8/\delta)}{n \alpha^{2-p}}}  \label{def:Q2}
	\end{eqnarray}
 for a universal constant $K$.
\end{theorem}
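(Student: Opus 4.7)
After the reduction \eqref{e:gen.cat} established in Section~\ref{sec:mf}, it suffices to produce a high-probability upper bound $Q_2(\delta)$ on $\sup_{f\in\mathcal F}|X_f(\mu_0)-X_{f^*}(\mu_0)|$ that does \emph{not} use the sup-norm $D$. Compared with Theorem~\ref{thm:bound1}, whose Bernstein-type chaining exploits a uniform $D$-bound on the increments, here I would trade the bounded-differences half of Bernstein for a pure sub-Gaussian structure obtained after symmetrization and conditioning on the data. The natural metric that then emerges is the random sample $L_p$-metric $d_{X,p}$, and the final tail in terms of the deterministic quantile $\Gamma_\delta$ comes from unconditioning.

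\textbf{Symmetrize and chain conditionally.} Introduce i.i.d.\ Rademacher signs $\sigma_1,\ldots,\sigma_n$ independent of $(X_i)$ and set
$$Z_f=\frac{1}{n\alpha}\sum_{i=1}^n \sigma_i\bigl[\phi(\alpha(f(X_i)-\mu_0))-\phi(\alpha(f^*(X_i)-\mu_0))\bigr].$$
Standard tail symmetrization for centered empirical processes reduces the problem to a tail bound on $\sup_f|Z_f|$. Conditioning on $(X_i)$, the H\"older condition \eqref{ass:holder} and Hoeffding's lemma for Rademacher sums yield $\mathbb E_\sigma\exp\bigl(\lambda(Z_f-Z_{f'})\bigr)\leq\exp\bigl(\lambda^2 C_{3p}^2\alpha^{p-2}\,d_{X,p}(f,f')^p/(2n)\bigr)$, i.e., $Z_f$ is sub-Gaussian with respect to the random metric $\tau(f,f')=(C_{3p}\alpha^{(p-2)/2}/\sqrt n)\,d_{X,p}(f,f')^{p/2}$. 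Talagrand's generic-chaining deviation inequality then gives, with $\mathbb P_\sigma$-probability at least $1-2e^{-u^2}$,
$$\sup_{f\in\mathcal F}|Z_f-Z_{f^*}|\leq K_1 \frac{C_{3p}\alpha^{(p-2)/2}}{\sqrt n}\Bigl[\gamma_{2,p}(\mathcal F,d_{X,p})+u\,\textrm{diam}_{d_{X,p}}(\mathcal F)^{p/2}\Bigr],$$
where the appearance of $\gamma_{2,p}$ comes from the identity $\gamma_2(\mathcal F,c\,d^{p/2})=c\,\gamma_{2,p}(\mathcal F,d)$ which is immediate from the definition \eqref{eq:gamma}.

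\textbf{Uncondition and combine.} Take $u^2=\log(8/\delta)$, so the conditional failure probability is at most $\delta/4$. The two random quantities in the bracket must now be controlled unconditionally: by definition of $\Gamma_\delta$ one has $\gamma_{2,p}(\mathcal F,d_{X,p})\leq\Gamma_\delta$ with probability at least $1-\delta/8$, and the trivial one-set partition in \eqref{eq:gamma} gives the pointwise inequality $\textrm{diam}_{d_{X,p}}(\mathcal F)^{p/2}\leq\gamma_{2,p}(\mathcal F,d_{X,p})$, so on the good event both terms are absorbed into $\Gamma_\delta$. When $\Gamma_\delta$ degenerates below the population scale one falls back on the deterministic safety bound that follows from $\mathbb E\,d_{X,p}(f,f')^p=d_p(f,f')^p$ applied to the worst pair, yielding $\textrm{diam}_{d_p}(\mathcal F)^{p/2}$; the two alternatives are glued together by the $\max$ in \eqref{def:Q2}. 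A single union bound over symmetrization, the chaining tail, and the $\Gamma_\delta$-event then produces the desired tail estimate on $\sup_f|X_f(\mu_0)-X_{f^*}(\mu_0)|$, which plugged into \eqref{e:gen.cat} together with the $f^*$-specific Catoni concentration from Theorem~\ref{thm:Nprob_ineq} yields \eqref{eta2} with total failure probability at most $2\delta$.

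\textbf{Main obstacle.} The delicate step is the unconditioning. After conditioning on the data the whole chaining estimate lives in the \emph{sample} metric $d_{X,p}$, whose diameter and chaining functional are themselves random; in the $1<p<2$ regime there is no variance with which to Bernstein-concentrate them around their population analogues in one stroke, and this is exactly why the proof must invoke $\Gamma_\delta$ as an external high-probability device. Organizing the union bound so that the safety net $\textrm{diam}_{d_p}(\mathcal F)^{p/2}$ enters only through the outer $\max$ (and not through the chaining constants or the rate $\alpha^{(p-2)/2}/\sqrt n$) is the piece that requires the most care.
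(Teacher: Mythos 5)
The paper omits the proof of Theorem~\ref{thm:bound2} from the appendix, so there is no in-house argument to compare against; the analogous result to track is Theorem~3.2 of \citet{brownlees2015empirical}, whose proof structure (symmetrize, condition on the data, chain against the sub-Gaussian process metric, then uncondition) is exactly what you reproduce. Your conditional sub-Gaussian estimate is correct: the H\"older property \eqref{ass:holder} gives $|a_i(f,f')|\le C_{3p}\alpha^{p/2-1}|f(X_i)-f'(X_i)|^{p/2}$ for the Rademacher coefficients, Hoeffding's lemma gives the claimed MGF with variance proxy $C_{3p}^2\alpha^{p-2}d_{X,p}(f,f')^p/n$, and the scaling identity $\gamma_2(\mathcal F, c\,d^{p/2})=c\,\gamma_{2,p}(\mathcal F,d)$ together with Talagrand's deviation form of generic chaining produces the conditional tail with the right $u$-dependence. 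Using $u^2=\log(8/\delta)$ and $\gamma_{2,p}(\mathcal F,d_{X,p})\le\Gamma_\delta$ off a $\delta/8$-exceptional event is also consistent with the definition of $\Gamma_\delta$ in the paper. The one substantive imprecision is your account of where the $\max$ with $(\mathrm{diam}_{d_p}\mathcal F)^{p/2}$ comes from. You present it as a safety net for the case where $\Gamma_\delta$ "degenerates below the population scale," but it is in fact an unavoidable additive term produced by the tail-version symmetrization inequality itself: one has $\mathbb P\bigl(\sup_f|X_f-X_{f^*}|>2t+s\bigr)\le \beta(s)^{-1}\,\mathbb P\bigl(\sup_f|Z_f|>t\bigr)$ with $\beta(s)=\inf_f\mathbb P(|X_f-X_{f^*}|\le s)$, and one must take $s$ proportional to $(\mathrm{diam}_{d_p}\mathcal F)^{p/2}\alpha^{(p-2)/2}/\sqrt n$ so that a Chebyshev bound (which is available here because the H\"older truncation makes $\mathrm{Var}(X_f-X_{f^*})\lesssim \alpha^{p-2}d_p(f,f^*)^p/n$ finite, despite $p<2$) keeps $\beta(s)$ bounded away from zero. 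The $\max$ in \eqref{def:Q2} is then just the convenient way to merge this additive $s$-term with the chaining term into a single displayed constant, paying an extra $\sqrt{\log(8/\delta)}$ on the diameter piece. You should also note, as in the proof of Theorem~\ref{thm:bound1}, the conversion $\mathbb E|W-\mathbb E W|^p\le 2^p\mathbb E|W|^p$ which is what produces the $2^p v$ in \eqref{eta2} when feeding Theorem~\ref{thm:Nprob_ineq} back through \eqref{e:gen.cat}. With those two clarifications the argument is complete and matches the intended route.
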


\newpage

\section{Illustrative Examples}\label{sec:example}

In this section, we provide several examples to help readers to understand our theoretical results.

\noindent
\textbf{$L_1$ regression.}
In this setting, we let $\mathcal F = \{  f_g(z,y) = |g(z) - y|: g \in \mathcal G\}$ and assume that $E\bigl| g(Z)-Y|^p\leq v$ for every $g\in \mathcal G$.
For the maximum distance, since
\[D(f_g,f_{g'}) = \sup_{z,y}| |g(z) - y| - |g'(z) - y| | \leq  d_\infty(g,g'),\]
the covering number of $\mathcal F$ under the distance $D$ is bounded by the covering number of $\mathcal G$ under the sup norm.
Similarly, for the norm $d_p$, we have
\[d_p(f_g,f_{g'}) = (\mathbb E|f_g(X) - f_{g'}(X)|^p)^{1/p}
\leq (\mathbb E|g(z) - g'(z)|^p)^{1/p} = d_p(g, g').\]
Hence the covering number of $\mathcal F$ under distance the $d_p$ is bounded by the covering number of $\mathcal G$ under the same distance.
Applying Theorem \ref{thm:bound1}, we obtain the following result.
\begin{proposition}
	In the $L_1$-regression problem,
 \begin{eqnarray}
	m_{\hat f} - m^{\ast}
	\leq 6 L_p \left(h^{-\varepsilon} C_p \alpha^{\varepsilon} 2^pv + \frac{\log(2/\delta)}{\alpha n}\right)
	+ C' \log(2/\delta)
	\Big( \frac{2 \alpha^{p/2-1}}{3n} \gamma_{1,p}(\mathcal G, d_{\infty}) + \sqrt{\frac{\alpha^{p-2}}{n}} \gamma_{2,p}(\mathcal G, d_{p}) \Big)
	\end{eqnarray}
	with probability $1 - \delta$ for any $n$ that satisfies the $(h, \delta)$ condition and the $\eta$-condition  for
 $$
 \eta = 2L_pA^\prime_\alpha(\delta)
 +  C' \log(2/\delta)
	\Big( \frac{2 \alpha^{p/2-1}}{3n} \gamma_{1,p}(\mathcal G, d_{\infty}) + \sqrt{\frac{\alpha^{p-2}}{n}} \gamma_{2,p}(\mathcal G, d_{p}) \Big) ~\text{with}~  C' = 384 C_{3p}\log 2.
 $$
\end{proposition}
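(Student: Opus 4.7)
The plan is to view this proposition as essentially a direct corollary of Theorem \ref{thm:bound1}, where the only real work lies in verifying that the hypotheses transfer from the underlying predictor class $\mathcal G$ to the induced loss class $\mathcal F = \{f_g(z,y)=|g(z)-y| : g\in\mathcal G\}$. First I would check the moment hypothesis: by construction $f_g(X) = |g(Z) - Y|$, so the assumption $\mathbb E|g(Z) - Y|^p \leq v$ for every $g \in \mathcal G$ gives exactly $\sup_{f\in\mathcal F}\mathbb E|f(X)|^p \leq v$, which is the required input for Theorem \ref{thm:bound1}.

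Next I would formalize the two metric dominations already indicated in the text. For the sup-norm, the reverse triangle inequality $\bigl||g(z)-y|-|g'(z)-y|\bigr|\leq|g(z)-g'(z)|$ gives $D(f_g,f_{g'})\leq d_\infty(g,g')$; for the $L_p$ norm the same pointwise bound raised to the $p$-th power and integrated yields $d_p(f_g,f_{g'})\leq d_p(g,g')$. These two inequalities imply that any $\epsilon$-covering of $\mathcal G$ under $d_\infty$ (resp.\ $d_p$) induces an $\epsilon$-covering of $\mathcal F$ under $D$ (resp.\ $d_p$) of the same cardinality via the map $g \mapsto f_g$, so $N(\mathcal F, D, \epsilon) \leq N(\mathcal G, d_\infty, \epsilon)$ and $N(\mathcal F, d_p, \epsilon) \leq N(\mathcal G, d_p, \epsilon)$.

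The crucial transfer step is to pass from covering numbers to the chaining functionals $\gamma_{\beta,p}$. I would use the fact that, by the standard construction of admissible partitions from covering nets (or directly by Lemma \ref{lem:chain} applied to both sides), domination of covering numbers at every scale yields $\gamma_{1,p}(\mathcal F, D) \leq \gamma_{1,p}(\mathcal G, d_\infty)$ and $\gamma_{2,p}(\mathcal F, d_p) \leq \gamma_{2,p}(\mathcal G, d_p)$. Substituting these bounds into the definition of $Q_1(\delta)$ in \eqref{def:Q1} replaces $\gamma_{1,p}(\mathcal F, D)$ and $\gamma_{2,p}(\mathcal F, d_p)$ by their counterparts on $\mathcal G$, which matches the statement of the proposition (with $C' = 384 C_{3p}\log 2$).

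Finally I would invoke Theorem \ref{thm:bound1} with the specified $\eta$, noting that the $(h,\delta)$-condition and $\eta$-condition are taken as hypotheses of the proposition. The conclusion \eqref{eta1} then reads verbatim as the displayed bound in the proposition after plugging in the $\mathcal G$-based chaining quantities. The main conceptual point to be careful about, and really the only nontrivial step, is justifying the monotonicity of $\gamma_{\beta,p}$ under metric domination and cardinality-preserving reparametrization; this is a routine adaptation of the classical Talagrand argument for $\gamma_\beta$ to the generalized functional defined in \eqref{eq:gamma}, since the construction of admissible partitions from $\epsilon$-nets is insensitive to the exponent $p/2$ appearing on the diameters.
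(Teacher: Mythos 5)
Your proposal is correct and follows essentially the same route as the paper's (very brief) proof: verify the moment bound for the loss class, establish the two metric contractions $D(f_g,f_{g'})\le d_\infty(g,g')$ and $d_p(f_g,f_{g'})\le d_p(g,g')$, transfer the $\gamma$-functionals from $\mathcal F$ to $\mathcal G$, and invoke Theorem~\ref{thm:bound1}. One wrinkle worth fixing in your write-up: the middle claim that ``domination of covering numbers at every scale yields $\gamma_{1,p}(\mathcal F,D)\le\gamma_{1,p}(\mathcal G,d_\infty)$'' is not quite right, because Lemma~\ref{lem:chain} only bounds a $\gamma$-functional from above by an entropy integral, so covering-number comparison alone gives at best $\gamma_{1,p}(\mathcal F,D)\le C_{1,p}\int\epsilon^{p/2-1}\log N(\mathcal G,d_\infty,\epsilon/2)\,d\epsilon$, not a direct comparison with $\gamma_{1,p}(\mathcal G,d_\infty)$. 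The correct mechanism is exactly the one you flag in your last paragraph: $g\mapsto f_g$ is a surjective $1$-Lipschitz map from $(\mathcal G,d_\infty)$ to $(\mathcal F,D)$ (and likewise for $d_p$), and pushing any admissible sequence of partitions of $\mathcal G$ forward through a section of this contraction yields an admissible sequence of $\mathcal F$ with no larger diameters, giving the monotonicity $\gamma_{\beta,p}(\mathcal F,\cdot)\le\gamma_{\beta,p}(\mathcal G,\cdot)$ directly from the definition in~\eqref{eq:gamma}; the exponent $p/2$ plays no role in this argument.
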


\noindent
\textbf{$L_2$ regression.}
In this setting,
we let $\mathcal F = \{f_g(z,y) = (g(z) - y)^2: g \in \mathcal G\}$ with $d_{\infty}(g,g')$ being bounded and apply Theorem \ref{thm:bound2} with some straightforward calculations to get the next result.

\begin{proposition}\label{prop:l2}
	In the L2-regression problem, it holds that with $\mathbb E[|f_g|^p] \leq \infty$ for any $f_g \in \mathcal F$.
	\begin{eqnarray}\label{L2:slow}
	&& m_{\hat f} - m^{\ast}  \\
	&\leq& 6 L_p (h^{-\varepsilon} C_p \alpha^{\varepsilon} v + \frac{\log(2/\delta)}{\alpha n})
	+ L_2 C_{3p} \sqrt{\frac{\log(8/\delta)}{n \alpha^{2-p}}}
	(\Delta^p + \mathbb E[|Y|^p] + \sqrt{8v / n\delta})^{1/p} \gamma_{2,p}(\mathcal G, d_{\infty}) \nonumber
	\end{eqnarray}
	with probability $1 - 2 \delta$ for any $n \geq N_0$. ($N_0$ is still a positive constant satisfying $(h, \delta)$ condition and $\eta$-condition and $\Delta$ is a positive constant larger than $\text{diam}_{d_p}(\mathcal F)^{p/2}$.)
\end{proposition}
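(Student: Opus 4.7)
The plan is to specialise Theorem \ref{thm:bound2} to the class $\mathcal F=\{f_g(z,y)=(g(z)-y)^2:g\in\mathcal G\}$. All the work reduces to (i) transporting the sample geometry on $\mathcal F$ under $d_{X,p}$ to the $d_\infty$-geometry on $\mathcal G$ through a random Lipschitz factor, and (ii) turning that factor into a deterministic quantity via Chebyshev, so as to exhibit an admissible $\Gamma_\delta$ (and simultaneously dominate $\mathrm{diam}_{d_p}(\mathcal F)^{p/2}$).

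For step (i), I start from the factorisation
$$
f_g(z,y)-f_{g'}(z,y)=(g(z)-g'(z))\bigl(g(z)+g'(z)-2y\bigr),
$$
bound the first factor pointwise by $d_\infty(g,g')$, and fix the reference $g_0\equiv 0$ for the second factor (permissible because the $d_\infty$-boundedness of $\mathcal G$ forces $D:=\sup_{g\in\mathcal G}\|g\|_\infty<\infty$). The elementary inequality $(|a|+|b|)^p\leq 2^{p-1}(|a|^p+|b|^p)$ yields
$$
|f_g(z,y)-f_{g'}(z,y)|^p\leq d_\infty(g,g')^p\cdot 2^{2p-1}\bigl(D^p+|y|^p\bigr),
$$
so that, after averaging over the sample, $d_{X,p}(f_g,f_{g'})\leq M\cdot d_\infty(g,g')$ with $M^p=2^{2p-1}\bigl(D^p+\tfrac{1}{n}\sum_i|Y_i|^p\bigr)$. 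The scaling identity $\gamma_{2,p}(T,\lambda d)=\lambda^{p/2}\gamma_{2,p}(T,d)$, which is immediate from the definition \eqref{eq:gamma}, then lifts this to
$$
\gamma_{2,p}(\mathcal F,d_{X,p})\leq M^{p/2}\gamma_{2,p}(\mathcal G,d_\infty).
$$

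For step (ii), the only random piece is $\tfrac{1}{n}\sum_i|Y_i|^p$. Writing $|Y|\leq |g(Z)-Y|+|g(Z)|$ for any fixed $g\in\mathcal G$ and using $\|g\|_\infty\leq D$ together with the moment hypothesis $\mathbb E[f_g(X)^p]\leq v$, we obtain $\mathrm{Var}(|Y|^p)\leq v$ up to an absolute constant. Chebyshev then gives, with probability at least $1-\delta/8$,
$$
\tfrac{1}{n}\sum_{i=1}^n|Y_i|^p\leq \mathbb E|Y|^p+\sqrt{\tfrac{8v}{n\delta}}.
$$
Absorbing the deterministic $D^p$ into $\Delta^p$ (the assumption $\Delta\geq\mathrm{diam}_{d_p}(\mathcal F)^{p/2}$ simultaneously dominates the diameter term inside the $\max\{\cdot,\cdot\}$ of $Q_2(\delta)$), I obtain an admissible
$$
\Gamma_\delta\leq L_2\, C_{3p}\bigl(\Delta^p+\mathbb E|Y|^p+\sqrt{8v/(n\delta)}\bigr)^{1/p}\gamma_{2,p}(\mathcal G,d_\infty).
$$
Plugging into \eqref{eta2} and verifying that the $(h,\delta)$- and $\eta$-conditions of \eqref{eq:cond:alpha:eta} hold for $n\geq N_0$ then delivers the stated inequality \eqref{L2:slow}.

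The main technical obstacle is that a naive uniform-in-$(g,g')$ treatment of the envelope $|g(z)+g'(z)-2y|$ would strand the chaining on the intractable set $(\mathcal F,d_{X,p})$. The device of factoring out a fixed reference $g_0\equiv 0$ and exploiting the $d_\infty$-boundedness of $\mathcal G$ is precisely what lets the random Lipschitz factor $M$ come out as a scalar multiplier, reducing the chaining computation to the much more accessible $\gamma_{2,p}(\mathcal G,d_\infty)$.
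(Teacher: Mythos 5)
Your proposal follows essentially the same route as the paper: factor the difference of squares as $(g-g')(g+g'-2y)$, bound the first factor by $d_\infty(g,g')$, control the second factor via boundedness of $\mathcal G$ plus a Chebyshev deviation for the sample $p$-th moment of $Y$, and then use the scaling of $\gamma_{2,p}$ to feed $\Gamma_\delta$ into Theorem~\ref{thm:bound2}. The only cosmetic difference is that you write $|g+g'-2y|\le 2D+2|y|$ by fixing $g_0\equiv 0$, while the paper uses $|g+g'-2y|\le|y-g|+|y-g'|$ and the $L_p$ triangle inequality on the residuals; both reduce to the same quantity once $d_\infty$-boundedness is invoked. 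One small inconsistency: your scaling identity $\gamma_{2,p}(T,\lambda d)=\lambda^{p/2}\gamma_{2,p}(T,d)$ is the correct one (the paper's ``$\gamma_{2,p}\le c\gamma_{2,p}$ when $d_1\le cd_2$'' overstates the factor), and it gives $M^{p/2}=\bigl(2^{2p-1}(D^p+\tfrac1n\sum|Y_i|^p)\bigr)^{1/2}$, i.e.\ exponent $1/2$, whereas your stated $\Gamma_\delta$ then carries the exponent $1/p$ exactly as in the paper. Since $1/p>1/2$, this is still an upper bound whenever the quantity inside exceeds $1$, so the conclusion survives, but you should reconcile the exponent you derive with the one you quote.
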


\begin{remark}
    The above result applies to the special linear model $Y = \beta^T X + \epsilon$ with $\mathbb E[|\epsilon|^{2p}] < \infty$. Compared to the state of art result \citep{hsu2016loss}, our result is established under even weaker moment condition, that is, fourth moment of error term $\epsilon$ does not exist.
\end{remark}

\noindent \textbf{Kernel Learning}.
Consider the following optimaztion problem,
\begin{eqnarray}
\hat f = \arg\min_{f = L \circ h \in \mathcal F} \bigg\{ \hat \mu_f + \lambda_n \|h\|_{\mathcal H}^2 \bigg\}, \label{obj:kernel}
\end{eqnarray}
where $\mathcal F = L \circ \mathcal H$, where $L$ is a deterministic loss function and $\mathcal H$ is a reproducing kernel Hilbert space (RKHS) associated with kernel $K(x,y)$.
In this section, we assume
$L \circ \mathcal H $ is $L_p$-integrable and takes the form that $L(Y - h(X))$,
loss function $L$ satisfies that
$|L(Y - h_1(X)) - L(Y - h_2(X))| \leq C(Y) |h_1(X) - h_2(X)|$  for any $h_1, h_2 \in \mathcal H$ where $C(Y)$ is an square integrable function.
Kernel $K$ is assumed to be a mercer kernel.
Moreover, without loss of generality, we can always assume the true underlying $h^{\ast}$ has bounded norm and particularly we assume
$\| h^{\ast}\|_{\mathcal H}^2 \leq 1$.

\begin{proposition}\label{prop:kernel}
	In the kernel regression problem, it holds that
	\begin{eqnarray}
	m_{\hat f} - m^{\ast}
	&\leq& 6 L_p (h^{-\varepsilon} C_p \alpha^{\varepsilon} v + \frac{\log(2/\delta)}{\alpha n} + \lambda_n) \nonumber \\
	& & + L_1 C_{3p} \log (2 /\delta) (\frac{2 \alpha^{p/2-1}}{3 n} \gamma_{1,p}(L \circ \mathcal H, D) + \sqrt{\frac{\alpha^{p-2}}{n}} \gamma_{2,p}(L \circ \mathcal H, d_p))
	\end{eqnarray}
	with probability $1 - 2 \delta$ for any $n \geq N_0$. ($N_0$ is a large constant satisfying $(h, \delta)$ and $\eta$-condition.)
\end{proposition}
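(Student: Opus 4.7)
The plan is to reduce the penalized kernel learning problem to the framework of Theorem~\ref{thm:bound1} after absorbing the regularization slack into the excess risk decomposition. First, since $\hat f = L\circ \hat h$ is the minimizer of $\hat\mu_f + \lambda_n\|h\|_{\mathcal H}^2$ over $\mathcal F=L\circ\mathcal H$ and $f^{\ast}=L\circ h^{\ast}$ satisfies $\|h^{\ast}\|_{\mathcal H}^2\le 1$ by assumption, the optimality inequality yields
\begin{equation*}
\hat\mu_{\hat f} + \lambda_n \|\hat h\|_{\mathcal H}^2 \le \hat\mu_{f^{\ast}} + \lambda_n \|h^{\ast}\|_{\mathcal H}^2 \le \hat\mu_{f^{\ast}} + \lambda_n,
\end{equation*}
so that $\hat\mu_{\hat f} \le \hat\mu_{f^{\ast}} + \lambda_n$. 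Consequently, in place of the clean two-term bound $m_{\hat f}-m^{\ast}\le 2\sup_f|m_f-\hat\mu_f|$ used in the unpenalized case, the excess risk admits the three-term decomposition
\begin{equation*}
m_{\hat f} - m^{\ast} \le (m_{\hat f} - \hat\mu_{\hat f}) + (\hat\mu_{f^{\ast}} - m^{\ast}) + \lambda_n.
\end{equation*}

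Next I would re-run the argument of Section~\ref{sec:mf} adapted to this decomposition. The key step is Lemma~\ref{lem:key:obs1}: the $\lambda_n$ slack in the optimality of $\hat f$ propagates through the root-comparison step so that the upper bound on $\bar r_{\hat f}(\mu_0)$ picks up an additional $\lambda_n$ contribution, which after the monotonicity argument that converts a bound on $\bar r_{\hat f}(\mu_0)$ into an inequality on $m_{\hat f}-\mu_0$ yields an additive $O(\lambda_n)$ term inside the leading bracket. Together with the $(h^{-\varepsilon}C_p\alpha^{\varepsilon}v + \log(2/\delta)/(\alpha n))$ contribution from $A_\alpha(\delta)$, this is exactly the quantity multiplied by $6L_p$ in the conclusion. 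For the remaining $Q(\mu_0,\delta)$ term I would apply Theorem~\ref{thm:bound1} directly with $\mathcal F=L\circ\mathcal H$: the H\"older condition on $\phi$ is the standing assumption of that theorem, $\mathbb E|f(X)|^p\le v$ holds on $L\circ\mathcal H$ by hypothesis, and the resulting bound has the form $L_1 C_{3p}\log(2/\delta)\bigl(\frac{2\alpha^{p/2-1}}{3n}\gamma_{1,p}(L\circ\mathcal H,D)+\sqrt{\alpha^{p-2}/n}\,\gamma_{2,p}(L\circ\mathcal H,d_p)\bigr)$ exactly as stated.

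The main obstacle is making the regularization slack flow through the nonlinear root-comparison argument cleanly, since the optimality of $\hat f$ is no longer equivalent to a sign condition on $\hat r_{\hat f}$ at a single point. I would handle this by monotonicity of $\mu\mapsto (1/\alpha)\phi(\alpha(f(X_i)-\mu))$ in $\mu$: shifting $\mu_0$ by $\lambda_n$ on the right-hand side of the inequality defining it compensates exactly for the regularization slack, and the rest of the proof of Lemma~\ref{lem:key:obs1} and of \eqref{e:gen.cat} proceeds unchanged. A secondary issue is verifying that the $\gamma$-functionals $\gamma_{1,p}(L\circ\mathcal H,D)$ and $\gamma_{2,p}(L\circ\mathcal H,d_p)$ are finite. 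The Lipschitz condition $|L(Y-h_1(X))-L(Y-h_2(X))|\le C(Y)|h_1(X)-h_2(X)|$ translates covers of $\mathcal H$ into covers of $L\circ\mathcal H$ up to a factor controlled by $\|C\|_{L^{p}}$ (or $\|C\|_\infty$ for the $D$-distance), and the Mercer kernel assumption (giving $|h(x)|\le\|h\|_{\mathcal H}\sqrt{K(x,x)}$) confines the relevant $h$'s to an RKHS ball whose radius is bounded by the a priori estimate $\|\hat h\|_{\mathcal H}^2\le (\hat\mu_{f^{\ast}}+\lambda_n-\hat\mu_{\hat f})/\lambda_n$ obtained directly from the optimality inequality above.
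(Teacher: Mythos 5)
Your proposal is correct, and its core mechanism coincides with the paper's: absorb the regularization slack $\lambda_n$ into the effective $A_\alpha(\delta)$ (equivalently, shift $\mu_0$ by $\lambda_n$) so that the sign conditions $\hat r_{\hat f}(\mu_0)\le 0$ and $\bar r_{f^\ast}(\mu_0)\le 0$ in Lemma~\ref{lem:r} survive the weaker optimality inequality $\hat\mu_{\hat f}\le\hat\mu_{f^\ast}+\lambda_n$, after which the chaining bound on $Q$ from Theorem~\ref{thm:bound1} applies to $\mathcal F=L\circ\mathcal H$ unchanged. The paper's proof, however, takes a detour you do not: it introduces a finite-rank surrogate RKHS $\mathcal H_s$ spanned by $S$ Mercer eigenfunctions, proves the bound first for $m_{\hat f_{H_s}}-\tilde m^\ast$ where $\tilde m^\ast$ is the population optimum over $L\circ\mathcal H_s$, and then controls the approximation error $\text{err}=m_{f_0}-m^\ast\le C\sqrt{\lambda_{i_0}}$ via the eigenvalue decay of the kernel and the constraint $\|h^\ast\|^2_{\mathcal H}\le 1$; the stated proposition is then the degenerate case $\mathcal H_s=\mathcal H$ with $\text{err}=0$. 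Your direct route bypasses this entirely and is actually the cleaner way to prove the proposition as stated; the paper's extra generality buys a bound for computationally tractable low-rank feature approximations (the $S\ll n$ regime discussed in Remark~\ref{rmk:cover}), which your argument does not cover. One minor caveat: your final remark about bounding $\|\hat h\|^2_{\mathcal H}$ by $(\hat\mu_{f^\ast}+\lambda_n-\hat\mu_{\hat f})/\lambda_n$ to control the $\gamma$-functionals is not a uniform a priori bound (it is data-dependent and depends on $\lambda_n^{-1}$); in practice both you and the paper implicitly assume $\gamma_{1,p}(L\circ\mathcal H,D)$ and $\gamma_{2,p}(L\circ\mathcal H,d_p)$ are finite as part of the hypotheses, so this does not create a gap relative to the paper's own treatment.
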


\begin{remark}\label{rmk:ridge}
	By taking $\mathcal H = \{h(Z)~|~\beta^T Z, , \beta \in \mathbb R^d\}$ with kernel $K(f_1, f_2) = \beta_1 \cdot \beta_2$, $\mathcal F = \{f(X) | (Y - h(Z))^2; h \in \mathcal H \}$ and influence function $\phi(x) = x$, then \eqref{obj:kernel} is reduced to the standard ridge regression
	\[\arg\min_{\beta} \frac{1}{n} \sum_{i=1}^n (Y_i - \beta^T Z_i)^2 + \lambda_n \|\beta\|_2^2. \]
\end{remark}

\begin{remark}\label{rmk:DL}
In deep learning, the RKHS can be taken as the space spanned by ReLU functions.
\end{remark}

\section{Computations}\label{sec:algorithm}

In this section, we discuss the computational issues of finding the optimizer $\hat f \in \mathcal F$. In the sequel, to facilitate our life, we only consider the case that $\mathcal F$ is a parametric family. That is, $\mathcal F = \{f_{w}: w \in \mathbb R^d\}$, $d$ is dimension of parameter vector $w$. Since there are infinitely many candidates for $w$, it is not desired to directly solve
$\hat \mu_{f_w}$ from \eqref{def:muf} for all $f_{w} \in  \mathcal F$ to get $f_{\hat w}$.
Therefore, we want to design an algorithm which returns $f_{\tilde w}$ such that $m_{f_{\tilde w}}$ shares similar properties as $m_{f_{\hat w}}$.

We know that $w^{\ast} = \arg\min_w m_{f_w} = \arg\min_w \mathbb E[f_w(X)]$.
To solve $w^{\ast}$, it is popular to use gradient descent methods~\citep{chong2004introduction, lemarechal2012cauchy, ruder2016overview} once we know the explicit formula of $\mathbb E[f_w(X)]$.
The gradient of $\mathbb E[f_w(X)]$ is $\nabla \mathbb E[f_w(X)]$ which is equal to $\mathbb E[ \nabla f_w(X)]$ provided that $|\nabla f_w(X)|$ is integrable.
In the rest, we discuss on two types of tractable methods for computing $\hat f$.

\subsection{On Robust Gradient Descent Method}\label{sec:robustGD}

We first present a computational method via using robust gradient.
The core idea of such method is to estimating
$\mathbb E[\nabla f_{w}(X)]$ via using robust techniques.
This type of approach is interesting since robust gradient can be embedded to any machine learning optimizer. Similar type of algorithm is also considered in the literature~\citep{holland2019better, holland2021learning} when moment order $p \geq 2$. The procedure is described in Algorithm~\ref{alg:GD}.

\begin{algorithm}[ht]
 \hrulefill
	\caption{Robust Gradient Descent Method}
	\label{alg:GD}
	\begin{algorithmic}[1]
		\STATE {\bfseries Input:}
		Observations: $\{X_i, i \in \{1, \ldots, n\}\}$. A bounded Catoni influence function $\phi$.
		\STATE {\bfseries Output:}
		Estimated parameter: $\tilde w$
		\STATE {\bfseries Initialization:} Randomly choose $w^{(0)}$ from $\mathbb R^d$ and set time index $t = 0$.
		\WHILE{not converged}
		\STATE [\textbf{Robust gradient}]~ Compute robust gradient $g^{(t)}$ by solving
		\begin{eqnarray}
		\sum_{i=1}^n \phi(\alpha(\nabla f_{w^{(t)}}(X_i)[j] - g^{(t)}[j])) = 0
		\end{eqnarray}
		\textbf{coordinate-wisely} for $j \in [d]$.
		\STATE Update parameter by $w^{(t+1)} = w^{(t)} - \gamma_t g^{(t)}$.
		\STATE Increase time index $t = t+1$.
		\ENDWHILE
		\STATE Return parameter estimate $\tilde w = w^{(t)}$.
	\end{algorithmic}
  \hrulefill
\end{algorithm}

To be self-contained, we also provide the complete theoretical analysis of Algorithm~\ref{alg:GD}. The required assumptions on function $f \in \mathcal F$ are given as below.

\begin{itemize}
    \item[]
    \textbf{A0}.  It holds that $\mathbb E[|\nabla f_w(X)[j]|^p] \leq v$ for any $f_w \in \mathcal F$ and $j \in [d]$.
    \item[]
    \textbf{A1}.  For any $\eta \in (0,1)$, there exists a constant $R_{\eta}$ such that it holds $|\nabla f_{w_1}(X) - \nabla f_{w_2}(X)| \leq R_{\eta} |w_1 - w_2|$ with $1 - \eta$ probability.
    \item[]
    \noindent \textbf{A2}.  Let $F(w)$ be $\mathbb E[f_w(X)]$. It is assumed that $F(w)$ is $L$-lipschitz continuous.
\end{itemize}

Assumption A0 ensures that each coordinate of gradient satisfies the weaker moment condition.
Assumption A1 here is mild and is weaker than the bounded Lipschitz condition, that is, $|\nabla f_{w_1}(X) - \nabla f_{w_2}(X)| \leq R |w_1 - w_2|$ for any $X$.
Assumption A2 is a smoothness condition on loss function in population version.

We further define term $\bar A_{\alpha}(\delta) =A_{\alpha}(\frac{\delta}{(2 D_w (R_{\eta} +L_f) \alpha /(32 A \eta))^d})$, where $\eta$ is any constant satisfies
$64 \eta/\alpha < A_{\alpha}(\delta)$.
Here $\bar A_{\alpha}(\delta)$ can be viewed as the uniform error for controlling the difference between $\nabla f(w^{(t)})$ and its expectation.
In particular, we can take $\alpha = n^{-1/p}$ and $\eta = O(1/n)$, then $R_{\eta} = \Theta(n^{-1/p})$ and $\bar A_{\alpha}(\delta)$ becomes $O(n^{-\frac{p-1}{p}} (d \log n + \log(1/\delta))^{\frac{p-1}{p}})$.
We then establish the following convergence results.

\begin{theorem}\label{thm:convergence:alg}
	Under Assumptions A0-A1 and let $\delta \geq \exp\{-n/4\}$ and $\gamma_t \equiv \gamma \leq \frac{4}{9 L}$, then Algorithm~\ref{alg:GD} will return an estimator $\hat w$ ,with $1 - 2\delta$ probability, it holds that
	 \[T_{stop} \leq \frac{18(m_{f_{w^{(0)}}} - m_{f_{w^{\ast}}})}{5 \gamma d (\bar A_{\alpha}(\delta))^2},\]
	 where	$T_{stop} := \min \{t: \|\nabla F(w^{(t)})\| \leq \sqrt{d} \bar A_{\alpha}(\delta) \}$.
	
	If $f$ is additionally assumed to be $\kappa$-strictly convex, then it holds with probability $1 - 2 \delta$ that
	\[\|w^{(t+1)} - w^{\ast}\| \leq (1 - \sqrt{\frac{2 \gamma \kappa L}{\kappa + L}})^{t+1} \|w^{(0)} - w^{\ast}\| + \sqrt{d}\bar A_{\alpha}(\delta) \frac{\gamma}{1 - \sqrt{\frac{2 \gamma \kappa L}{\kappa + L}}}\]
 for any $0 \leq t < T_{stop}$.
\end{theorem}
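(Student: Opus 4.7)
I would view Algorithm~\ref{alg:GD} as inexact gradient descent on $F(w)=\mathbb{E}[f_w(X)]$ where the per-iteration gradient error $g^{(t)}-\nabla F(w^{(t)})$ is controlled by Catoni's mean estimator coordinate-wise. The proof then has two essentially independent pieces: (a) a \emph{uniform} concentration bound showing $\|g^{(t)}-\nabla F(w^{(t)})\|\le \sqrt d\,\bar A_\alpha(\delta)$ simultaneously for every $w^{(t)}$ one could encounter with high probability; and (b) the standard inexact-gradient-descent arithmetic that plugs this bound into a descent lemma (nonconvex case) or into a contraction argument (strongly convex case).

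\textbf{Step 1: Per-coordinate control at a fixed $w$.} For any fixed $w\in\mathbb{R}^d$, Assumption A0 plus Theorem~\ref{thm:Nprob_ineq} applied to the i.i.d.\ samples $\{\nabla f_w(X_i)[j]\}_{i=1}^n$ give, with probability at least $1-\delta'$, that $|g[j]-\mathbb{E}[\nabla f_w(X)[j]]|\le A_\alpha(\delta')$ for each coordinate $j$. Taking a union bound over $j\in[d]$ and noting that $\mathbb{E}[\nabla f_w(X)]=\nabla F(w)$ gives $\|g-\nabla F(w)\|_\infty \le A_\alpha(\delta'/d)$ with probability $1-\delta'$.

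\textbf{Step 2: Uniform control via covering and Assumption A1.} Because the iterates $w^{(t)}$ themselves are random and data-dependent, I would promote the pointwise bound of Step 1 to a uniform bound over a bounded domain of diameter $D_w$. Form a $\rho$-net $\mathcal N$ of this domain with $|\mathcal N|\le (D_w/\rho)^d$. Apply Step 1 at each point of $\mathcal N$ and union-bound. On a $1-\eta$-probability event, Assumption A1 gives that the sample map $w\mapsto \nabla f_w(X_i)$ is $R_\eta$-Lipschitz, so both $g$ and $\nabla F$ are Lipschitz with constant $R_\eta+L_f$ (the $L_f$ coming from the smoothness of the population gradient implied by A2). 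Choosing $\rho$ of order $\eta/(\alpha(R_\eta+L_f))$ ensures that the pointwise bound on $\mathcal N$ transfers to every $w$ with a doubling of $A_\alpha(\cdot)$. Matching the resulting net size with the definition $\bar A_\alpha(\delta)=A_\alpha\!\big(\delta/(2D_w(R_\eta+L_f)\alpha/(32A\eta))^d\big)$ yields, with probability $\ge 1-2\delta$, the uniform coordinate bound $|g[j]-(\nabla F(w))[j]|\le \bar A_\alpha(\delta)$ for every $w$ in the iterate region, hence $\|g-\nabla F(w)\|\le\sqrt{d}\,\bar A_\alpha(\delta)$.

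\textbf{Step 3: Nonconvex bound on $T_{stop}$.} Condition on the $1-2\delta$ event of Step 2. Interpreting A2 as $L$-smoothness of $F$, the standard descent lemma gives
\begin{equation*}
F(w^{(t+1)})\le F(w^{(t)})-\gamma\langle \nabla F(w^{(t)}),g^{(t)}\rangle+\tfrac{L\gamma^2}{2}\|g^{(t)}\|^2.
\end{equation*}
Writing $g^{(t)}=\nabla F(w^{(t)})+e^{(t)}$ with $\|e^{(t)}\|\le \sqrt d\,\bar A_\alpha(\delta)$, and using Cauchy--Schwarz plus $\gamma\le 4/(9L)$, one obtains
$F(w^{(t+1)})-F(w^{(t)})\le -\tfrac{5\gamma}{18}\|\nabla F(w^{(t)})\|^2 + C\gamma d(\bar A_\alpha(\delta))^2$ for an explicit constant. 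As long as $t<T_{stop}$ we have $\|\nabla F(w^{(t)})\|^2>d(\bar A_\alpha(\delta))^2$, so the per-step decrease in $F$ is at least $\tfrac{5\gamma}{18}d(\bar A_\alpha(\delta))^2$. Telescoping and using $F(w^{(t)})\ge F(w^\ast)$ gives the claimed bound on $T_{stop}$.

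\textbf{Step 4: Strongly convex bound.} Still on the event of Step 2, the classical Polyak/Nesterov contraction for gradient descent on $\kappa$-strongly convex, $L$-smooth functions with exact gradients gives $\|w-\gamma\nabla F(w)-w^\ast\|\le \sqrt{1-\tfrac{2\gamma\kappa L}{\kappa+L}}\,\|w-w^\ast\|$. Substituting the noisy update and using the triangle inequality yields
\begin{equation*}
\|w^{(t+1)}-w^\ast\|\le \sqrt{1-\tfrac{2\gamma\kappa L}{\kappa+L}}\,\|w^{(t)}-w^\ast\|+\gamma\sqrt d\,\bar A_\alpha(\delta).
\end{equation*}
Iterating this linear recursion and summing the geometric series in the error term gives exactly the stated bound.

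\textbf{Main obstacles.} The genuinely delicate piece is Step 2: the iterates $w^{(t)}$ are data-dependent, so one cannot just invoke Theorem~\ref{thm:Nprob_ineq} at the random points $w^{(t)}$. The covering argument must be calibrated carefully so that (i) the net cardinality in the exponent $d$ matches the $\bar A_\alpha$ definition, (ii) the Lipschitz transfer via A1 loses only a constant factor, and (iii) the condition $\delta\ge e^{-n/4}$ keeps all $(h,\delta)$-conditions of Theorem~\ref{thm:Nprob_ineq} satisfied at the inflated confidence level. Once Step 2 is in place, the rest is bookkeeping: matching the numerical constants $5/18$ and $9L/4$ in the descent computation and unrolling the geometric recursion in the strongly convex case.
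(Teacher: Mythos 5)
Your overall plan — inexact gradient descent with a covering/net argument to make the Catoni gradient bound uniform, then the standard descent-lemma and strongly-convex recursion — is the same decomposition the paper uses, and your Steps 3 and 4 match the paper's ``Step 2'' essentially verbatim. The gap is in your Step 2, and it is exactly where the paper spends most of its effort.

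You write: ``Assumption A1 gives that the sample map $w\mapsto \nabla f_w(X_i)$ is $R_\eta$-Lipschitz, so both $g$ and $\nabla F$ are Lipschitz with constant $R_\eta+L_f$.'' The ``so'' hides the real work. The Catoni gradient $g$ is defined \emph{implicitly} as the root of $\sum_i \phi\big(\alpha(\nabla f_w(X_i)[j]-g[j])\big)=0$, and Lipschitz continuity of the inputs does not by itself give Lipschitz continuity of the root. The paper's proof establishes this in two nontrivial moves. First, if \emph{every} sample gradient moves by at most $R|w_1-w_2|$, then by the monotonicity of $\phi$ the sums $\sum_i\phi(\alpha(\nabla f_{w_1}(X_i)\pm R|w_1-w_2|-\theta))$ sandwich $\sum_i\phi(\alpha(\nabla f_{w_2}(X_i)-\theta))$, and the shift-covariance of the M-estimation equation then yields $|\hat\theta_{w_1}-\hat\theta_{w_2}|\le R|w_1-w_2|$. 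Second — and this is the delicate part — A1 gives the Lipschitz bound only on a $(1-\eta)$-probability event per sample, so the condition fails on roughly $n\eta$ of the $X_i$. The paper handles this by truncating $X_i$ at a level $B_\eta$, comparing the Catoni equation on truncated data $\tilde r_{n,w,\eta}$ with the original $r_{n,w}$ (the difference is $O(A\eta/\alpha)$ since $\phi$ is bounded by $A$), and converting that offset in the equation into an offset in the root. That last conversion needs a \emph{lower bound} on the derivative of $\tilde r_{n,w,\eta}$, which the paper gets from the event $E_{\mathrm{small},x}$ that at least $n/4$ of the $\nabla f_w(X_i)$ are small enough for $\psi'\geq\psi'_{0.5}$; Hoeffding gives $\Prob(E_{\mathrm{small},x})\ge 1-e^{-n/4}$, which is the real source of the hypothesis $\delta\ge e^{-n/4}$ (not the $(h,\delta)$-conditions as you guessed). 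Your net-spacing and $\bar A_\alpha$ bookkeeping are consistent with the paper once these pieces are in place, so this is a missing technical ingredient rather than a wrong approach, but as stated your argument does not close.
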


\begin{remark}
    The proof technique of Theorem~\ref{thm:convergence:alg} is similar to that of Theorem 5 in~\citet{holland2019better}.
    However, our case $1 < p < 2$ requires more involved computation than case $p \geq 2$ does.
    In addition, results in~\citet{holland2019better} are established under stronger conditions, i.e., $\sup_{X}|\nabla f_{w_1}(X) - \nabla f_{w_2}(X)| \leq R |w_1 - w_2|$, which is not generally true when the support of $X$ is unbounded.
\end{remark}

\newpage

\subsection{Acceleration: Finding $\hat f_w$ via risk minimization}

Unfortunately, computing an robust estimate of $\mathbb E[\nabla f_{w}(X)]$ is generally inefficient when dimension $d$ goes extremely large.
We need to go around with this issue by seeking other types of approaches.

By previous section, our ultimate goal is to find out the minimizer, $\arg\min_{w} m_{f_w}$.
It is necessary to find the solution of $\nabla_w m_{f_w} = \mathbf 0$.
However, the explicit formula of $m_{f_w}$ is unknown to us. We then look for the solution of $\nabla_{w} \hat \mu_{f_w} = \mathbf 0$ instead.
Recall the following identity,
\begin{eqnarray}
0 = \frac{1}{n \alpha} \sum_i^n \phi(\alpha(f_{w}(X_i) - \hat \mu_{f_w})).
\end{eqnarray}
Taking derivative with respect to $w$ on both sides, we have
\begin{eqnarray}
\mathbf 0 &=& \frac{1}{n \alpha} \frac{\partial \{\sum_i^n \phi(\alpha(f_{w}(X_i) - \mu_{f_w}))\}}{\partial w} \nonumber \\
&=& \frac{1}{n} \sum_i \phi'(\alpha(f_w(X_i) - \hat \mu_{f_w})) \big( \frac{\partial f_w(X_i)}{\partial w} - \nabla_w \hat \mu_{f_w} \big).
\end{eqnarray}
With re-arrangement, we arrive at
\begin{eqnarray}
\nabla_w \hat \mu_{f_w} = \frac{\sum_i \phi'(\alpha(f_w(X_i) - \hat \mu_{f_w})) \frac{\partial f_w(X_i)}{\partial w} }{\sum_i \phi'(\alpha(f_w(X_i) - \hat \mu_{f_w}))}.
\end{eqnarray}
By gradient descent, $w^{(t+1)} = w^{(t)} - \gamma_t \nabla_w \hat \mu_{f_w^{(t)}}$, we can find the stationary point $\tilde w$ such that $\nabla_w \hat \mu_{f_w}|_{w = \tilde w} = 0$.
Therefore, we have the following proposition.

\vspace{0.1in}

\begin{proposition}
If $f_{\hat w}$ is the optimizer as defined in \eqref{def:muf}, then it holds that
	$\nabla_{w} \hat \mu_{f_w}|_{w = \hat w} = \mathbf 0$.
\end{proposition}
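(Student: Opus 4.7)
The plan is to establish this as a first-order optimality condition combined with an implicit-function-theorem computation of the gradient of $\hat{\mu}_{f_w}$ with respect to $w$. By the definition in \eqref{def:muf}, $\hat{w}$ minimizes the map $w \mapsto \hat{\mu}_{f_w}$ over $\mathbb{R}^d$ (assumed unconstrained for this parametric family), so provided $\hat{\mu}_{f_w}$ is differentiable at $\hat{w}$, the standard necessary condition $\nabla_w \hat{\mu}_{f_w}\big|_{w=\hat{w}} = \mathbf{0}$ is immediate.

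The core task is therefore to establish differentiability and derive the explicit formula. First I would recall that $\hat{\mu}_{f_w}$ is defined implicitly by the equation
\[
G(w, \mu) := \frac{1}{n\alpha} \sum_{i=1}^n \phi\bigl(\alpha(f_w(X_i) - \mu)\bigr) = 0.
\]
Since $\phi$ is non-decreasing (and strictly increasing on the relevant region by \eqref{eq:C_HT}), the partial derivative
\[
\frac{\partial G}{\partial \mu}(w,\mu) = -\frac{1}{n}\sum_{i=1}^n \phi'\bigl(\alpha(f_w(X_i) - \mu)\bigr) < 0,
\]
so the implicit function theorem applies and $w \mapsto \hat{\mu}_{f_w}$ is differentiable wherever $f_w$ is differentiable in $w$. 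Differentiating $G(w, \hat{\mu}_{f_w}) \equiv 0$ with respect to $w$ yields exactly the re-arrangement already performed in the excerpt, giving
\[
\nabla_w \hat{\mu}_{f_w} = \frac{\sum_{i=1}^n \phi'\bigl(\alpha(f_w(X_i) - \hat{\mu}_{f_w})\bigr)\,\partial_w f_w(X_i)}{\sum_{i=1}^n \phi'\bigl(\alpha(f_w(X_i) - \hat{\mu}_{f_w})\bigr)}.
\]
Evaluating at $w = \hat{w}$ and invoking that $\hat{w}$ is an interior minimizer of $\hat{\mu}_{f_w}$ on $\mathbb{R}^d$ gives the conclusion $\nabla_w \hat{\mu}_{f_w}\big|_{w=\hat{w}} = \mathbf{0}$.

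The main (and only nontrivial) obstacle is justifying that the denominator $\sum_i \phi'(\alpha(f_w(X_i) - \hat{\mu}_{f_w}))$ is strictly positive, so that the implicit function theorem applies and the quotient is well-defined. This follows from the requirement that $\phi$ be strictly increasing, which is guaranteed by the sandwich bounds \eqref{eq:C_HT}: on any bounded set of arguments the lower and upper bounding logarithms both have strictly positive derivative, forcing $\phi' > 0$ pointwise (or at least on a non-negligible subset of the sample, which suffices). A minor secondary point is to assume $f_w$ is smooth enough in $w$ for $\partial_w f_w$ to exist a.s.; this is a standing regularity hypothesis for the parametric family considered in Section~\ref{sec:algorithm}. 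No further structural work is needed beyond this routine verification.
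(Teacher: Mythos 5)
Your proposal is correct and matches what the paper does: the text preceding the proposition performs exactly the implicit differentiation of $\frac{1}{n\alpha}\sum_i\phi(\alpha(f_w(X_i)-\mu))=0$ to obtain the weighted-gradient formula, and the proposition is then the trivial first-order necessary condition at the unconstrained minimizer $\hat w$. The paper does not supply a separate proof in the appendix, so your write-up is actually more careful than the original (explicitly naming the implicit function theorem and the nonvanishing-denominator condition).

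One point is imprecise. You assert that the sandwich bounds \eqref{eq:C_HT} force $\phi'>0$ pointwise. This is false: the bounds only pin $\phi$ at $x=0$ (where both sides vanish) and otherwise leave room for $\phi$ to be locally constant. Indeed, the paper's own $\phi_2$ in \eqref{eq:example2} (and $\phi_{narrow}$ in Section~\ref{sec:sim}) is constant for $|x|$ large, so $\phi'\equiv 0$ outside a bounded interval. What actually guarantees the denominator $\sum_i\phi'\bigl(\alpha(f_{\hat w}(X_i)-\hat\mu_{f_{\hat w}})\bigr)$ is strictly positive is the structure of the root: if all $\alpha(f_{\hat w}(X_i)-\hat\mu_{f_{\hat w}})$ landed in the flat region, the equation $\sum_i\phi(\cdot)=0$ would typically fail to pin down a unique $\mu$ (the root would be an interval), so in the generic case some sample points must lie where $\phi'>0$. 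Your parenthetical hedge (``at least on a non-negligible subset of the sample'') is the right fix; just don't attribute it to the sandwich inequality itself, but to the non-degeneracy of the estimating equation at its root.
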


\vspace{0.1in}

By above reasons, we naturally have the following empirical risk-based gradient descent algorithm, i.e., Algorithm~\ref{alg:ERM}.
Some remarks are explained here.
Since $\phi$ is non-decreasing, then $\phi'(\cdot)$ is always non-negative.
We can view $\frac{\phi'(\alpha(f_{w^{(t)}}(X_i) - \hat \mu_{f_w}))}{\sum_i \phi'(\alpha(f_{w^{(t)}}(X_i) - \hat \mu_{f_{w^{(t)}}}))}$ as the weight of $i$-th sample at iteration $t$.
Therefore, larger $f_{w^{(t)}}(X_i)$ has smaller weight, thanks to the construction of $\phi$.
Compared with robust gradient descent method (see Section~\ref{sec:robustGD}),
we only need to solve non-linear equation for \textbf{one time} in each iteration instead of computing robust gradient coordinate by coordinate.
Hence Algorithm~\ref{alg:ERM} is more computationally friendly than Algorithm~\ref{alg:GD}.
In the field of robust statistics, a similar type of Algorithm is considered in \cite{mathieu2021excess}.

\begin{algorithm}[ht]
  \hrulefill
	\caption{Empirical Risk Gradient Descent}
	\label{alg:ERM}
	\begin{algorithmic}[1]
		\STATE {\bfseries Input:}
		Observations: $\{X_i, i \in \{1, \ldots, n\}\}$.
		\STATE {\bfseries Output:}
		Estimated parameter: $\tilde w$
		\STATE {\bfseries Initialization:} Randomly choose $w^{(0)}$ from $\mathbb R^d$ and set time index $t = 0$.
		\WHILE{not converged}
		\STATE Find $\hat \mu_{f_{w^{(t)}}}$ by solving
        \begin{eqnarray}
        0 = \frac{1}{n \alpha} \sum_{i=1}^n \phi(\alpha(f_{w^{(t)}}(X_i) - \mu)). \label{eq:sol:alg}
        \end{eqnarray}
		\STATE Compute gradient $g^{(t)}$ by
		\begin{eqnarray}
		g^{(t)} = \nabla_w \hat \mu_{f_w^{(t)}} = \frac{\sum_i \phi'(\alpha(f_{w^{(t)}}(X_i) - \hat \mu_{f_w^{(t)}})) \frac{\partial f_{w^{(t)}}(X_i)}{\partial w} }{\sum_i \phi'(\alpha(f_{w^{(t)}}(X_i) - \hat \mu_{f_w^{(t)}}))}. \label{eq:update_novel}
		\end{eqnarray}
		\STATE Update parameter by $w^{(t+1)} = w^{(t)} - \gamma_t g^{(t)}$.
		\STATE Increase time index $t = t+1$.
		\ENDWHILE
		\STATE Return parameter estimate $\tilde w = w^{(t)}$.
	\end{algorithmic}
   \hrulefill
\end{algorithm}

We can see that Algorithm~\ref{alg:ERM} requires to solve the non-linear equation \eqref{eq:sol:alg}.
To further accelerate the whole estimation procedure, we compute an approximate of $\hat \mu_{f_{w^{(t)}}}$
instead of computing it exactly.
At $(t+1)$-th step, such approximation is obtained via the following recursive formula
\[\hat \mu^{(t+1)} = \hat \mu^{(t)} + \sum_i \nu_i^{(t)} (f_{w^{(t+1)}}(X_i) - f_{w^{(t)}}(X_i)),\]
where $\hat \mu^{(t)}$ is viewed as the proxy of $\hat \mu_{f_{w^{(t)}}}$
and $\nu_i^{(t)} := \frac{\phi'(\alpha(f_{w^{(t)}}(X_i) - \hat \mu_{f_w}))}{\sum_i \phi'(\alpha(f_{w^{(t)}}(X_i) - \hat \mu_{f_{w^{(t)}}}))}$ is the weight of sample~$i$.

\newpage

The full procedure is summarized in Algorithm~\ref{alg:DW}.
Note that weights $\{\nu_i^{(t)}\}$'s have been used \emph{twice} for computing gradient $g^{(t)}$ and approximated risk $\hat \mu^{(t+1)}$.
Therefore, we call this method as \textbf{double-weighted} gradient descent algorithm.

\begin{algorithm}[ht]
  \hrulefill
	\caption{Double-weighted Gradient Descent}
	\label{alg:DW}
	\begin{algorithmic}[1]
		\STATE {\bfseries Input:}
		Observations: $\{X_i, i \in \{1, \ldots, n\}\}$; A catoni-influence function $\phi$; Stopping threshold $\tilde \varrho$.
		\STATE {\bfseries Output:}
		Estimated parameter: $\tilde w$
		\STATE {\bfseries Initialization:} Set time index $t = 0$.  Randomly choose $w^{(0)}$ from $\mathbb R^d$, choose an $\mu^{(0)} \in \mathbb R^{+}$ such that $\hat \mu^{(0)}$ is an $\alpha$-approximate solution to
		\begin{eqnarray}
		0 = \frac{1}{n} \sum_{i=1}^n \phi(\alpha(f_{w^{(0)}}(X_i) - \mu))
		\end{eqnarray}
		Choose weights to be
		\[\nu_i^{(0)} = \frac{\phi'(\alpha(f_{w^{(0)}}(X_i) - \hat \mu^{(0)})) }{\sum_i \phi'(\alpha(f_{w^{(0)}}(X_i) - \hat \mu^{(0)}))}.\]
		\WHILE{not converged}
		\STATE Compute gradient $g^{(t)}$ by
		\begin{eqnarray}
		g^{(t)} = \sum \nu_i^{(t)} \frac{\partial f_{w^{(t)}}(X_i)}{\partial w}. \label{eq:update_novel2}
		\end{eqnarray}
		\STATE Update parameter by $w^{(t+1)} = w^{(t)} - \gamma_t g^{(t)}$.
		\STATE Find $\hat \mu^{(t+1)}$ by computing
        \begin{eqnarray}
        \hat \mu^{(t+1)} = \hat \mu^{(t)} + \sum_i \nu_i^{(t)} (f_{w^{(t+1)}}(X_i) - f_{w^{(t)}}(X_i)).
        \label{eq:double:weight}
        \end{eqnarray}
		\STATE Compute weights
		\[\nu_i^{(t+1)} = \frac{\phi'(\alpha(f_{w^{(t+1)}}(X_i) - \hat \mu^{(t+1)})) }{\sum_i \phi'(\alpha(f_{w^{(t+1)}}(X_i) - \hat \mu^{(t+1)}))}.\]
		\STATE Increase time index $t = t+1$.
		\ENDWHILE
		\STATE Return parameter estimate $\tilde w = w^{(t)}$.
	\end{algorithmic}
   \hrulefill
\end{algorithm}

We further prove the convergent property of Algorithm~\ref{alg:DW}.
In particular, for any $\varrho \approx (\sqrt{d} \alpha^p)^{1/3}$, we can show that it takes no longer than $O(1/\varrho^2)$ steps to return an estimator whose gradient norm is not larger than $\varrho$.

\begin{theorem}\label{thm:alg3}
Suppose $\alpha \in (0,1)$ and step size $\gamma_t \equiv \gamma \leq 1/(5L)$ in Algorithm \ref{alg:DW}, where $L$ is the Lipschitz constant of $F(w) := \mathbb E[f_w(X)]$.
For any $\varrho$ satisfying $\varrho \geq \tilde \varrho := \Big(36 C_{\phi''} \alpha^{p} \sqrt{d} (\log n) \frac{\hat \mu_{f_{w^{(0)}}}}{\gamma (p-1)}\Big)^{1/3}$, we define $T_{end}(\varrho) := \min \{t: \|\nabla_{w} \hat \mu_{f_{w^{(t)}}}\| \leq \varrho \}$. Then it holds
\[T_{end}(\rho) \leq \frac{\hat \mu_{f_{w^{(0)}}}}{\gamma \varrho^2} \]
with probability going to 1, where $C_{\phi''}$ is a constant depending on the second derivative of influence function $\phi(x)$.
\end{theorem}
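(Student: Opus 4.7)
The plan is to run a gradient-descent analysis on the implicit objective $\Phi(w) := \hat\mu_{f_w}$, the function that Algorithm~\ref{alg:DW} is effectively minimising. Implicit differentiation of the defining equation for $\hat\mu_{f_w}$ gives $\nabla\Phi(w) = \sum_i \tilde\nu_i(w)\,\partial f_w(X_i)$ with $\tilde\nu_i(w)$ built from the exact root, and combined with Assumption A2 this makes $\Phi$ gradient-Lipschitz with constant proportional to $L$. The standard descent lemma, followed by AM--GM on the cross term and the choice $\gamma \leq 1/(5L)$, yields
\begin{equation*}
\Phi(w^{(t+1)}) \leq \Phi(w^{(t)}) - \tfrac{\gamma}{4}\|\nabla\Phi(w^{(t)})\|^2 + C_1\gamma\|e_t\|^2,\qquad e_t := g^{(t)} - \nabla\Phi(w^{(t)}).
\end{equation*}
On any step where $\|\nabla\Phi(w^{(t)})\| > \varrho$ and $\|e_t\|^2 \leq \varrho^2/(8C_1)$ this forces descent of at least $\gamma\varrho^2/8$; telescoping over $t < T_{end}(\varrho)$ and using $\Phi \geq 0$ yields $T_{end}(\varrho) \leq 8\Phi(w^{(0)})/(\gamma\varrho^2)$, matching the claim up to a constant absorbed in $\tilde\varrho$. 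Everything hinges on verifying that $\|e_t\|^2 \leq \varrho^2/(8C_1)$ is preserved throughout the run whenever $\varrho \geq \tilde\varrho$.

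The error $e_t$ is a linear functional of the drift $\Delta_t := \hat\mu^{(t)} - \hat\mu_{f_{w^{(t)}}}$: the weights $\nu_i^{(t)}$ and $\tilde\nu_i^{(t)}$ differ only through the argument of $\phi'$, which differs by $\alpha\Delta_t$. A Taylor expansion of $\phi'$, using the bound on $|\phi''|$ for the Catoni-style influence functions~\eqref{eq:example1}--\eqref{eq:example2} (with the near-zero singularity of $|x|^{p-2}$ absorbed into $C_{\phi''}/(p-1)$), together with Cauchy--Schwarz over the $d$ coordinates and a high-probability uniform bound of order $\log n$ on $\max_i\|\partial f_{w^{(t)}}(X_i)\|$, gives
\begin{equation*}
\|e_t\| \leq C_2\,\frac{C_{\phi''}}{p-1}\,\alpha^{p}\sqrt{d}\,(\log n)\,|\Delta_t|.
\end{equation*}

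The drift is then controlled by exploiting the fact that update rule~\eqref{eq:double:weight} is exactly the first-order Taylor expansion of $\Phi$ at $w^{(t)}$ evaluated at $w^{(t+1)}$, up to the substitution of $\nu_i^{(t)}$ for $\tilde\nu_i^{(t)}$. Decomposing
\begin{equation*}
\Delta_{t+1} - \Delta_t = \sum_i (\nu_i^{(t)}-\tilde\nu_i^{(t)})(f_{w^{(t+1)}}(X_i)-f_{w^{(t)}}(X_i)) - \bigl[\Phi(w^{(t+1)}) - \Phi(w^{(t)}) - \langle\nabla\Phi(w^{(t)}), w^{(t+1)}-w^{(t)}\rangle\bigr],
\end{equation*}
the first piece is at most $\|e_t\|\cdot\gamma\|g^{(t)}\|$ and the second is the $L$-smoothness remainder $\tfrac{L\gamma^2}{2}\|g^{(t)}\|^2$. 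Starting from $\Delta_0 = 0$ (by the $\alpha$-approximate initialisation in Step 3), summing, and invoking $\|g^{(t)}\|^2 \leq 2\|\nabla\Phi(w^{(t)})\|^2 + 2\|e_t\|^2$ together with the telescoped descent bound $\sum_{t<T_{end}} \gamma\|\nabla\Phi(w^{(t)})\|^2 \leq \Phi(w^{(0)})$, one bootstraps a uniform control on $|\Delta_t|$ valid at every $t \leq T_{end}(\varrho)$. Substituting this into the bound on $\|e_t\|$ and imposing $\|e_t\|^2 \leq \varrho^2/(8C_1)$ produces a cubic inequality in $\varrho$ that reduces exactly to $\varrho^3 \geq \mathrm{const}\cdot C_{\phi''}\alpha^p\sqrt{d}(\log n)\Phi(w^{(0)})/(\gamma(p-1))$, i.e.\ $\varrho \geq \tilde\varrho$.

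The main obstacle is closing the self-referential loop between $\|e_t\|$, $|\Delta_t|$ and $\|g^{(t)}\|$: each of the three quantities is bounded in terms of the previous two, and a naive grouping either loses the $T$-dependence in the drift recursion or loses the $\alpha$-scaling in the gradient error. The induction on $t$ must use $\varrho \geq \tilde\varrho$ throughout the pre-stopping regime to keep $\|e_t\|^2 \leq \varrho^2/(8C_1)$ at every step, which in turn keeps the descent lemma providing the $\gamma\varrho^2/8$ decrement that makes the bootstrap on $|\Delta_t|$ close; the specific cubic exponent appearing in $\tilde\varrho$ is precisely what makes this closure tight.
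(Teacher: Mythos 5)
Your high-level plan matches the paper's: run a descent-lemma analysis on $\Phi(w):=\hat\mu_{f_w}$, control the gradient error $e_t$ through the drift $\Delta_t:=\hat\mu^{(t)}-\hat\mu_{f_{w^{(t)}}}$, and bootstrap until the cubic threshold falls out. But the drift recursion you write down is incomplete in a way that breaks the scaling.

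Your decomposition of $\Delta_{t+1}-\Delta_t$ captures the weight-perturbation term $\sum_i(\nu_i^{(t)}-\tilde\nu_i^{(t)})(f_{w^{(t+1)}}(X_i)-f_{w^{(t)}}(X_i))$ and the $L$-smoothness remainder of $\Phi$, both of which shrink with $\|g^{(t)}\|$. What is missing is the second-order remainder of the Taylor expansion of the non-linear $\phi$-equation itself: when the paper expands $\sum_i\phi(\alpha(f_{w^{(t+1)}}(X_i)-\mu))$ around $(\hat\mu^{(t)},w^{(t)})$, there is an error piece of the form $O\bigl(\alpha\cdot\tfrac1n\sum_i (f_{w^{(t+1)}}(X_i)-f_{w^{(t)}}(X_i)+\hat\mu^{(t)}-\mu)^2\bigr)$, and because the increments are heavy-tailed ($p<2$, so no second moment), after truncation at the $\phi'$-cutoff this contributes a per-step drift of order $(\log n)(\alpha+\alpha^{p-1})$ that does \emph{not} decay with $\|g^{(t)}\|$. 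This is exactly the paper's ``$\text{term}_2$'' estimate. Omitting it, your recursion $\Delta_{t+1}-\Delta_t\lesssim\|e_t\|\gamma\|g^{(t)}\|+L\gamma^2\|g^{(t)}\|^2/2$ telescopes (via $\sum_t\gamma\|g^{(t)}\|^2\lesssim\Phi(w^{(0)})$) to a \emph{uniform} bound on $|\Delta_t|$, independent of $t$, and the bootstrap then produces a linear or quadratic constraint on $\varrho$ rather than the claimed cubic $\varrho^3\gtrsim C_{\phi''}\alpha^p\sqrt d\,(\log n)\,\hat\mu_{f_{w^{(0)}}}/(\gamma(p-1))$ (in particular the $\gamma$-dependence comes out with the wrong sign). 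The paper's linear-in-$t$ growth $\text{err}_\mu^{(t)}\lesssim t(\log n)(\alpha+\alpha^{p-1})$, injected into the constraint $\|\zeta^{(t)}\|\lesssim\varrho$ for all $t\le T_{\text{end}}\lesssim\hat\mu_{f_{w^{(0)}}}/(\gamma\varrho^2)$, is precisely what generates the cubic; you need that intrinsic, gradient-independent per-step error.

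A second, smaller issue: your bound $\|e_t\|\lesssim C_{\phi''}\alpha^p\sqrt d\,(\log n)|\Delta_t|/(p-1)$ carries an extra $\alpha^{p-1}$. The weight sensitivity gives $|\tilde\nu_i-\nu_i^{(t)}|\lesssim C_{\phi''}\alpha|\Delta_t|/n$, i.e.\ a prefactor $\alpha$ (not $\alpha^p$) and no $\log n$; the remaining $\alpha^{p-1}$ and $\log n$ factors in the final answer should come from the per-step drift growth you dropped, not from the weight-difference estimate. (Also, $\Delta_0$ is $O(\alpha)$ from the $\alpha$-approximate initialisation in Step~3, not exactly zero, though that is a minor point.)
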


\begin{remark}
    Especially, we can take $\alpha = \Theta(n^{-1/p})$.
    Then $\tilde \varrho =
     \Big(36 C_{\phi''} \alpha^{p} \sqrt{d} (\log n) \frac{\hat \mu_{f_{w^{(0)}}}}{\gamma (p-1)}\Big)^{1/3} = O((\sqrt{d} \log n / n)^{1/3}) = o(1)$ once $d = o(n/\log n)^2$.
     Hence the algorithm returns a solution close to a local stationary point with high probability.
\end{remark}

\newpage\clearpage

\section{Numerical Experiments}\label{sec:sim}

In this section, we provide multiple simulation results to show the usefulness and superiority of our proposed empirical risk-based method.
Specifically, we compare the following algorithms and we first define
\[ \phi_{wide}(x) =
\begin{cases}
\log(1 + x + C_p|x|^p) &~ x \geq 0\\
-\log(1 - x + C_p|x|^p) &~ x < 0.\\
\end{cases}
\]

\vspace{0.05in}

\noindent and

\vspace{0.05in}

\[
\phi_{narrow} (x) =
\begin{cases}
\log \big(1 - \frac{p-1}{p}(p C_p)^{-\frac{1}{p-1}} \big)& \text{if}~ x \leq - (p C_p)^{-\frac{1}{p-1}}, \\
\log(1 + x + C_p|x|^p) &  \text{if} ~ - (p C_p)^{-\frac{1}{p-1}} \leq x \leq 0, \\
- \log(1 - x + C_p|x|^p) &  \text{if} ~ 0 < x \leq (p C_p)^{-\frac{1}{p-1}}, \\
- \log \big(1 - \frac{p-1}{p}(p C_p)^{-\frac{1}{p-1}} \big) & \text{if}~ x \geq (p C_p)^{-\frac{1}{p-1}}.
\end{cases}
\]
$\phi_{wide}(x)$ and $\phi_{narrow}(x)$ are widest and narrowest influence functions satisfying \eqref{eq:C_HT}. Please refer to \cite{bhatt2022minimax} for more detailed explanations of ``widest'' and ``narrowest''.

\vspace{0.1in}

\begin{itemize}
	\item[] \textbf{ERM-wide}. Algorithm~\ref{alg:ERM} with choice of $\phi(x) = \phi_{wide}(x)$
	\item[] \textbf{ERM-narrow}.
	Algorithm~\ref{alg:ERM} with choice of $\phi(x) = \phi_{narrow}(x)$.
	\item[] \textbf{Grad-wide}. Algorithm~\ref{alg:GD} with choice of $\phi(x) = \phi_{wide}(x)$
	\item[] \textbf{Grad-narrow}. Algorithm~\ref{alg:GD} with choice of $\phi(x) = \phi_{narrow}(x)$.
	\item[] \textbf{Mean}.  Algorithm~\ref{alg:GD} via replacing step 5 (Robust gradient) by computing $$g^{(t)} = \frac{1}{n} \sum_{i=1}^n \nabla f_{w^{(t)}}(X_i).$$
	\item[] \textbf{Trim}.  Algorithm~\ref{alg:GD} via replacing step 5 (Robust gradient) by computing $$g^{(t)} = \frac{1}{n} \sum_{i=1}^n \text{Trunc}(\nabla f_{w^{(t)}}(X_i), B),$$ where $\text{Trunc}(X, B) = X \mathbf 1\{|X| \leq B\}$.
	(Remark: This "Trim" algorithm is theoretically equivalent to the truncated loss method.)
\end{itemize}

 \vspace{0.2in}

The choices of tuning parameter in Sections~\ref{sec:sim:reg} -~\ref{sec:sim:median} are given:
$\alpha = \alpha_{erm} := v^{-1/p} (p-1)^{-1/p} h^{(p-1)/p} C_p^{-1/p}$
$(\frac{\log(2T_{max}/\delta)}{n})^{1/p}$ for ERM-wide and ERM-narrow methods;
$\alpha = \alpha_{grad} := v^{-1/p} (p-1)^{-1/p} h^{(p-1)/p} C_p^{-1/p} (\frac{\log(2dT_{max}/\delta)}{n})^{1/p}$ for Grad-wide and Grad-narrow methods;
$B = v^{1/p}(\frac{n}{\log(2dT_{max}/ \delta)})^{1/p}$.
$T_{max}$ is the maximum iteration number.
In experiments, we further fix $h = 0.5$, $v = 1$ and $T_{max} = 1000$.

\subsection{Regression}\label{sec:sim:reg}

We first consider a regression problem, where we in particular assume that
$Y_i = X_i^T w_{\ast} + \xi_{i}$, where $\xi_i$ are symmetrized Pareto random variables.
That is, $\xi_i = (2 u_i - 1) \tilde \xi_i$ with
$$\tilde \xi_i \sim_{i.i.d.} F_{pareto}(x) ~ \text{and}~ u_i = \text{Bernoulli}(0.5),$$
$F_{pareto}(x) = 1 - \frac{1}{x^a}$ and $a$ is the shape (tail) parameter.
We further choose dimension $d \in \{2, 4, 8, 16, 32\}$ and set $a \in \{0.5, 1, 2\}$ when $p = 2$ or $a \in \{0.5, 1, 1.5\}$ when $p = 1.5$. For each setting, we replicate for 50 times. The averages of estimation errors ($\|\hat w - w_{\ast}\|_2$'s) are reported in Figure~\ref{fig:regression}.

\begin{figure}[ht!]

\mbox{\hspace{-0.25in}
	\includegraphics[width=2.4in]{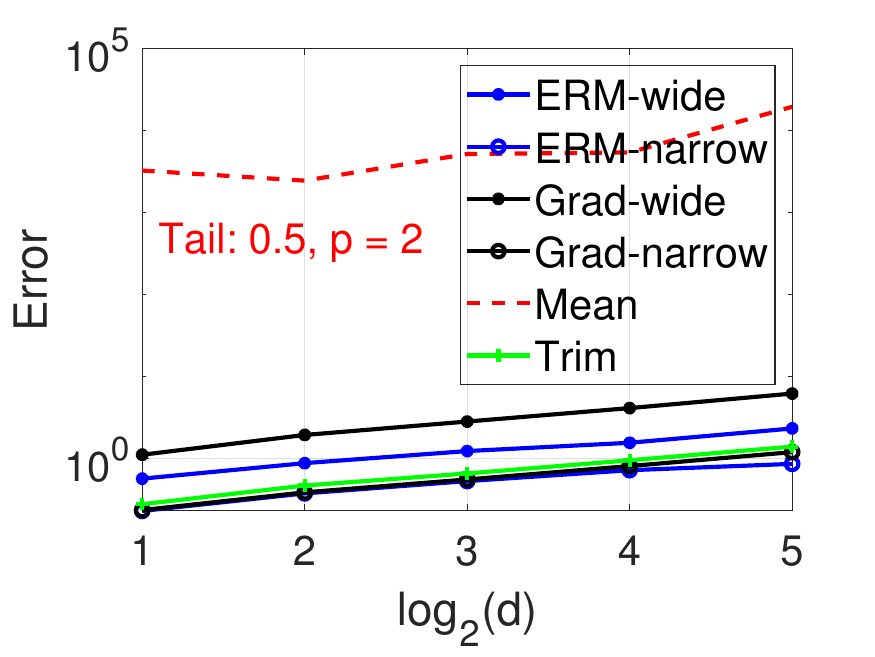}\hspace{-0.15in}
	\includegraphics[width=2.4in]{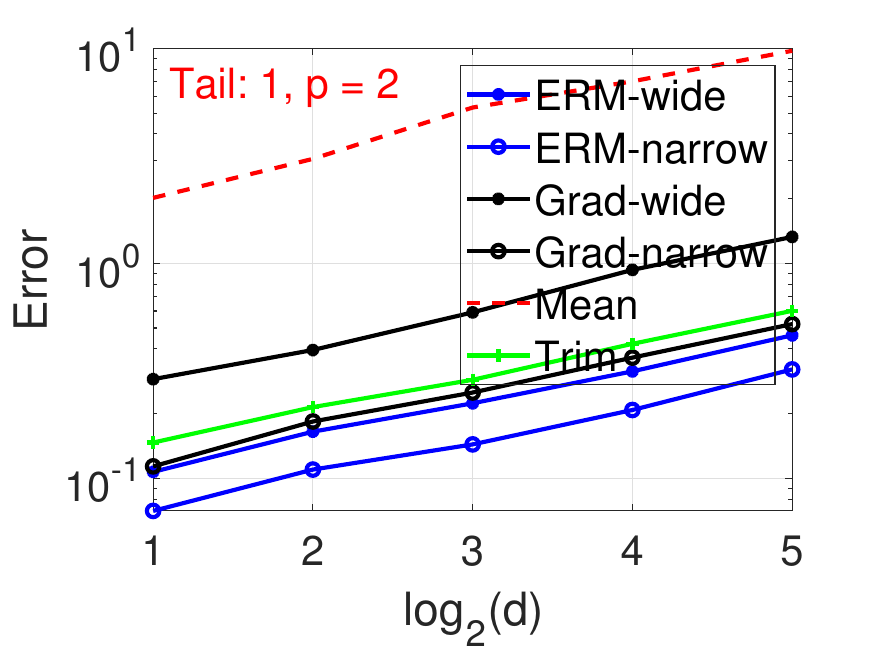}\hspace{-0.15in}
	\includegraphics[width=2.4in]{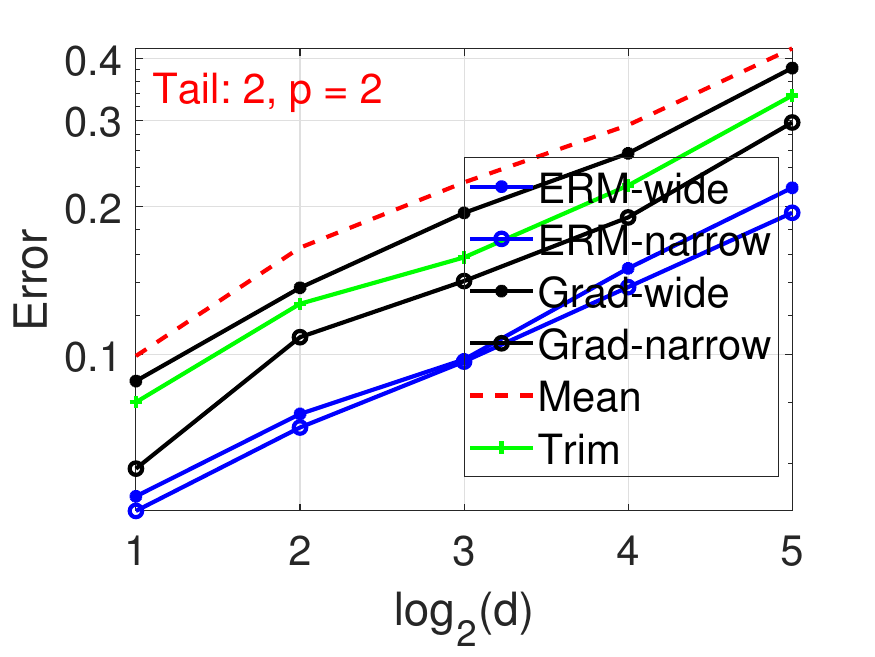}
}

\mbox{\hspace{-0.25in}
	\includegraphics[width=2.4in]{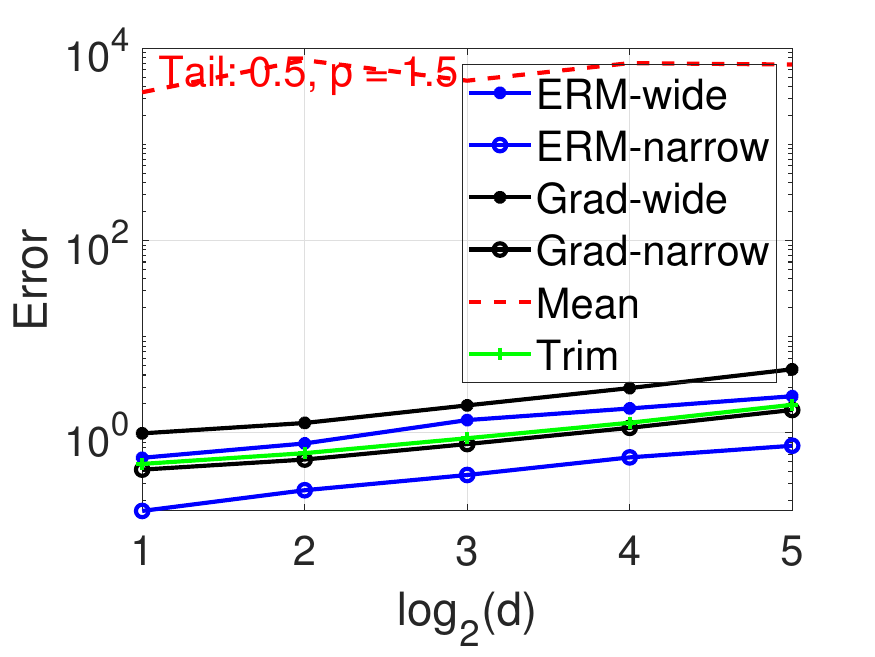}\hspace{-0.15in}
	\includegraphics[width=2.4in]{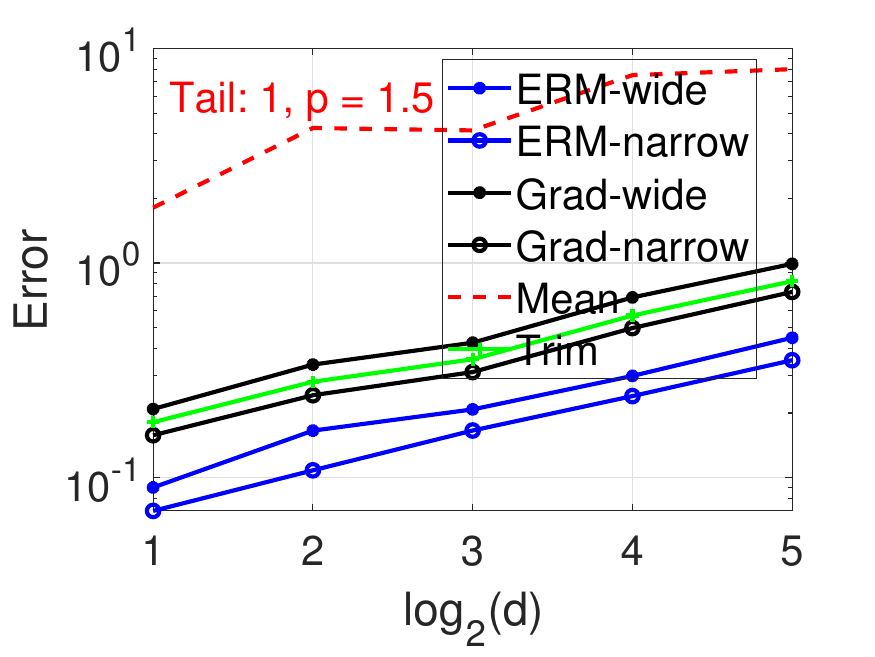}\hspace{-0.15in}
	\includegraphics[width=2.4in]{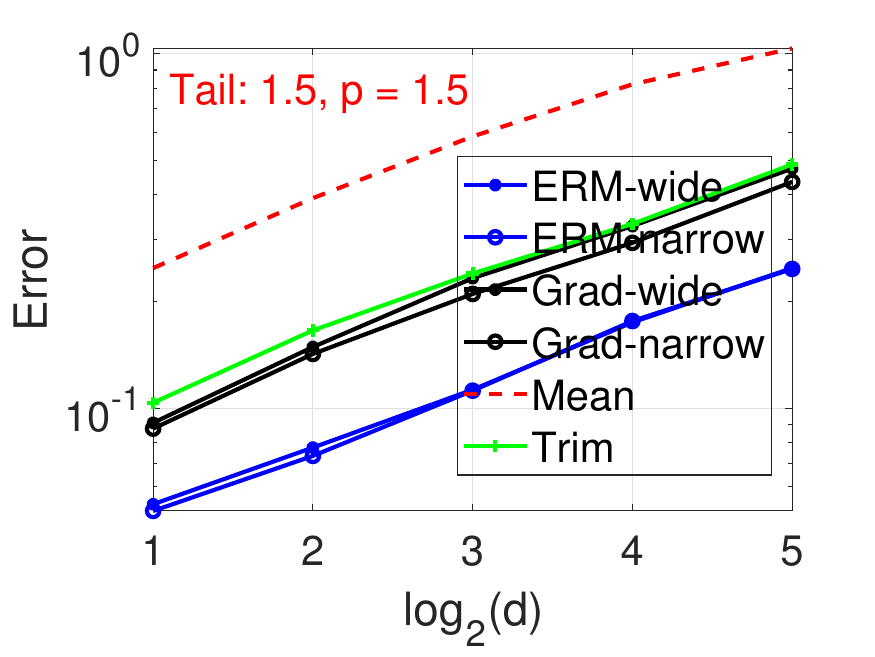}
}

	\caption{Comparison between six methods in regression problems under different dimension values and shape parameters.}\label{fig:regression}	
\end{figure}

\subsection{Regression with Contamination}

We next consider a regression problem with contamination, where we in particular assume that the clean data follows
$Y_i = X_i^T w_{\ast} + \xi_{i}$, where $\xi_i$ are standard normal random variables.
The data are contaminated in the follow fashion.
$\tilde Y_i = Y_i$ with probability $1 - \eta$ and
$\tilde Y_i = (2 u_i - 1) \tilde \xi_i$ with probability $\eta$.
Here $\eta \in (0,1)$ is the contamination rate and $\tilde \xi_i$, $u_i$ are the same as in the previous setting.
In this scenario, we fix $d = 8$ and choose contamination probability $\eta \in \{5\%, 10\%, 20\%, 30\%, 40\% \}$.
The choice of tail parameter $a$ remains the same as in the previous section.
The results of estimation errors are shown in Figure~\ref{fig:contamination}.

\begin{figure}[ht!]

\mbox{\hspace{-0.25in}
	\includegraphics[width=2.4in]{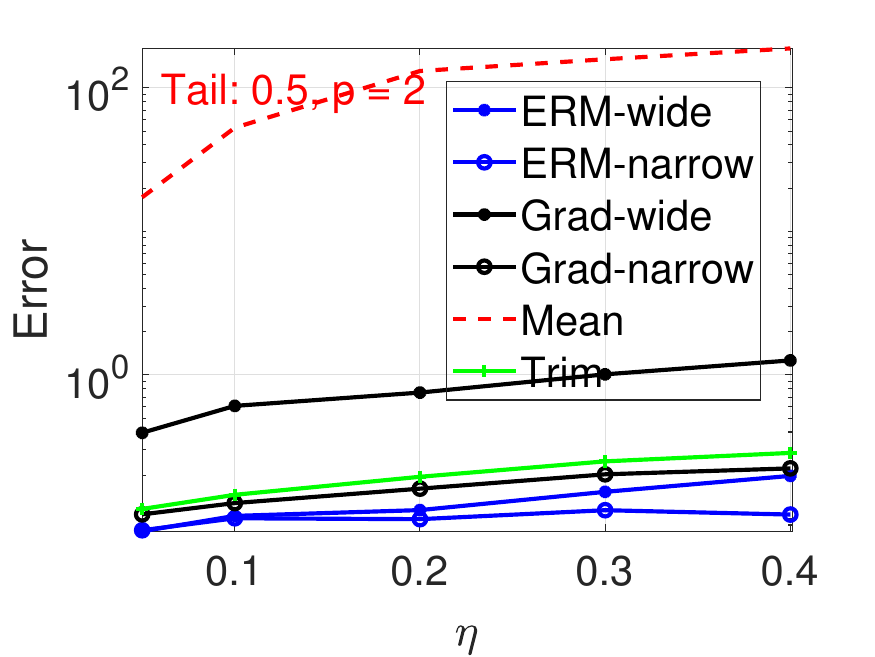}\hspace{-0.15in}
	\includegraphics[width=2.4in]{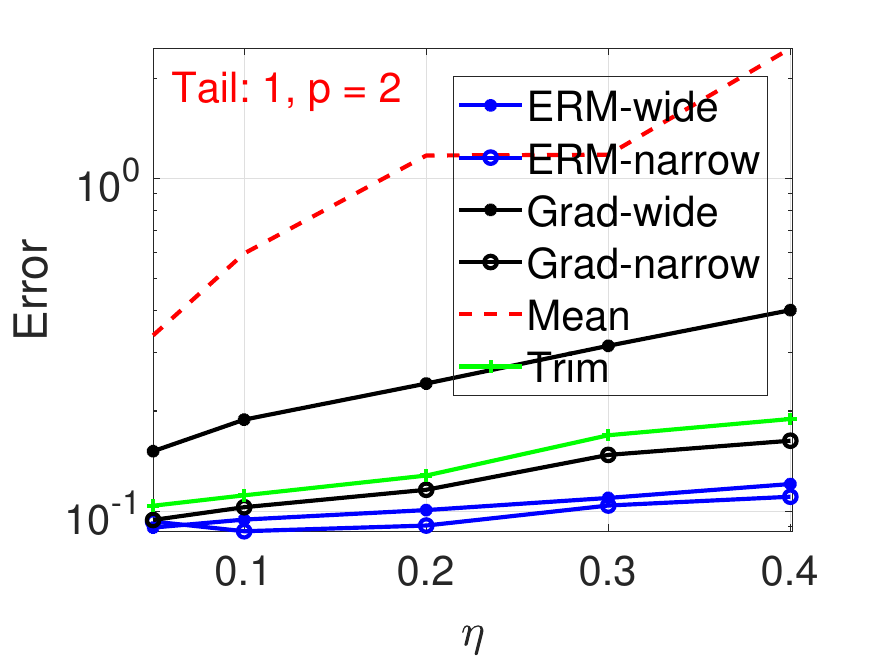}\hspace{-0.15in}
	\includegraphics[width=2.4in]{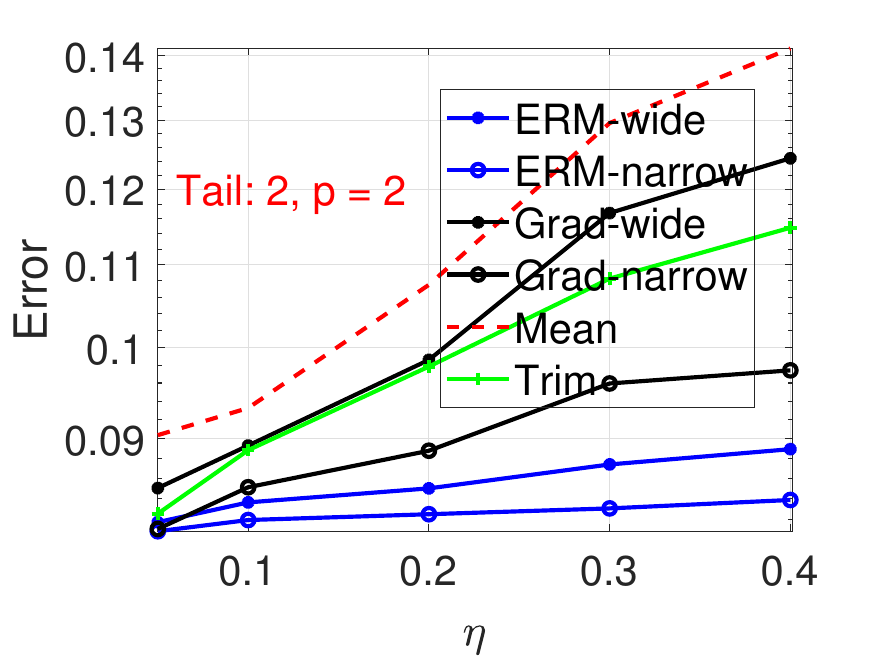}
}

\mbox{\hspace{-0.25in}
	\includegraphics[width=2.4in]{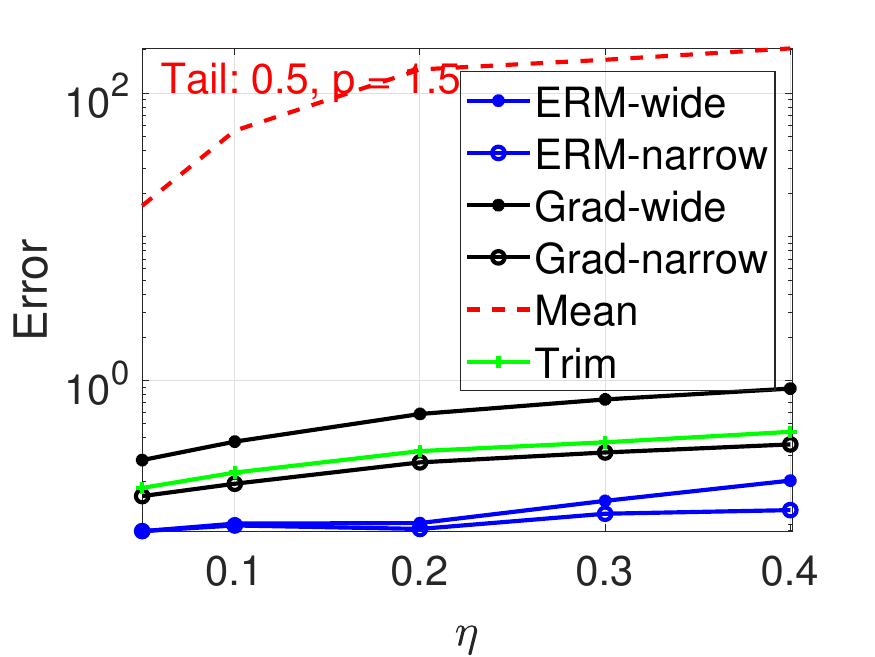}\hspace{-0.15in}
	\includegraphics[width=2.4in]{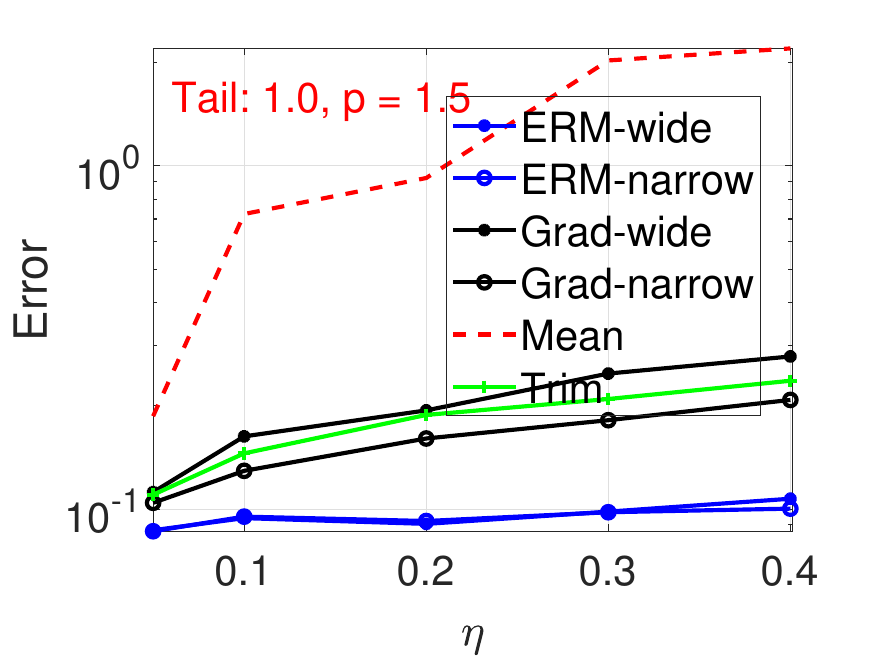}\hspace{-0.15in}
	\includegraphics[width=2.4in]{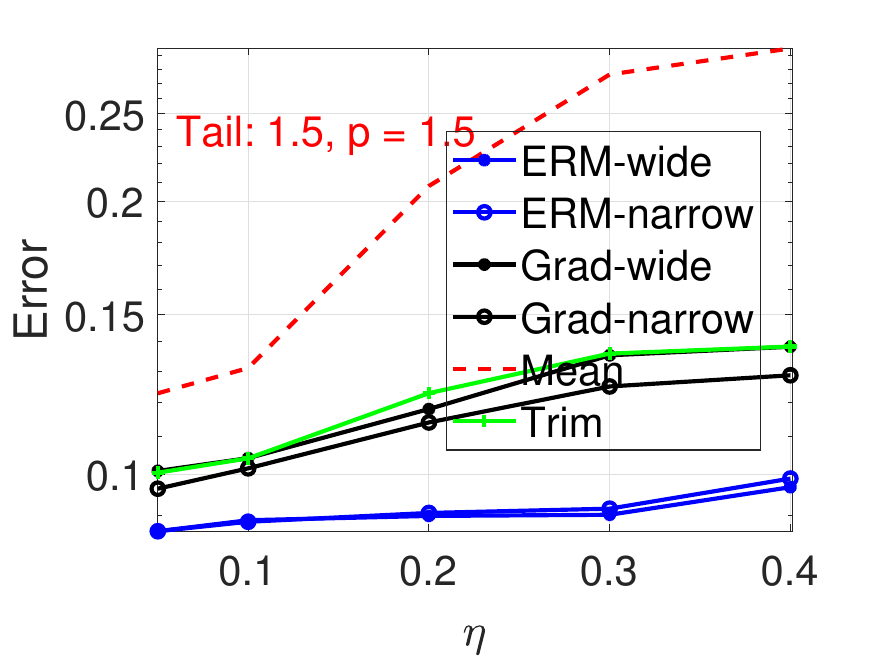}
}

	\caption{Comparison between six methods in regression problems under different contamination rates and shape parameters.}\label{fig:contamination}	
\end{figure}



\newpage

\subsection{$K$-means Clustering}

We next consider a $K$-means clustering problem in this section.
The data generation is described as follows,
\[Y_i = W_{c_i}^{\ast} + \boldsymbol{\xi}_i ~ \text{with}~ c_i \sim \text{Multinom}(1,1,\boldsymbol{\pi}),\]
where $\boldsymbol{\pi} = (\pi_1, \ldots, \pi_K)$ such that $0 < \pi_k < 1$ and $\sum_{k=1}^K \pi_k = 1$. $W^{\ast} = (W_k^{\ast})$ is a $d$ by $k$ matrix.
$\boldsymbol{\xi}_i \in \mathbb R^d$ and each of its coordinate follows a symmetrized Pareto random variables as described before.

For optimization, we postulate the following formulation.
\begin{eqnarray}\label{eq:Kmean}
\min_{W} \sum_i \min_{k \in [K]} l(Y_i, W_k),
\end{eqnarray}
where $W = (W_k)$ is a $d$ by $k$ matrix with $W_k$ being its $k$-th column. We then perform the following estimation scheme for each of six algorithms until the convergence.
\begin{itemize}
	\item For each cluster $k$, update $W_k^{(t+1)} = W_k^{(t)} - \gamma_t g_k^{(t)}$ where $g_k^{(t)}$ is obtained via using ERM-wide (ERM-narrow, Grad-wide, Grad-narrow, Mean or Grad-trim) algorithm.
	\item For each $i$, we assign class label $c_i := \arg\min_k l(Y_i, W_k^{(t+1)})$.
\end{itemize}

\newpage

In this clustering task, we consider three different settings.
(i) We fix $d = 2$, $a = 1$ and let $K \in \{2,3,4,5,6\}$.
(ii) We fix $d = 4$, $K = 2$ and let $a \in \{0.5, 1, 1.5, 2.5, 3.5\}$.
(iii) We fix $d = 2$, $K = 3$ and let $\eta \in \{5\%, 10\%, 20\%, 30\%, 40\%\}$.
The average of estimation errors ($\|\hat W - W^{\ast}\|_2$) are provided in Figure~\ref{fig:Kmean}.

\begin{figure}[ht!]
	\centering
	\includegraphics[width=2.4in]{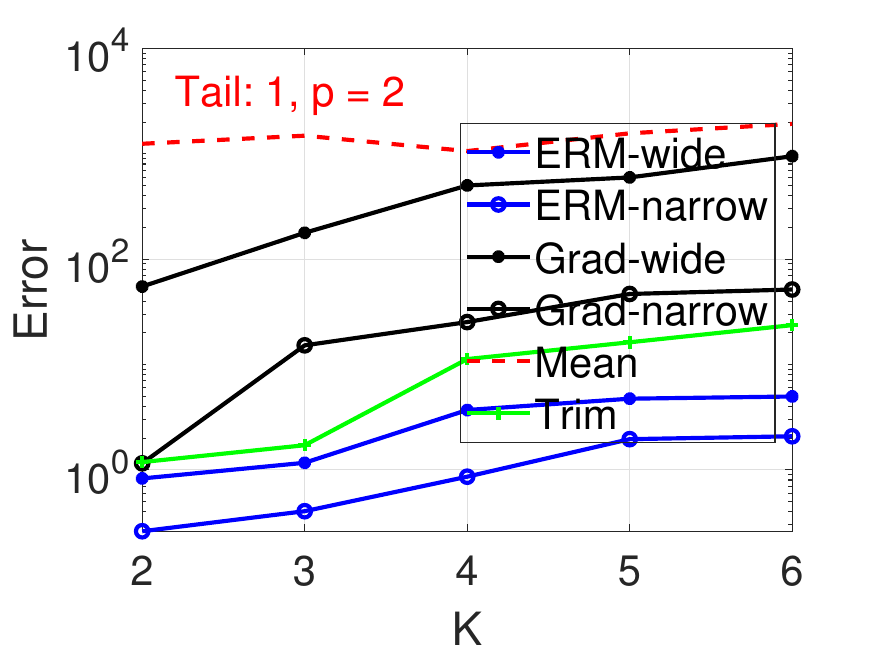}
	\includegraphics[width=2.4in]{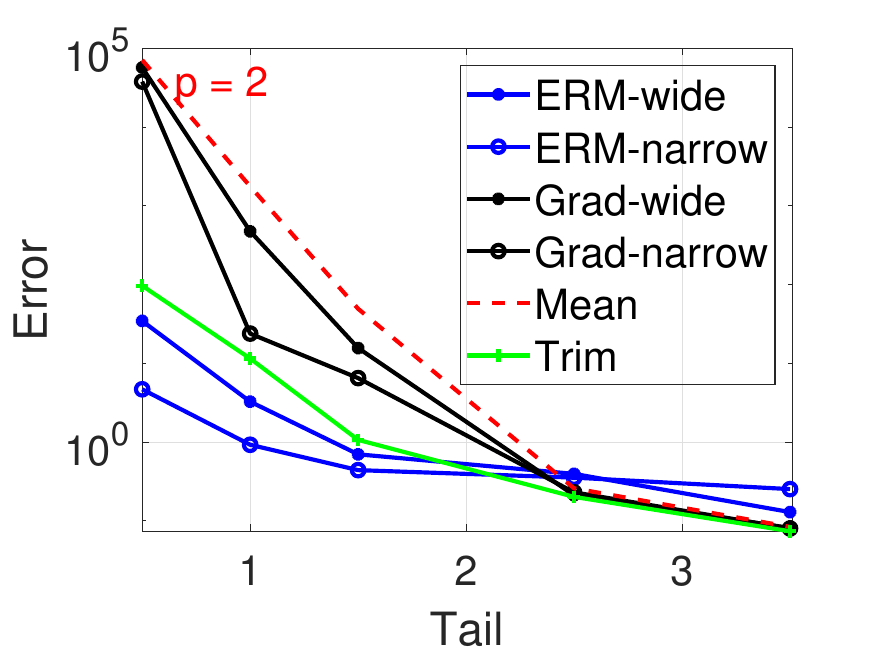}
	\includegraphics[width=2.4in]{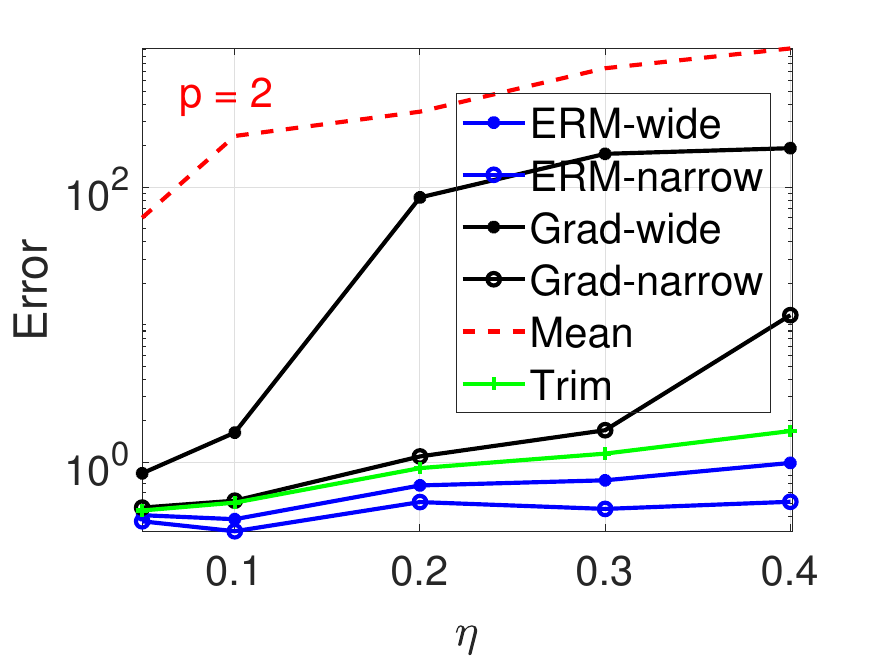}
	\includegraphics[width=2.4in]{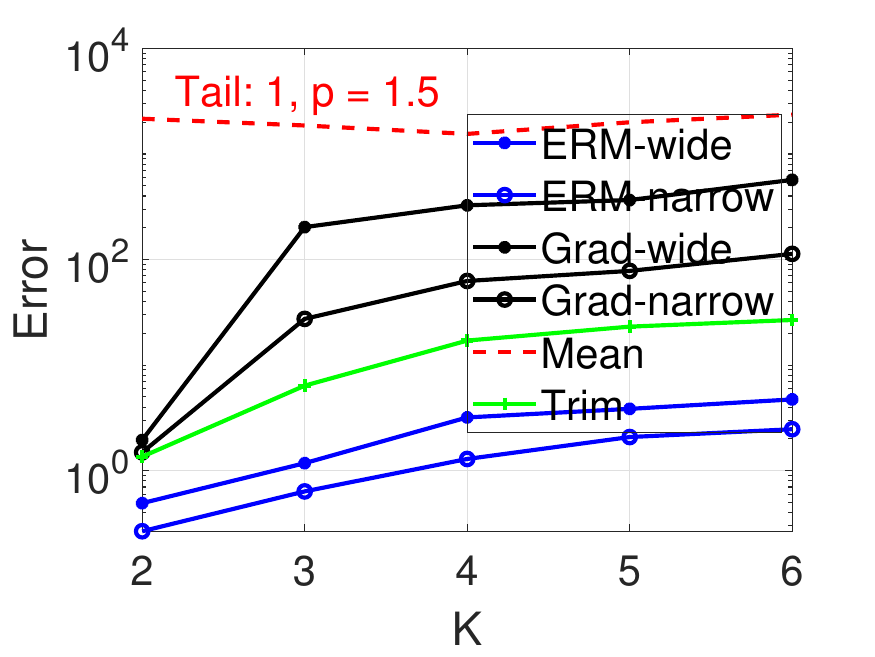}
	\includegraphics[width=2.4in]{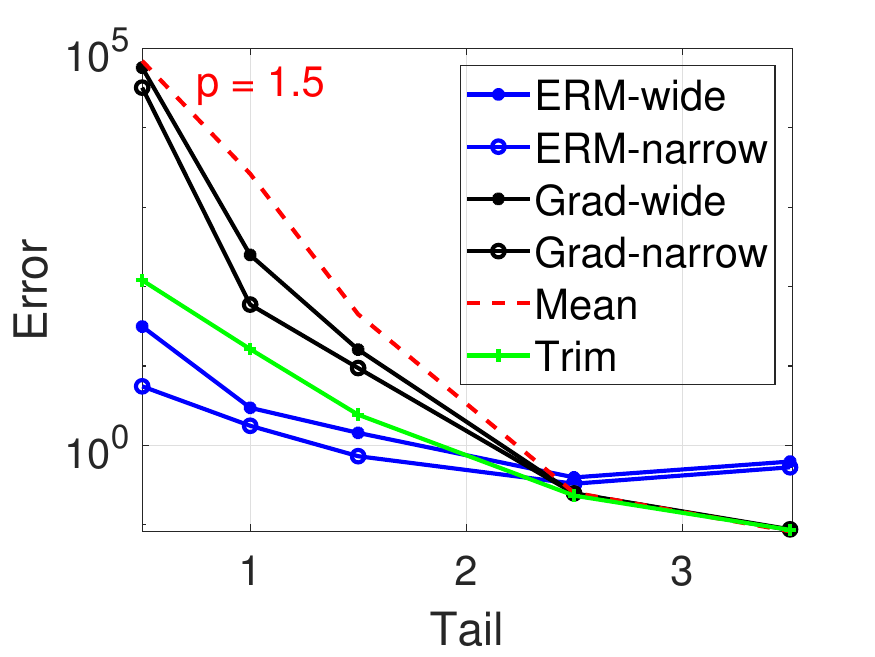}
	\includegraphics[width=2.4in]{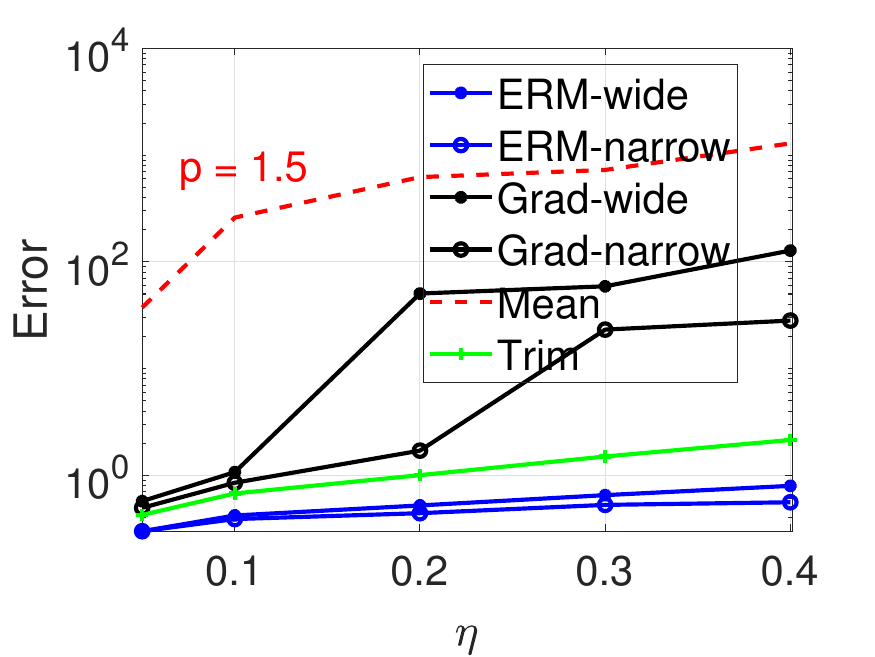}
	\caption{Comparison between six methods in $k$-means clustering problems.}\label{fig:Kmean}	
\end{figure}

\subsection{Comparison with geometric Median}\label{sec:sim:median}

Geometric median~\citep{minsker2015geometric, hsu2016loss} is another type of popular method for estimation problem in heavy-tailed setting.
This approach is the generalization of ``Median of mean'' estimator~\citep{BCL13}.
For a regression problem, the main framework of geometric median can be described as follows.

\newpage

\begin{enumerate}
	\item We divide data to $M$ subsets.
	\item Compute estimators $w_1, \ldots, w_M$ from $M$ subsets.
	\item For each $i \in [M]$, we compute the distance between $d_{ij} = \|w_i - w_j\|$  ($j \neq i, j \in [M]$).
	Compute $r_i$ be the median of $d_{ij}$'s.
	\item Compute $i_{\ast} := \arg\min_i r_i$ and
	return $\hat w = w_{i_{\ast}}$.
\end{enumerate}
We then compare our proposed ERM gradient method (ERM-narrow) with such geometric median estimator under the choice of dimension $d \in \{5, 10, 20\}$
and $a \in \{0.5, 0.6, 0.7, 0.8, 0.9, 1, 1.2, 1.6, 2\}$.
The results are given in Figure~\ref{fig:median}.

\begin{figure}[ht!]

\mbox{\hspace{-0.25in}
	\includegraphics[width=2.4in]{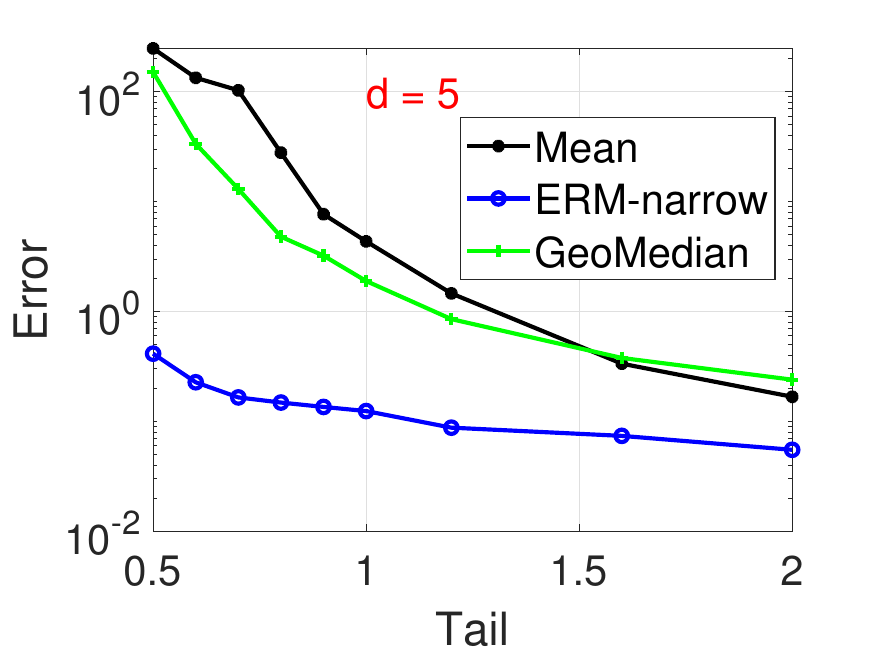}\hspace{-0.15in}
	\includegraphics[width=2.4in]{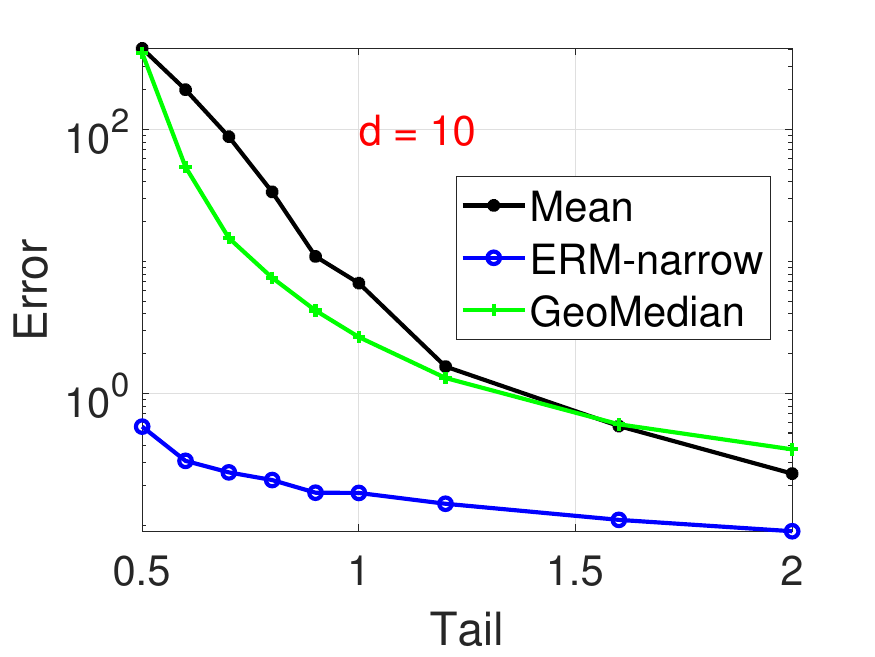}\hspace{-0.15in}
	\includegraphics[width=2.4in]{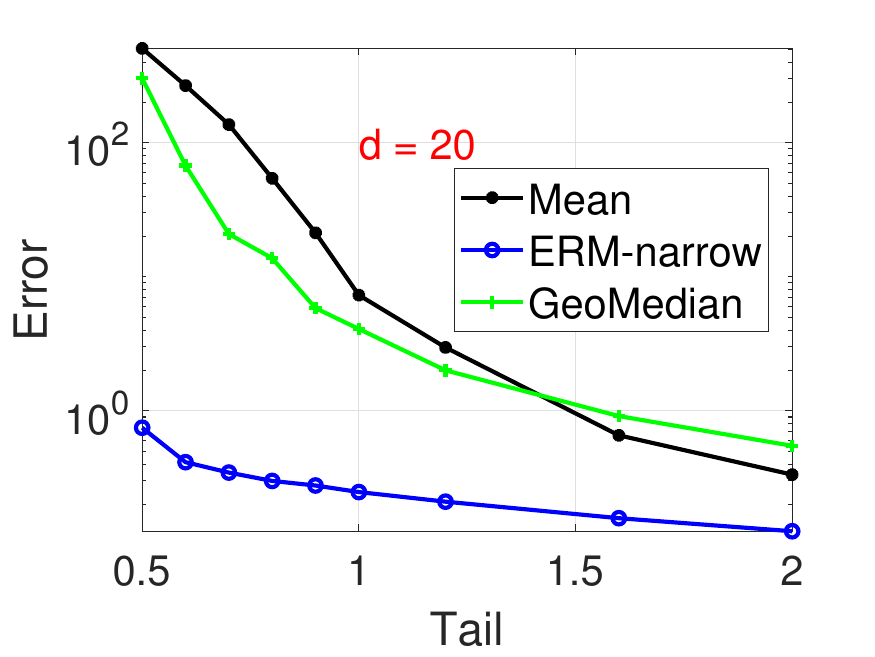}
}
	\caption{Comparison between geometric median methods and proposed ERM gradient method.}\label{fig:median}	
\end{figure}

\subsection{Combining with Deep Learning Framework}

We implement our proposed double-weighted gradient descent (Algorithm~\ref{alg:DW}) in the Pytorch platform and compare that with vanilla gradient descent,
clipped-norm gradient descent
and trimmed gradient descent methods.

\begin{remark}
    In clipped-norm gradient descent method, the update formula is
    $w^{(t+1)} = w^{(t)} - \gamma_t g^{(t)} / \|g^{(t)}\|$,
    where $g^{(t)} = \frac{1}{n} \sum_i \frac{\partial f_{w^{(t)}}(X_i)}{\partial w}$.
    In trimmed gradient descent method,  we compute the trimmed loss of sample $i$, i.e., $f_{trim}(X_i) = \min\{f(X_i), B\}$,
    ($B$ is a tuning parameter to truncate those large loss).
    Then update formula is
    $w^{(t+1)} = w^{(t)} - \gamma_t g_{trim}^{(t)}$,
    where $g_{trim}^{(t)} = \frac{1}{n} \sum_i \frac{\partial f_{trim, w^{(t)}}(X_i)}{\partial w}$.
\end{remark}

The data $y_i = f(x_i) + \epsilon_i$, where $\epsilon_i$'s are symmetrized Pareto random variable with shape parameter taken in $\{1.2, 1.4, 1.6, 1.8, 2.0\}$.
The underlying generating function $f$ is chosen as
$f(x) = \sin(x)$ [called as "Sin"],
$f(x) = \sin(x) \exp\{x\}$ [called as "Sin\_exp"],
and $f(x) = x^2 \cos(x)$ [called as "Cos\_x2"], respectively.
The considered neural network $\hat f$ is chosen to be two-layer ReLU network with 128 hidden units.

In the training stage, we choose 2000 points $x_i$'s from $[-\pi, \pi]$.
In the testing stage, we randomly sample 100 points $x_j$'s from $\text{Normal}(0, 1)$.
The average prediction error is reported, i.e., $\frac{1}{100} \sum_{j=1}^{100}|\hat f(x_j) - f(x_j)|$.
For the choice of $\alpha$, we set it to be the inverse of 95 \% quantile of $f_{w^{(0)}}(X_i)$.
For $B$ in trimmed gradient descent, we set it to $B = 1 / \alpha$.
Each case is replicated for 50 times and the corresponding result is given in Figure \ref{fig:torch}.

\begin{figure}[ht!]
\mbox{\hspace{-0.25in}
	\includegraphics[width=2.4in]{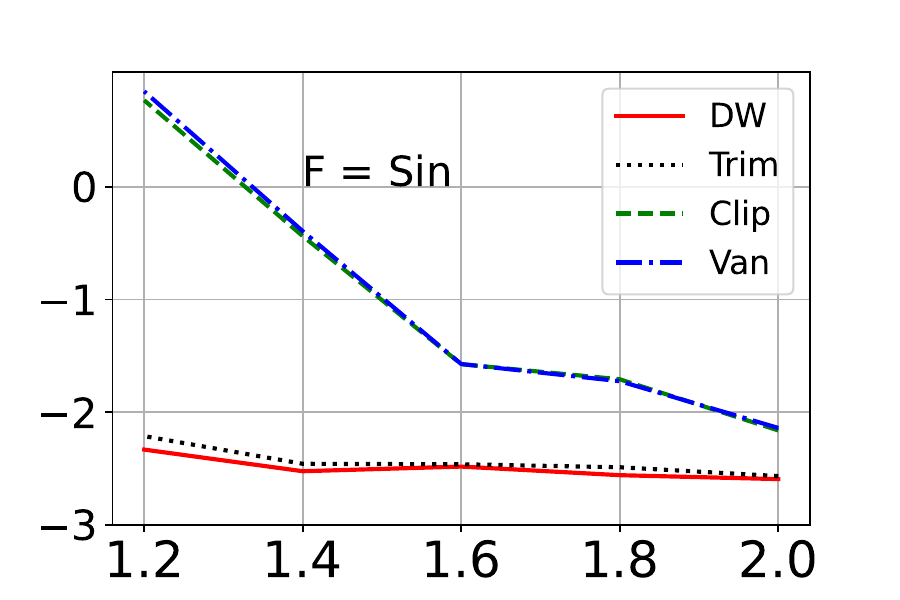}\hspace{-0.15in}
	\includegraphics[width=2.4in]{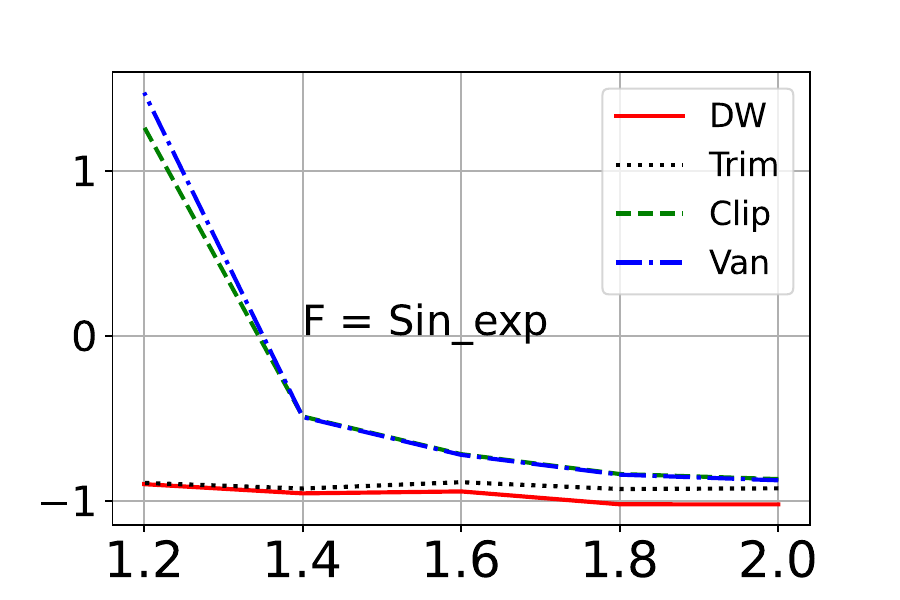}\hspace{-0.15in}
	\includegraphics[width=2.4in]{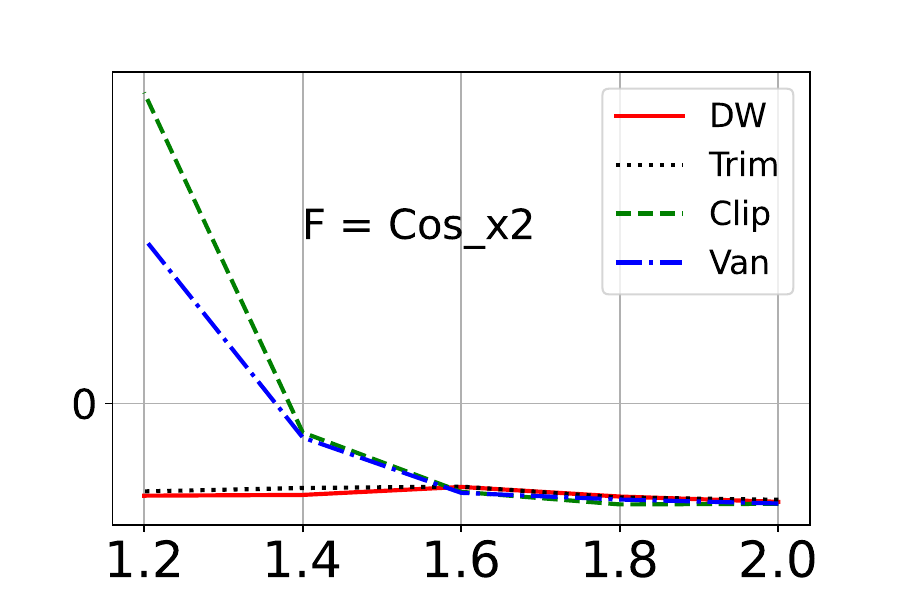}
}
 \caption{Prediction Error under Different Underlying Functions. The prediction error is reported under log-scale. (Lower is better.)}\label{fig:torch}	
\end{figure}

\newpage

Lastly, we report the relative computational time (i.e., $\frac{t_{dw}}{t_{van}}$, where $t_{dw}$ is the time for our double-weighted method training on single data set
and $t_{van}$ is similarly defined for vanilla gradient descent) in Table \ref{tab:time}.
\begin{table}[ht!]
    \centering
    \begin{tabular}{c|c|c|c}
    \hline
    \hline
         & "Sin" & "Sin\_exp" & "Cos\_x2" \\
        \hline
     mean    & 1.343 & 1.307 & 1.281\\
     \hline
     std    & 0.1528 & 0.1769 & 0.1048 \\
     \hline
     \hline
    \end{tabular}
    \caption{Table for relative computational time between proposed double-weighted algorithm and vanilla gradient descent method.}
    \label{tab:time}
\end{table}

\subsection{Findings}
From Figures~\ref{fig:regression} -~\ref{fig:Kmean}, we can see that the proposed ERM gradient outperforms other baselines when tails of data distributions are heavy.
This suggests that finding the optimizers by minimizing the risk values indeed improve the performances.
Truncation loss (trim) method has similar performance as robust gradient descent method. It indicates that estimating robust gradient coordinate-wisely is equivalent to truncating data in practice.
The choice of influence function $\phi$ matters the final outcomes. Choosing narrow function $\phi_{narrow}$ will lead to a more robust estimator.
Based on Figure~\ref{fig:median}, we can find that geometric median method is sub-optimal compared with the proposed method.
Although median-type method also has reasonable theoretical guarantees, it is not a satisfactory algorithm in practice.
Lastly, from Figure~\ref{fig:torch}, we can see that our proposed double weighted gradient descent method can be well embedded into Pytorch or other deep learning framework.
It can achieve the lowest prediction error via using multi-layer ReLU network approximation.
Additionally, from Table \ref{tab:time}, it reveals that the computational time of the proposed new algorithm is also comparable to the classical gradient descent method.

\newpage

\section{Conclusions}\label{sec:conclusion}

In this paper, we consider empirical risk minimization problem with heavy-tailed data, which is an important area in learning theory.
We assume a weaker moment condition that data does not have finite variance, but only has $p$-th moment with $1 < p < 2$.
In contrast to using truncation-based method, we directly work on estimating the excess risk values, where we adopt Catoni's method for robust estimation. The final optimizer is returned via minimizing the estimated excess risk.
For such optimizer, we establish the excess risk bounds by studying the properties of Catoni's influence functions and using the generic chaining techniques.
On the other hand, we propose an empirical risk-type gradient algorithms to address the computational challenges.
The proposed algorithm gives a computationally friendly way to compute the robust gradient and also leads to better performance in terms of estimation errors.
Compared with other competing baselines, our method is shown to be more robust under different types of outliers and data contamination.
Our findings unveil that estimator based on minimizing risk values can be practically better than common truncation methods.
Our method is also shown to be easy-to-implement in Pytorch platform.
Therefore it might be interesting and promising to study the proposed methodology in different large deep learning models as the future work.

\bibliographystyle{plainnat}
\bibliography{reference}


\clearpage

\appendix

\section{Proof of Results in Section \ref{sec:catoni}} \label{sec:proof2}

\begin{proof}[Proof of Lemma \ref{lem:nec.suf}]
	A necessary and sufficient condition for the existence of a function satisfying~\eqref{eq:C_HT} is given by
	\[
	(1 - x + C_{p} x^{1+\varepsilon}) (1+x+C_{p} x^{1+ \varepsilon}) \geq 1,~\forall~x\geq 0.
	\]
	Rearranging, this reduces to
	\[
	2 C_{p} x^{1+\varepsilon} + C^2_{p} x^{2(1+\varepsilon)} \geq x^2,~\forall~x\geq 0,
	\]
	which is equivalent to the condition
	\begin{equation} \label{eq:const}
	C^2_{p} x^{2\varepsilon} + 2 C_{p} x^{\varepsilon-1} > 1,~\forall~x>0.
	\end{equation}
	The minimum of the expression in the left hand side over~$x>0$ is achieved at
	\[
	x_{*} = \Big( \frac{1 - \varepsilon}{C_{p} \varepsilon} \Big)^{\frac{1}{1+\varepsilon}},
	\]
	and substituting this value in~\eqref{eq:const} and solving for~$C_{\varepsilon}$ produces the desired result.
\end{proof}

\begin{proof}[Proof of Theorem \ref{thm:Nprob_ineq}]
	As in~\citet{Cat12}, define
	\[
	r_n(\theta) = \sum_{i=1}^n \psi\Big( \alpha (X_i - \theta) \Big),
	\]
	and note that~$r_n(\theta)$ is non-increasing in~$\theta \in \mathbb{R}$. Using the upper bound on the influence function in~\eqref{eq:C_HT},
	\begin{align*}
	\mathbb{E}\Big[ \exp(r_n(\theta))  \Big] &= \Big( \mathbb{E} \Big[ \exp\Big(\psi\Big( \alpha (X_1 - \theta) \Big) \Big] \Big)^n\\
	& \leq \Big( \mathbb{E} \Big[ 1 + \alpha (X_1 -\theta) + C_{p} \alpha^{1+\varepsilon} |X_1 - \theta|^{1+\varepsilon} \Big] \Big)^n \\
	&= \Big(1 + \alpha (\mu -\theta) + C_{p} \alpha^{1+\varepsilon} \mathbb{E}|X_1 - \theta|^{1+\varepsilon} \Big)^n.
	\end{align*}
	We will use a convexity upper bound as follows. For any $a,b \geq 0$ and $0 < h < 1$,
	\begin{align} \label{eq:Con.Bd}
	(a+b)^{1+\varepsilon} &= \Big(h \frac{a}{h} + (1-h) \frac{b}{1-h} \Big)^{1+\varepsilon}, \nonumber \\
	&\leq h\Big(\frac{a}{h}\Big)^{1+\varepsilon} + (1-h)\Big(\frac{b}{1-h}\Big)^{1+\varepsilon} = \frac{a^{1+\varepsilon}}{h^{\varepsilon}} + \frac{b^{1+\varepsilon}}{(1-h)^{\varepsilon}}. \tag{CB}
	\end{align}
	Therefore, for any~$0 < h < 1$,
	\begin{equation} \label{eq:Prc}
	\mathbb{E}|X_1 - \theta|^{1+\varepsilon} \leq h^{-\varepsilon} \mathbb{E}|X_1 - \mu|^{1+\varepsilon} + (1-h)^{-\varepsilon} |\mu - \theta|^{1+\varepsilon}.
	\end{equation}
	This leads to worse constants  than in~\citet{Cat12}, and is the price to pay for the generalization. Using the above bound, we obtain
	\begin{align*}
	\mathbb{E}\Big[ \exp(r_n(\theta)) \Big] &\leq \Big(1 + \alpha (\mu -\theta) + h^{-\varepsilon} C_{p} \alpha^{1+\varepsilon} v + C_{p} \alpha^{1+\varepsilon} (1-h)^{-\varepsilon}|\mu - \theta|^{1+\varepsilon} \Big)^n \\
	&\leq \exp \Big( \alpha n(\mu - \theta) + n h^{-\varepsilon} C_{p} \alpha^{1+\varepsilon} v + n C_{p} \alpha^{1+\varepsilon} (1-h)^{-\varepsilon}|\mu - \theta|^{1+\varepsilon} \Big).
	\end{align*}
	Similarly, using the lower bound on the influence function in~\eqref{eq:C_HT}, we obtain by symmetric arguments
	\[
	\mathbb{E}\Big[ \exp(-r_n(\theta)) \Big] \leq \exp \Big( - \alpha n(\mu - \theta)  + n h^{-\varepsilon} C_{p} \alpha^{1+\varepsilon} v + n C_{p} \alpha^{1+\varepsilon} (1-h)^{-\varepsilon}|\mu - \theta|^{1+\varepsilon} \Big).
	\]
	Let~$\delta \in (0,1)$. As in~\citet{Cat12}, we define
	\begin{align*}
	B_{+}(\theta) &= (\mu - \theta) +  h^{-\varepsilon} C_{p} \alpha^{\varepsilon} v +  C_{p} \alpha^{\varepsilon} (1-h)^{-\varepsilon}|\mu - \theta|^{1+\varepsilon} + \frac{\log(2/\delta)}{\alpha n}, \\
	B_{-}(\theta) &= (\mu - \theta) - h^{-\varepsilon} C_{p} \alpha^{\varepsilon} v -  C_{p} \alpha^{\varepsilon} (1-h)^{-\varepsilon}|\mu - \theta|^{1+\varepsilon} - \frac{\log(2/\delta)}{\alpha n}.
	\end{align*}
	By the exponential Markov inequality, we have
	\begin{align} \label{eq:Probs}
	\mathbb{P} \Big\{ r_n(\theta) \geq n\alpha B_{+}(\theta)\Big\} &\leq \frac{\mathbb{E}\Big[ \exp(r_n(\theta)) \Big]}{\exp(\alpha n B_{+}(\theta))} \leq \delta/2, \nonumber \\
	\mathbb{P} \Big\{ r_n(\theta) \leq n\alpha B_{-}(\theta)\Big\} &\leq \frac{\mathbb{E}\Big[ \exp(-r_n(\theta)) \Big]}{\exp(-\alpha n B_{-}(\theta))} \leq \delta/2.
	\end{align}
	Note that the function~$B_{+}$ is a strictly convex function of~$\theta$ and $B_{+}(\theta) \rightarrow \infty$ as $|\theta| \rightarrow \infty$. Therefore,~$B_{+}$ has a unique minimum on~$\mathbb{R}$, which is achieved at
	\[
	\theta_{*} = \mu + \frac{1-h}{\alpha} \Big( \frac{1}{(1+\varepsilon) C_{p}} \Big)^{\frac{1}{\varepsilon}},
	\]
	so that
	\[
	\min_{\theta \in \mathbb{R}} B_{+}(\theta) = B_{+}(\theta_{*}) = h^{-\varepsilon} \alpha^{\varepsilon} C_{p} v - \frac{\varepsilon}{1+\varepsilon} \frac{1-h}{\alpha} \Big( \frac{1}{(1+\varepsilon) C_{p}} \Big)^{\frac{1}{\varepsilon}} + \frac{\log(2/\delta)}{\alpha n}.
	\]
	Suppose that this minimum is non-positive, i.e.
	\begin{equation}  \label{eq:Np_eq2}
	h^{-\varepsilon} \alpha^{1+\varepsilon} C_{p} v + \frac{\log(2/\delta)}{ n} \leq \frac{\varepsilon}{1+\varepsilon} (1-h) \Big( \frac{1}{(1+\varepsilon) C_{p}} \Big)^{\frac{1}{\varepsilon}}.
	\end{equation}
	Then the equation
	\begin{equation*}
	B_{+}(\theta) = 0
	\end{equation*}
	has a real root, and, if the inequality is strict, it has two real roots. Since~$B_{+}(\mu)>0$ and $\theta_{*} > \mu$, the roots are larger than~$\mu$.
	Letting~$z = \mu - \theta$, the equation
	\[
	\widehat{B}_{+}(z) = z +  h^{-\varepsilon} C_{p} \alpha^{\varepsilon} v +  C_{p} \alpha^{\varepsilon} (1-h)^{-\varepsilon}|z|^{1+\varepsilon} + \frac{\log(2/\delta)}{\alpha n},
	\]
	has a root~$z_{+}= \mu - \theta_{+}(\alpha)$ in~$[-1,0)$ using~\eqref{require:n:alpha3}.
	This is because $\widehat{B}_{+}(0) > 0$ and
	$\widehat{B}_{+}(-1) = h^{-\varepsilon} C_p \alpha^{\varepsilon} v + C_p \alpha^{\varepsilon}(1 - h)^{-\varepsilon} + \frac{\log(2/\delta)}{\alpha n} - 1 < 0$.
	Additionally, since~$|z|^{1+\varepsilon} > |z|$, we have that
	\[
	\widehat{B}_{+}(z) \geq z +  h^{-\varepsilon} C_{p} \alpha^{\varepsilon} v +  C_{p} \alpha^{\varepsilon} (1-h)^{-\varepsilon}|z| + \frac{\log(2/\delta)}{\alpha n}.
	\]
	Further on~$[-1,0)$, we have that
	\begin{equation}
	\widehat{B}_{+}(z) \geq z +  h^{-\varepsilon} C_{p} \alpha^{\varepsilon} v -  C_{p} \alpha^{\varepsilon} (1-h)^{-\varepsilon}z + \frac{\log(2/\delta)}{\alpha n}.
	\end{equation}
	We have~$\widehat{B}_{+}(z) \geq 0$ for
	\[
	\mu - \theta_{+}(\alpha) := z_{+} \geq -\frac{h^{-\varepsilon} C_{p} \alpha^{\varepsilon} v + \frac{\log(2/\delta)}{\alpha n}}{1 - C_{p} \alpha^{\varepsilon} (1-h)^{-\varepsilon}}
	\]
	Further using~\eqref{require:n:alpha1}, we have
	\[
	\mu - \theta_{+}(\alpha) := z_{+} \geq - 2 (h^{-\varepsilon} C_{p} \alpha^{\varepsilon} v + \frac{\log(2/\delta)}{\alpha n}).
	\]
	By the monotonicity of the root, we know $\widetilde{\mu}_{c} \leq \theta_{+}(\alpha)$.
	Thus, it holds
	\[
	\mu - \widetilde{\mu}_{c} := z_{+} \geq - 2 (h^{-\varepsilon} C_{p} \alpha^{\varepsilon} v + \frac{\log(2/\delta)}{\alpha n}).
	\]
   Symmetric arguments establish the bounds in other direction.
   Finally, by the special choice of $\alpha$ as given in \eqref{eq:main_alpha}, it is straightforward to compute that
   $|\widetilde{\mu}_{c} - \mu_c| \leq 4 (C_{p} v)^{\frac{1}{1+\varepsilon}} h^{\frac{-\varepsilon}{1+\varepsilon}} \Big( \frac{\log(2/\delta)}{n} \Big)^{\frac{\varepsilon}{1+\varepsilon}}$.
\end{proof}

\section{Proof of Results in Section \ref{sec:mf}}

\begin{proof}[Proof of Theorem~\ref{thm:discrete}]
	According to Theorem~\ref{thm:Nprob_ineq}, we know
	\begin{eqnarray}
	\mathbb P \bigg(|\hat \mu_f - m_f| \geq 2 (h^{-\varepsilon} C_p \alpha^{\varepsilon} v +
	\frac{\log(2 |\mathcal F| / \delta)}{\alpha n}) \bigg) \leq \frac{\delta}{|\mathcal F|}
	\end{eqnarray}
	for any fixed $f$.
	Therefore, we have
	\begin{eqnarray}
	& & \mathbb P \bigg(\sup_{f \in \mathcal F} |\hat \mu_f - m_f| \geq 2 (h^{-\varepsilon} C_p \alpha^{\varepsilon} v +
	\frac{\log(2 |\mathcal F| / \delta)}{\alpha n}) \bigg) \nonumber \\
	&\leq& \sum_{f \in \mathcal F} \mathbb P \bigg( |\hat \mu_f - m_f| \geq 2 (h^{-\varepsilon} C_p \alpha^{\varepsilon} v +
	\frac{\log(2 |\mathcal F| / \delta)}{\alpha n}) \bigg) \nonumber \\
	&\leq& |\mathcal F| \frac{\delta}{|\mathcal F|} = \delta.
	\end{eqnarray}
	Finally, by \eqref{ass:fast}, we arrive at that
	\[m_{\hat f} - m^{\ast} \leq 4 (h^{-\varepsilon} C_p \alpha^{\varepsilon} v +
	\frac{\log(2 |\mathcal F| / \delta)}{\alpha n})\]
	holds with probability at least $1 - \delta$.
\end{proof}

\begin{proof}[Proof of Lemma \ref{lem:key:obs1}]
To prove Lemma \ref{lem:key:obs1}, we need to first introduce the following Lemma \ref{lem:B} - Lemma \ref{lem:r}.

\begin{lemma}\label{lem:B}
	For any fixed $f \in \mathcal F$ and $\mu \in \mathbb R$,
	it holds
	\begin{eqnarray}
	B_f^{-}(\mu,0) \leq \bar r_f(\mu) \leq B_f^{+}(\mu,0),
	\end{eqnarray}
	and, therefore, $m_f - 2 h^{-\epsilon} C_p \alpha^{\epsilon} v \leq \bar \mu_f \leq m_f + 2 h^{-\epsilon} C_p \alpha^{\epsilon} v$. In particular,
	\[B_{\hat f}^{-}(\mu,0) \leq \bar r_{\hat f}(\mu) \leq B_{\hat f}^{+}(\mu,0).\]
	For any $\mu$ and $\eta$ such that $\bar r_{\hat f}(\mu) < \eta$, if $\eta$-condition also holds with
	$\eta$ being given in \eqref{eq:eta}, then
	\begin{eqnarray}
	m_{\hat f} \leq \mu + 2 h^{-\epsilon} C_p \alpha^{\epsilon} v + 2 \eta.
	\end{eqnarray}
\end{lemma}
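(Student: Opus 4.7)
The plan is to tackle the four assertions of the lemma in order, with the last implication being the most delicate.

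First, for the pointwise sandwich $B_f^{-}(\mu,0) \le \bar r_f(\mu) \le B_f^{+}(\mu,0)$, I would apply the two halves of \eqref{eq:C_HT} pointwise at $x = \alpha(f(X) - \mu)$. For the upper bound, take expectation and use $\log(1+y) \le y$ to get $\mathbb{E}[\phi(\alpha(f(X) - \mu))] \le \alpha(m_f - \mu) + C_p \alpha^p \mathbb{E}|f(X) - \mu|^p$; then the convexity inequality \eqref{eq:Con.Bd} yields $\mathbb{E}|f(X) - \mu|^p \le h^{-\epsilon} v + (1-h)^{-\epsilon}|m_f - \mu|^p$, and dividing by $\alpha$ recovers $B_f^{+}(\mu, 0)$. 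For the lower bound I would start from $\phi(x) \ge -\log(1 - x + C_p|x|^p)$, take expectation, apply Jensen's inequality for the convex map $-\log$ to push the expectation inside, linearize via $-\log(1-y) \ge y$, and close with the same convexity split.

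Second, for the two-sided bound on $\bar \mu_f$, note that $\bar r_f$ is non-increasing in $\mu$ (since $\phi$ is non-decreasing) and satisfies $\bar r_f(\bar \mu_f) = 0$. Substituting $\mu = m_f + 2 h^{-\epsilon} C_p \alpha^{\epsilon} v$ into $B_f^{+}(\mu, 0)$ gives $-h^{-\epsilon} C_p \alpha^{\epsilon} v + C_p \alpha^{\epsilon}(1-h)^{-\epsilon}(2 h^{-\epsilon} C_p \alpha^{\epsilon} v)^p$, which becomes non-positive once \eqref{require:n:alpha1} and the $\eta$-condition \eqref{eq:cond:alpha:eta} (at $\eta = 0$) are in force. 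Therefore $\bar r_f$ is already non-positive at that $\mu$, and monotonicity gives $\bar \mu_f \le m_f + 2 h^{-\epsilon} C_p \alpha^{\epsilon} v$. The symmetric argument handles the lower bound, and specializing to $f = \hat f$ yields the ``in particular'' clause.

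Third, for the final assertion I will not try to invert $B_{\hat f}^{-}(\mu, \eta) < 0$ directly, since as an inequality in $m_{\hat f} - \mu$ it also admits a spurious large-root branch that would give a vacuous bound. Instead, I set the test point $\mu' := m_{\hat f} - 2 h^{-\epsilon} C_p \alpha^{\epsilon} v - 2\eta$, so that $z_0 := m_{\hat f} - \mu' = 2 h^{-\epsilon} C_p \alpha^{\epsilon} v + 2\eta$. The lower bound from the first paragraph gives
\[
\bar r_{\hat f}(\mu') \ge B_{\hat f}^{-}(\mu', 0) = z_0 - h^{-\epsilon} C_p \alpha^{\epsilon} v - C_p \alpha^{\epsilon}(1-h)^{-\epsilon} z_0^p.
\]
Showing this is strictly greater than $\eta$ reduces to $C_p \alpha^{\epsilon}(1-h)^{-\epsilon} z_0^p < z_0/2 = h^{-\epsilon} C_p \alpha^{\epsilon} v + \eta$, that is, $C_p \alpha^{\epsilon}(1-h)^{-\epsilon} z_0^{p-1} < 1/2$. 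This follows directly from the $\eta$-condition after estimating $z_0^{p-1} \le 2^{p-1}(h^{-\epsilon} C_p \alpha^{\epsilon} v + 2\eta)^{p-1}$. Combining $\bar r_{\hat f}(\mu') > \eta > \bar r_{\hat f}(\mu)$ with monotonicity of $\bar r_{\hat f}$ forces $\mu' < \mu$, which is exactly $m_{\hat f} < \mu + 2 h^{-\epsilon} C_p \alpha^{\epsilon} v + 2 \eta$.

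The main obstacle is this third step: the raw inequality $B_{\hat f}^{-}(\mu, \eta) < 0$ is compatible with $m_{\hat f} - \mu$ being arbitrarily large, so naive algebraic inversion fails. Routing through the non-increasing property of $\bar r_{\hat f}$ at a carefully chosen test point $\mu'$ cuts off the large-root branch, and the $\eta$-condition is precisely the algebraic hypothesis that makes $\bar r_{\hat f}(\mu')$ strictly exceed $\eta$.
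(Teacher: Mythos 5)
Your proposal is correct and follows essentially the same route as the paper: sandwich $\bar r_f$ between $B_f^\pm$ via the two halves of \eqref{eq:C_HT} plus the convexity split \eqref{eq:Con.Bd}, then use monotonicity of $\bar r_f$ together with the $\eta$-condition to convert bounds on $\bar r$ into bounds on the location of $\mu$. The one place you add value is the third step, where you explicitly evaluate $\bar r_{\hat f}$ at the test point $\mu' = \mu_{\hat f}^-(\eta)$ and verify $\bar r_{\hat f}(\mu') > \eta$; the paper's phrasing (``$\mu$ is then above the largest solution to $B_{\hat f}^{-}(\mu,\eta)=0$'') implicitly relies on the same monotonicity argument to discard the spurious branch where $m_{\hat f}-\mu$ is large, but your version makes the elimination of that branch explicit rather than leaving it to the reader.
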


\begin{lemma}\label{lem:lip}
	Consider influence function $\phi(x) = - \textrm{sign}(x) \log(1 + |x| + C_p |x|^p)$.
	Then it is a Lipschitz function with a Lipschitz constant $L_p$ not exceeding $ \max\{(1+pC_p),p\}$. Furthermore, for any $0<\gamma<1$ the function  $\phi$ is globally H\"older with exponent $\gamma$.
\end{lemma}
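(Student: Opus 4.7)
The plan is to analyze $\phi'$ directly, splitting into the regimes $|x|\le 1$ and $|x|\ge 1$ to obtain the claimed Lipschitz constant, and then to leverage a sharper decay estimate on $\phi'$ to bootstrap to global H\"older regularity of any exponent $\gamma<1$.

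\textbf{Lipschitz bound.} By symmetry it suffices to work with $x>0$, where a direct differentiation gives
\[
\phi'(x) = \frac{1+pC_p\, x^{p-1}}{1+x+C_p\, x^p}.
\]
On $[0,1]$, the numerator is bounded by $1+pC_p$ (since $x^{p-1}\le 1$) while the denominator is $\ge 1$, so $|\phi'(x)|\le 1+pC_p$. On $[1,\infty)$ I claim $|\phi'(x)|\le p$: since $1\le p$ and $pC_p x^{p-1}\le pC_p x^p$ for $x\ge 1$,
\[
1+pC_p x^{p-1}\le p+pC_p x^p \le p\,(1+x+C_p x^p).
\]
Combining the two ranges gives $L_p\le \max\{1+pC_p,\,p\}$, and the mean-value theorem turns this pointwise bound on $\phi'$ into the Lipschitz estimate.

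\textbf{Sharp decay of $\phi'$.} For the H\"older step I first upgrade the bound to $|\phi'(t)|\le C_*/(1+|t|)$ with $C_*=C_*(p,C_p)$. On $|t|\le 1$, $1+|t|\le 2$ and the Lipschitz bound already suffices. On $|t|\ge 1$ a short algebraic manipulation yields
\[
(1+|t|)|\phi'(t)| \;=\; 1+\frac{pC_p|t|^{p-1}+(p-1)C_p|t|^p}{1+|t|+C_p|t|^p}\;\le\; 1+\frac{p}{|t|}+(p-1)\;\le\; 2p,
\]
which gives the advertised decay.

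\textbf{Global H\"older.} Fix $\gamma\in(0,1)$ and $x<y$. If $|y-x|\le 1$, the Lipschitz bound gives $|\phi(y)-\phi(x)|\le L_p |y-x|\le L_p|y-x|^\gamma$. If $|y-x|>1$, I integrate the decay estimate:
\[
|\phi(y)-\phi(x)|\;\le\; C_*\int_x^y\frac{dt}{1+|t|}.
\]
When $x,y$ share a sign, the integral evaluates to $\log\frac{1+\max(|x|,|y|)}{1+\min(|x|,|y|)}\le \log(1+|x-y|)$; when $x<0<y$, splitting the integral at $0$ gives $\log(1+|x|)+\log(1+y)\le 2\log(1+|x-y|)$, since both $|x|$ and $y$ are at most $|x-y|$. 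Finally, because $\log(1+u)/u^\gamma$ is continuous on $[0,\infty)$ and tends to $0$ at both endpoints (using $\gamma<1$), there exists $K_\gamma$ with $\log(1+u)\le K_\gamma u^\gamma$, so the two cases combine to a global H\"older constant $M_\gamma=\max(L_p,\,2C_*K_\gamma)$.

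\textbf{Main obstacle.} The only delicate point is the H\"older step for large separations. The naive triangle bound $|\phi(x)|+|\phi(y)|$ is too lossy when $|x|$ and $|y|$ are both huge but close (e.g.\ $y=x+2$ with $x$ large), since it would force us to control $p\log|x|+p\log|y|$ by $|x-y|^\gamma=2^\gamma$. One must instead keep the integral form and exploit the decay $|\phi'(t)|\lesssim 1/(1+|t|)$, which converts the logarithmic growth of $\phi$ into a power-law bound in $|x-y|$.
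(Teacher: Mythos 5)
Your proof is correct and follows essentially the same approach as the paper: the Lipschitz step (bounding $\phi'$ separately on $|x|\le 1$ and $|x|\ge 1$) is identical, and the H\"older step in both proofs is driven by the $O(1/|t|)$ decay of $\phi'$ at infinity. The paper packages that final step as an interpolated derivative bound $|\phi'(t)|\le C_{\gamma,p}/|t|^{1-\gamma}$ followed by integration, while you integrate the $1/(1+|t|)$ bound to a logarithm and then dominate $\log(1+u)$ by $K_\gamma u^\gamma$ --- a cosmetically different packaging of the same estimate, spelled out in more detail (sign-crossing case, failure of the naive triangle bound) than the paper's one-line conclusion.
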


\begin{lemma}\label{lem:r}
	Let $\mu_0 = m_{f^{\ast}} + A_{\alpha}(\delta)$. Then on the event,
	\[\Omega_{f^{\ast}}(\delta) := \{ \omega: |\hat \mu_{f^{\ast}} - m_{f^{\ast}}| \leq A_{\alpha}(\delta) \},\]
	the following inequalities hold:
	\begin{itemize}
		\item[i.]  $\hat r_{\hat f}(\mu_0) \leq 0$; ~
		ii. $\bar r_{f^{\ast}}(\mu_0) \leq 0$; ~
		iii. $-\hat r_{f^{\ast}}(\mu_0) \leq 2 L_p A_{\alpha}(\delta)$.
	\end{itemize}
\end{lemma}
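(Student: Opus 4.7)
The plan is to prove the three sub-claims in the order listed, using only monotonicity of $\hat r_f$ and $\bar r_f$ in $\mu$ (which follows from $\phi$ being non-decreasing), the defining property of $\hat f$ as the minimizer of $\hat\mu_f$, Lemma~\ref{lem:B}, and the Lipschitz continuity of $\phi$ given by Lemma~\ref{lem:lip}.

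For claim (i), I would first observe that since $\phi$ is non-decreasing, the map $\mu \mapsto \hat r_f(\mu)$ is non-increasing. By definition $\hat r_{\hat f}(\hat\mu_{\hat f})=0$, so $\hat r_{\hat f}(\mu_0) \leq 0$ whenever $\mu_0 \geq \hat\mu_{\hat f}$. Now $\hat\mu_{\hat f} \leq \hat\mu_{f^\ast}$ by the minimizing property of $\hat f$ in \eqref{def:muf}, and on the event $\Omega_{f^\ast}(\delta)$ we have $\hat\mu_{f^\ast} \leq m_{f^\ast} + A_\alpha(\delta) = \mu_0$. Chaining these inequalities gives $\mu_0 \geq \hat\mu_{\hat f}$, and (i) follows.

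For claim (ii), I would use the same monotonicity remark but applied to $\bar r_{f^\ast}$. By Lemma~\ref{lem:B}, $\bar\mu_{f^\ast} \leq m_{f^\ast} + 2h^{-\epsilon} C_p \alpha^\epsilon v$. Since $A_\alpha(\delta)$ in \eqref{e:A} equals $2h^{-\epsilon} C_p \alpha^\epsilon v + 2\log(2/\delta)/(\alpha n) \geq 2h^{-\epsilon} C_p \alpha^\epsilon v$, we obtain $\mu_0 = m_{f^\ast} + A_\alpha(\delta) \geq \bar\mu_{f^\ast}$. Non-increasingness of $\bar r_{f^\ast}$ then gives $\bar r_{f^\ast}(\mu_0) \leq \bar r_{f^\ast}(\bar\mu_{f^\ast})=0$, which is (ii).

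For claim (iii), the main step is to convert a bound on $|\mu_0 - \hat\mu_{f^\ast}|$ into a bound on $\hat r_{f^\ast}(\mu_0)$ via the Lipschitz property. On $\Omega_{f^\ast}(\delta)$, the bound $|\hat\mu_{f^\ast} - m_{f^\ast}| \leq A_\alpha(\delta)$ combined with $\mu_0 = m_{f^\ast} + A_\alpha(\delta)$ yields $0 \leq \mu_0 - \hat\mu_{f^\ast} \leq 2 A_\alpha(\delta)$, with non-negativity from the same argument used in (i). Writing
\begin{equation*}
-\hat r_{f^\ast}(\mu_0) = \hat r_{f^\ast}(\hat\mu_{f^\ast}) - \hat r_{f^\ast}(\mu_0) = \frac{1}{n\alpha}\sum_{i=1}^n \bigl[\phi(\alpha(f^\ast(X_i)-\hat\mu_{f^\ast})) - \phi(\alpha(f^\ast(X_i)-\mu_0))\bigr],
\end{equation*}
applying the $L_p$-Lipschitz bound of Lemma~\ref{lem:lip} term by term gives each summand bounded by $L_p \alpha |\mu_0 - \hat\mu_{f^\ast}|$, so the full expression is at most $L_p |\mu_0 - \hat\mu_{f^\ast}| \leq 2 L_p A_\alpha(\delta)$.

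I do not anticipate a real obstacle in this lemma; the only subtlety is being careful that all three inequalities sit on the correct side of zero. Claim (iii) is the one that uses new information (Lipschitz continuity), while (i) and (ii) are purely monotonicity arguments once the bound on $\hat\mu_{f^\ast}$ supplied by $\Omega_{f^\ast}(\delta)$ is in hand.
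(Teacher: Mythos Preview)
Your proposal is correct and follows essentially the same approach as the paper's proof: parts (i) and (ii) are handled via monotonicity of $\hat r_f$ and $\bar r_f$ together with the bound $\bar\mu_{f^\ast}\le m_{f^\ast}+2h^{-\epsilon}C_p\alpha^\epsilon v$ from Lemma~\ref{lem:B}, and part (iii) is obtained from the Lipschitz property of $\phi$ in Lemma~\ref{lem:lip} combined with $|\mu_0-\hat\mu_{f^\ast}|\le 2A_\alpha(\delta)$. The only cosmetic difference is that the paper bounds $|\hat\mu_{f^\ast}-\mu_0|$ via the triangle inequality through $m_{f^\ast}$, whereas you use the explicit interval $0\le \mu_0-\hat\mu_{f^\ast}\le 2A_\alpha(\delta)$; the conclusions are identical.
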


Combining above lemmas, we can see that, with probability at least  $1 - 2\delta$, it holds

\begin{eqnarray}
\bar r_{\hat f}(\mu_0) &\leq& \hat r_{\hat f}(\mu_0) + \bar r_{f^{\ast}}(\mu_0) - \hat r_{f^{\ast}}(\mu_0) +  |\bar r_{\hat f}(\mu_0) - \hat r_{\hat f}(\mu_0) - \bar r_{f^{\ast}}(\mu_0) + \hat r_{f^{\ast}}(\mu_0)| \nonumber \\
&\leq& \hat r_{\hat f}(\mu_0) + \bar r_{f^{\ast}}(\mu_0) - \hat r_{f^{\ast}}(\mu_0) \nonumber + \sup_{f \in \mathcal F}|\bar r_{\hat f}(\mu_0) - \hat r_{\hat f}(\mu_0) - \bar r_{f^{\ast}}(\mu_0) + \hat r_{f^{\ast}}(\mu_0)| \nonumber \\
&\leq& \hat r_{\hat f}(\mu_0) + \bar r_{f^{\ast}}(\mu_0) - \hat r_{f^{\ast}}(\mu_0) + Q(\mu_0, \delta) \nonumber \\
&\leq& 0 + 0 + 2 L_p A_{\alpha}(\delta) + Q(\mu_0, \delta) \nonumber \\
&=& 2 L_p A_{\alpha}(\delta) + Q(\mu_0, \delta), \label{eq:combine}
\end{eqnarray}
where the first inequality in \eqref{eq:combine} follows from the triangle inequality, the third inequality in \eqref{eq:combine} follows from the definition of quantile function $Q$ where we define the $1 - \delta$ quantile of $\sup_{f \in \mathcal F}|X_f(\mu) - X_{f^{\ast}}(\mu)|$ by $Q(\mu, \delta)$, i.e., the minimum possible $q$ satisfying that
\[\mathbb P(\sup_{f \in \mathcal F} |X_f(\mu) - X_{f^{\ast}}(\mu)| \leq q ) \geq 1 - \delta.\]
The fourth inequality in \eqref{eq:combine} is according to Lemma~\ref{lem:r} that
$\hat r_{\hat f}(\mu_0) \leq 0$, $\bar r_{f^{\ast}}(\mu_0) \leq 0$ and
$- \hat r_{f^{\ast}}(\mu_0) \leq 2 L_p A_{\alpha}(\delta)$.
This completes the proof of lemma.
\end{proof}

\begin{proof}[Proof of Lemma~\ref{lem:B}]
	We write $Y = \alpha(f(X) - \mu)$ and use the fact that
	$\phi(x) \leq \log(1 + x + C_p |x|^p)$. Then
	\begin{eqnarray}
	\exp\{\alpha \bar r_f(\mu)\} &\leq&
	\exp\{ \mathbb E[\log(1 + Y + C_p |Y|^p)]\} \nonumber \\
	&\leq& \mathbb E[1 + Y + C_p |Y|^p], \nonumber \\
	&=& 1 + \alpha (m_f - \mu) + C_p \mathbb E[|\alpha(f(X) - m_f +m_f - \mu)|^p] \nonumber \\
	&\leq& 1 + \alpha (m_f - \mu) + h^{-\epsilon} C_p \alpha^{1 + \epsilon} v
	+ C_p \alpha^{1+\epsilon} (1 - h)^{-\epsilon} |m_f - \mu|^p \nonumber \\
	&\leq& \exp\{\alpha B_f^{+}(\mu, 0)\},
	\end{eqnarray}
	where we use the convexity upper bound as follows,
	\[(a + b)^{1 + \epsilon} \leq \frac{a^{1 + \epsilon}}{h^{\epsilon}} + \frac{b^{1 + \epsilon}}{(1-h)^{\epsilon}}.\]
	Therefore, we have $\bar r_f(\mu) \leq B_{f}^{+}(\mu, 0)$ held for any $f \in \mathcal F$.
	Recall that $\bar \mu_f$ satisfies $\bar r_f(\mu) = 0$, therefore
	$\bar \mu_f \leq \mu_f^{+}(0) \leq m_f + 2 h^{-\epsilon} C_p \alpha^{\epsilon} v$. The other side of inequality is similar.

	If $\bar r_{\hat f}(\mu) \leq \eta$, then $B_{\hat f}^{-}(\mu,0) \leq \eta$ which is equivalent to
	$B_{\hat f}^{-}(\mu, \eta) \leq 0$. Note that
	$\bar r_{\hat f}(\mu)$ is a non-increasing function, $\mu$ is then above the largest solution to $B_{\hat f}^{-}(\mu, \eta) = 0$.
	Thus, $\mu_{\hat f}^{-}(\eta) \leq \mu$ which implies
	$m_{\hat f} \leq \mu + 2 h^{-\eta} C_p \alpha^{\epsilon} v + 2 \eta$. This concludes the proof.
\end{proof}

\begin{proof}[Proof of Lemma~\ref{lem:lip}]
	We can easily compute the derivative of $\phi(x)$ for $x \neq 0$.
	That is,
	\begin{eqnarray}
	\phi'(x) = \frac{1 + p C_p |x|^{p-1}}{1 + |x| + C_p |x|^p},
	\end{eqnarray}
 so $|\phi^\prime(x)|\leq p$
 if $|x|\geq 1$ and $|\phi'(x)| \leq 1+pC_p$ if $0<|x|<1$, showing the claimed Lipschitz property.
	Additionally, $|\phi'(x)| \leq p/|x|$ for all $x$, so for a given $0<\gamma<1$ we have
 $|\phi'(x)| \leq C_{\alpha,p}/|x^{1-\gamma}$. This gives the required global H\"older continuity.
 \end{proof}

\begin{proof}[Proof of Lemma~\ref{lem:r}]
	For (i.), on $\Omega_{f^{\ast}}(\delta)$ and by the definition of $\hat f$, we have
	\[\hat \mu_{\hat f} \leq \hat \mu_{f^{\ast}} \leq m_{f^{\ast}} + A_{\alpha}(\delta)  = \mu_0.\]
	Since $\hat r_{\hat f}$ is a non-increasing function of $\mu$,
	$\hat r_{\hat f}(\mu_0) \leq \hat r_{\hat f}(\mu_{\hat f}) = 0$.

	For (ii.), by Lemma~\ref{lem:B},
	$\bar \mu_{f^{\ast}} \leq m_{f^{\ast}} + 2h^{-\epsilon} C_p \alpha^{\epsilon} v \leq m_{f^{\ast}} + A_{\alpha}(\delta) = \mu_0$. Again by the fact that
	$\bar r_{f^{\ast}}$ is a non-increasing function, we have
	$\bar r_{f^{\ast}}(\mu_0) \leq \bar r_{f^{\ast}}(\bar \mu_{f^{\ast}}) = 0$.
	
	For (iii.), by Lemma~\ref{lem:lip}, we can get
	\begin{eqnarray}
	|\hat r_{f^{\ast}}(\mu_0)| &=& |\hat r_{f^{\ast}}(\hat \mu_{f^{\ast}}) - \hat r_{f^{\ast}}(\mu_0)| \leq L_p |\hat \mu_{f^{\ast}} - \mu_0| \nonumber \\
	&\leq& L_p(|\hat \mu_{f^{\ast}} - m_{f^{\ast}}| + |m_{f^{\ast}} - \mu_0|) \nonumber \\
	&\leq& 2 L_p A_{\alpha}(\delta).
	\end{eqnarray}
	This implies $- \hat r_{f^{\ast}}(\mu_0) \leq 2 L_p A_{\alpha}(\delta)$.
\end{proof}

\section{Proof of Results in Section \ref{sec:Q}}

\begin{proof}[Proof of Lemma~\ref{lem:chain}]
	We let $e_n(T) := \inf \{\epsilon: N(T, d, \epsilon/2) \leq N_{n}\}$
	with $N_n = 2^{2^n}$.
	We can construct a partition $\mathcal A_n^{\ast}$ such that
	$|\mathcal A_n^{\ast}| \leq 2^{2^n}$ and $\Delta(A) \leq e_n(T)$ for any $A \in \mathcal A_n^{\ast}$.

	By the definition of $e_n(T)$, we know that $e_{n+1}(T) \leq e_n(T)$ and for any $\epsilon < e_n(T)$, it holds
	$N(T,d, \epsilon/2) > N_n$, i.e., $N(T,d,\epsilon/2) \geq 1 + N_{n}$.
	So we  have
	\begin{eqnarray}
	& & (\log(1 + N_{n}))^{1/\beta}((e_n(T))^{p/2} - (e_{n+1}(T))^{p/2}) \nonumber \\
	&=& (\log(1 + N_{n-1}))^{1/\beta} \int_{e_{n+1}(T)}^{e_n(T)} \frac{p}{2} \epsilon^{p/2 - 1} d\epsilon \nonumber \\
	&\leq& \frac{p}{2} \int_{e_{n+1}(T)}^{e_n(T)} \epsilon^{p/2 - 1} (\log(N(T,d,\epsilon/2)))^{1/\beta} d\epsilon.
	\end{eqnarray}

	Note that $\log(1 + N_n) \geq 2^n \log 2$ for any $n \geq 0$, we sum over $n$ and get
	\[(\log 2)^{1/\beta} \sum_{n} 2^{(n)/\beta} ((e_{n}(T))^{p/2} - (e_{n+1}(T))^{p/2}) \leq \frac{p}{2} \int_0^{e_0(T)} \epsilon^{p/2 - 1} (\log(N(T,d,\epsilon/2)))^{1/\beta} d\epsilon. \]
	
	Furthermore,
	\begin{eqnarray}
	& & \sum_{n} 2^{(n)/\beta} ((e_{n}(T))^{p/2} - (e_{n+1}(T))^{p/2}) \nonumber \\
	& = & \sum_{n \geq 0} 2^{(n)/\beta} (e_{n}(T))^{p/2}  - \sum_{n \geq 1} 2^{(n-1)/\beta} (e_{n}(T))^{p/2} \nonumber \\
	&\geq& (1 - 2^{- 1/\beta}) \sum_{n \geq 0} 2^{n/\beta} (e_n(T))^{p/2}.
	\end{eqnarray}
	
	Therefore, we have that
	\begin{eqnarray}
	\sum_{n \geq 0} 2^{n/\beta} (e_n(T))^{p/2} &\leq& \frac{1}{(\log 2)^{1/\beta} (1 - 2^{-1/\beta}) }\frac{p}{2} \int_0^{e_0(T)} \epsilon^{p/2 - 1} (\log(N(T,d,\epsilon/2)))^{1/\beta} d\epsilon \nonumber \\
	&\leq& C_{\beta, p} \int_0^{\infty} \epsilon^{p/2 - 1} (\log(N(T,d,\epsilon/2)))^{1/\beta} d\epsilon. \nonumber
	\end{eqnarray}
	
	Finally, by the definition of $\gamma_{\beta,p}(T,d)$, we have
	\begin{eqnarray}
	& & \gamma_{\beta,p}(T,d) \nonumber \\
	&\leq& \sup_{t \in T} \sum_{n \geq 0} 2^{n/\beta} (\Delta(A_n^{\ast}(t)))^{p/2} \nonumber \\
	&\leq&  \sum_{n \geq 0} 2^{n/\beta} \sup_{t \in T} (\Delta(A_n^{\ast}(t)))^{p/2} \nonumber \\
	&\leq&  \sum_{n \geq 0} 2^{n/\beta} (e_n(T))^{p/2} \nonumber \\
	&\leq& C_{\beta, p} \int_0^{\infty} \epsilon^{p/2 - 1} (\log(N(T,d,\epsilon/2)))^{1/\beta} d\epsilon.
	\end{eqnarray}
	This completes the proof.
\end{proof}

\begin{proof}[Proof of Theorem \ref{thm:bound1}]

By recalling
\begin{eqnarray}\label{def:X}
X_f(\mu) = \frac{1}{n} \sum_{i=1}^n [\frac{1}{\alpha}\phi(\alpha(f(X_i) - \mu)) - \frac{1}{\alpha} \mathbb E[\phi(\alpha(f(X_i) - \mu))]],
\end{eqnarray}
we then know
\begin{eqnarray}
& & n(X_f(\mu) - X_{f'}(\mu)) \nonumber \\
& = & \sum_{i=1}^n \big[ \frac{1}{\alpha} \phi(\alpha(f(X_i) - \mu)) - \frac{1}{\alpha}\mathbb E[\phi(\alpha(f(X) - \mu))] - (\frac{1}{\alpha}\phi(\alpha(f'(X_i) - \mu)) - \frac{1}{\alpha}\mathbb E[\phi(\alpha(f'(X) - \mu))]) \big] \nonumber.
\end{eqnarray}

    Using Holder property of $\phi$, we have
\begin{eqnarray}
& & \mathrm{Var}[\frac{1}{\alpha} \phi(\alpha(f(X_i) - \mu)) - \frac{1}{\alpha}\mathbb E[\phi(\alpha(f(X) - \mu))] - (\frac{1}{\alpha}\phi(\alpha(f'(X_i) - \mu)) - \frac{1}{\alpha}\mathbb E[\phi(\alpha(f'(X) - \mu))])] \nonumber \\
&=& \frac{1}{\alpha^2} \mathrm{Var}[ \phi(\alpha(f(X_i) - \mu)) - \mathbb E[\phi(\alpha(f(X) - \mu))] - (\phi(\alpha(f'(X_i) - \mu)) - \mathbb E[\phi(\alpha(f'(X) - \mu))])] \nonumber \\
& \leq & \frac{1}{\alpha^2} \mathbb E[ (\phi(\alpha(f(X_i) - \mu)) - \phi(\alpha(f'(X_i) - \mu)))^2]  \nonumber \\
&\leq& \frac{C_{3p}^2}{\alpha^2} \mathbb E[ |\alpha(f(X_i) - f'(X_i))|^p] \nonumber \\
& = & \frac{C_{3p}^2 \alpha^p}{\alpha^2} (d_p(f,f'))^p,
\end{eqnarray}
where $d_p(f,f') := (\mathbb E[|f(X) - f'(X)|^p])^{1/p}$.
Additionally, we have
\begin{eqnarray}
& & \bigg| \mathbb E[\phi(\alpha(f(X) - \mu))] - \mathbb E[\phi(\alpha(f'(X) - \mu))]\bigg| \nonumber \\
&\leq& \mathbb E[\bigg|\phi(\alpha(f(X) - \mu)) -  \phi(\alpha(f'(X) - \mu))\bigg|]  \nonumber \\
&\leq& C_{3p}\mathbb E[|\alpha(f(X_i) - f'(X_i))|^{p/2}] \nonumber \\
&\leq& C_{3p} \alpha^{p/2} (D(f,f'))^{p/2}. \label{e:var.p}
\end{eqnarray}
Thus we obtain
\begin{eqnarray}
& & \bigg|\frac{1}{\alpha} \phi(\alpha(f(X_i) - \mu)) - \frac{1}{\alpha}\mathbb E[\phi(\alpha(f(X) - \mu))] - (\frac{1}{\alpha}\phi(\alpha(f'(X_i) - \mu)) - \frac{1}{\alpha}\mathbb E[\phi(\alpha(f'(X) - \mu))])\bigg| \nonumber \\
&\leq& \frac{2}{\alpha} C_{3p} \alpha^{p/2} (D(f,f'))^{p/2}.
\end{eqnarray}

Then we can apply Bernstein inequality to get
\begin{eqnarray}
& & \mathbb P( n |X_f(\mu) - X_{f'}(\mu)| > nt ) \nonumber \\
&\leq& 2 \exp\{- \frac{n^2 t^2}{2(n C_{3p}^2 \alpha^p d_p^p(f,f')/\alpha^2 + 2 C_{3p} \alpha^{p/2 - 1} (D(f,f'))^{p/2} n t/3) }\} \nonumber \\
& = & 2 \exp\{- \frac{n t^2}{2( C_{3p}^2 \alpha^{p-2} d_p^p(f,f') + 2 C_{3p} \alpha^{p/2 - 1} (D(f,f'))^{p/2} t/3) }\}.
\end{eqnarray}
We then recall the following lemma, which is Lemma 2.2.10 from~\citet{van1996weak}.
\begin{lemma}\label{lem:van1}
	Let $a,b > $ 0, assume that the random variables satisfy,
	\[\mathbb P(|X_i| > x) \leq 2 \exp\{- \frac{1}{2} \frac{x^2}{b + ax} \} \]
	for any $x > 0$. Then
	\[ \|\max_{1 \leq i \leq m} X_i \|_{\psi_1} \leq
	48(a \log(1 + m) + \sqrt{b} \sqrt{\log(1+m)}).\]
\end{lemma}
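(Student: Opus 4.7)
The plan is to prove this classical Bernstein-type maximal inequality by decomposing each $X_i$ into a sub-Gaussian and a sub-exponential component, matching the two regimes of the mixed tail bound. First, observe that the hypothesis $\mathbb{P}(|X_i|>x) \leq 2\exp\{-x^2/(2(b+ax))\}$ implies
\[
\mathbb{P}(|X_i|>x) \leq 2\min\bigl\{\exp(-x^2/(4b)),\, \exp(-x/(4a))\bigr\},
\]
with the sub-Gaussian bound dominating for $x \leq b/a$ and the sub-exponential bound dominating for $x \geq b/a$. Setting $Y_i = X_i\,\mathbf{1}\{|X_i|\leq b/a\}$ and $Z_i = X_i\,\mathbf{1}\{|X_i|> b/a\}$, these truncations inherit tails $\|Y_i\|_{\psi_2} \leq C\sqrt{b}$ and $\|Z_i\|_{\psi_1} \leq C a$ respectively, each bound coming from the standard Orlicz-norm criterion applied to the corresponding tail bound.

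Next I would invoke the generic Orlicz maximal inequality (Lemma 2.2.2 of~\citet{van1996weak}): for any convex Orlicz $\psi$ with $\psi(0)=0$,
\[
\bigl\| \max_{i\leq m} W_i \bigr\|_{\psi} \leq K_{\psi}\, \psi^{-1}(m)\, \max_{i\leq m} \|W_i\|_{\psi}.
\]
With $\psi = \psi_2$, this gives $\|\max_i |Y_i|\|_{\psi_2} \leq K_2 \sqrt{\log(1+m)}\cdot C\sqrt{b}$, and with $\psi = \psi_1$ it gives $\|\max_i |Z_i|\|_{\psi_1} \leq K_1 \log(1+m) \cdot Ca$. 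The standard comparison $\|W\|_{\psi_1} \leq C'\|W\|_{\psi_2}$ (which follows from the pointwise estimate $e^{x}-1 \leq e + e(e^{x^2}-1)$) converts the first bound into a $\psi_1$ bound. Combining via the triangle inequality for the Orlicz norm together with $\max_i |X_i| \leq \max_i |Y_i| + \max_i |Z_i|$ produces the claimed order
\[
\bigl\|\max_i |X_i|\bigr\|_{\psi_1} \lesssim a\log(1+m) + \sqrt{b\,\log(1+m)}.
\]

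The main obstacle is not the structure of the argument, which is fairly clean, but tracking universal constants to obtain the explicit constant $48$. For that a fully quantitative proof typically bypasses the decomposition and computes $E\exp(\max_i |X_i|/c)$ directly from the Bernstein tail, choosing a candidate $c = c_1 a\log(1+m) + c_2\sqrt{b\log(1+m)}$ and splitting the defining integral $\int_0^\infty e^{t}\mathbb{P}(\max_i |X_i|>ct)\,dt$ at the crossover $t \asymp b/(ac)$ between the two regimes. The delicate part is to show simultaneously that the Gaussian correction $e^{b/c^2}$ stays bounded (which forces $c \gtrsim \sqrt{b\log m}$) and that the exponential tail integral decays fast enough after subtracting $t$ from $ct/(4a)$ (which forces $c \gtrsim a\log m$), so that each contribution to $E[\exp(\max_i|X_i|/c)]-1$ is at most a fixed fraction of $1$. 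Balancing these two conditions with universal constants yields the stated prefactor.
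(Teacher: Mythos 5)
The paper does not prove this lemma at all — it simply cites it as Lemma 2.2.10 of van der Vaart and Wellner (1996) and uses it as a black box. Your proof sketch is, in fact, essentially the argument that appears in that reference: truncate at the crossover $b/a$, show the bounded piece is sub-Gaussian with $\psi_2$-norm $\lesssim\sqrt b$ and the tail piece is sub-exponential with $\psi_1$-norm $\lesssim a$, then invoke the generic Orlicz maximal inequality $\|\max_i W_i\|_{\psi}\leq K_\psi\,\psi^{-1}(m)\max_i\|W_i\|_\psi$ (Lemma~2.2.2 there) for $\psi=\psi_2$ and $\psi=\psi_1$ separately, convert the $\psi_2$ bound to $\psi_1$, and add. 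So your plan is correct and is the standard route, not a new one.

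One technical point worth fixing: the displayed claim
\[
\mathbb P(|X_i|>x)\leq 2\min\bigl\{\exp(-x^2/(4b)),\,\exp(-x/(4a))\bigr\}
\]
is \emph{false} as a pointwise inequality. The hypothesis only gives $x^2/(2(b+ax))\geq x^2/(4b)$ when $ax\leq b$ and $\geq x/(4a)$ when $ax\geq b$; in each regime one obtains the \emph{weaker} of the two exponents, not the stronger one that the $\min$ would require. This does not break your argument, because what you actually use is weaker: $Y_i=X_i\mathbf 1\{|X_i|\leq b/a\}$ has sub-Gaussian tails $\mathbb P(|Y_i|>x)\leq 2\exp(-x^2/(4b))$ (trivially for $x>b/a$, and from the hypothesis for $x\leq b/a$), and $Z_i=X_i\mathbf 1\{|X_i|>b/a\}$ has sub-exponential tails $\mathbb P(|Z_i|>x)\leq 2\exp(-x/(4a))$ (from the hypothesis for $x\geq b/a$, and by monotonicity for $x<b/a$ since $\mathbb P(|Z_i|>x)\leq\mathbb P(|X_i|>b/a)\leq 2\exp(-b/(4a^2))\leq 2\exp(-x/(4a))$). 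Replacing the $\min$ claim by these two one-sided statements makes the truncation step airtight. Your honest remark that the explicit constant $48$ requires a direct computation of $\mathbb E\exp(\max_i|X_i|/c)$ rather than the modular decomposition is also accurate; the reference carries an unspecified universal constant, and chasing $48$ requires exactly the kind of integral split you describe.
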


We write $\tilde X_i = |X_{f_i}(\mu) - X_{f_i^{'}}(\mu)|$. Then
Lemma~\ref{lem:van1} gives us that
\begin{equation} \label{e:max.n}
\|\max_{1 \leq i \leq m} \tilde X_i\|_{\psi_1} \leq 48 C_{3p}\bigg(\frac{2 \alpha^{p/2 - 1}}{3 n} (D_m)^{p/2} \log(1 + m) + \sqrt{\frac{\alpha^{p-2}}{n}} d_{p,m}^{p/2} \sqrt{\log(1 + m)} \bigg),
\end{equation}
where $D_m := \max_i D(f_i, f_i')$ and $d_{p,m} := \max_i d_{p}(f_i, f_i')$.


  We now derive a bound on $Q(\mu, \delta)$.
Consider an admissible sequence ($\mathcal B_n$) such that for all
$f \in \mathcal F$,
\[\sum_{n \geq 0} 2^n (\Delta_D(B_n(f)))^{p/2} \leq 2 \gamma_{1,p}(\mathcal F, D) \]
and an admissible sequence $(\mathcal C_n)$ such that for all $f \in \mathcal F$,
\[\sum_{n \geq 0} 2^{n/2} (\Delta_{d_p}(C_n(f)))^{p/2} \leq 2 \gamma_{2,p}(\mathcal F, d_p). \]
Now we define an admissible sequence by intersecting the elements of ($\mathcal B_{n-1}$) and ($\mathcal C_{n-1}$): set $\mathcal A_0 = \{\mathcal F\}$ and set
\[\mathcal A_n = \{B \cap C: B \in \mathcal B_{n-1} ~\textrm{and}~ C \in \mathcal C_{n-1}\}.\]

Define a sequence of finite sets $\mathcal F_0 = \{f\} \subset \mathcal F_1 \subset \cdots \subset \mathcal F$ such that
$\mathcal F_n$ contains a single point in each set of $\mathcal A_n$.
For any $f' \in \mathcal F$, denote by $\pi_n(f')$ the unique elements of $\mathcal F_n$ in $A_n(f')$. Then  by continuity of $\phi$,
\begin{eqnarray}
  X_{f'}(\mu)-X_f(\mu) = \sum_{k=0}^{\infty} (X_{\pi_{k+1}(f')}(\mu) - X_{\pi_k(f')}(\mu))
\end{eqnarray}
a.s. Using the fact that $\|\cdot\|_{\psi_1}$ is a norm and \eqref{e:max.n}
we have
\begin{eqnarray}
& & \|\sup_{f' \in \mathcal F} |X_{f}(\mu) - X_{f'}(\mu)| \|_{\psi_1} \nonumber \\
&\leq& \sum_{k=0}^{\infty} \|\max_{f' \in \mathcal F_{k+1}} |X_{\pi_{k+1}(f')(\mu)} - X_{\pi_k(f')}(\mu)| \|_{\psi_1} \nonumber \\
&\leq& 48 C_{3p} \sum_k \bigg(\frac{2 \alpha^{p/2 - 1}}{3 n} (\Delta_D(B_k(f')))^{p/2} \log(1 + 2^{2^{k+1}}) + \sqrt{\frac{\alpha^{p-2}}{n}} (\Delta_{d_p}(C_k(f')))^{p/2} \sqrt{\log(1 + 2^{2^{k+1}})} \bigg) \nonumber \\
&\leq& 192 \log(2) C_{3p} \sum_k \bigg(\frac{2 \alpha^{p/2 - 1}}{3 n} (\Delta_D(B_k(f')))^{p/2} 2^k + \sqrt{\frac{\alpha^{p-2}}{n}} (\Delta_{d_p}(C_k(f')))^{p/2} 2^{k/2} \bigg) \nonumber \\
&\leq& 384 \log(2) C_{3p} \left(\frac{2 \alpha^{p/2-1}}{3 n} \gamma_{1,p}(\mathcal F, D) + \sqrt{\frac{\alpha^{p-2}}{n}} \gamma_{2,p}(\mathcal F, d_p)\right),
\end{eqnarray}
where we have use the fact that $\log(1 + 2^{2^{k+1}}) \leq 4 \log(2) 2^k$.

Since
\[ X \leq \|X\|_{\psi_1} \log (2 /\delta)\]
with probability at least $1-\delta$ for any sub-exponential random variable,
we conclude that
\[\mathbb P\bigg (\sup_{f \in \mathcal F} |X_f(\mu) - X_{f^{\ast}}(\mu)| \leq 384 \log(2) C_{3p} \left(\frac{2 \alpha^{p/2-1}}{3 n} \gamma_{1,p}(\mathcal F, D) + \sqrt{\frac{\alpha^{p-2}}{n}} \gamma_{2,p}(\mathcal F, d_p))\log(2/\delta) \right)\bigg) \geq 1 - \delta.\]
In particular,
\begin{eqnarray}
Q(\mu, \delta) \leq 384 \log(2) C_{3p} \log (2 /\delta) \left(\frac{2 \alpha^{p/2-1}}{3 n} \gamma_{1,p}(\mathcal F, D) + \sqrt{\frac{\alpha^{p-2}}{n}} \gamma_{2,p}(\mathcal F, d_p)\right)  \label{Q:bound1}
\end{eqnarray}
for every $\mu$.

We put together
\eqref{e:gen.cat},
\eqref{Q:bound1} and the obvious observation
$$
\mathbb E|W- \mathbb E W|^p \leq 2^p \mathbb E|W|^p
$$
valid for any random variable $W$ with a finite $p$th moment. This
gives us the desired result.
\end{proof}

\newpage

\section{Proof of Results in Section \ref{sec:example}}

\begin{proof}[Proof of Proposition \ref{prop:l2}]

It is straightforward to see that
\begin{eqnarray}
|(g(Z_i) - Y_i)^2 - (g'(Z_i) - Y_i)^2
&\leq&  d_{\infty} (g, g')\bigl(|Y_i-g(Z_i)| + |Y_i-g'(Z_i)|\bigr).
\end{eqnarray}
 Thus
\[d_{X,p}(f_g,f_{g'}) \leq  d_{\infty}(g,g')  \left[\left(\frac1n \sum_{i=1}^n |Y_i-g(Z_i)|^p\right)^{1/p}
+ \left(\frac1n \sum_{i=1}^n |Y_i-g'(Z_i)|^p\right)^{1/p}
\right].
\]
By Chebyshev's inequality,
it holds that
\[\frac{1}{n} \sum_{i=1}^n |Y_i|^p \leq \mathbb E[|Y|^p] + \sqrt{8 v / n \delta}\]
with probability at most $\delta / 8$.
Choosing $\Delta$ to be upper bound of $d_{\infty}(g,g')$ for any $g,g' \in \mathcal G$, we then have
\[d_X(f,f') \leq 2^{1 + (2-p)/p} d_{\infty}(g,g') \Big(\Delta^p + \mathbb E[|Y|^p] + \sqrt{8 v / n \delta}\Big)^{1/p}\]
holds with probability at least $1 - \delta/8$.
By definition, it is easy to see that $\gamma_{2,p}(\mathcal G, d_1) \leq c \gamma_{2,p}(\mathcal G, d_2)$ for any distances $d_1, d_2$ satisfying $d_1 \leq c d_2$.
Then, we know that
$$\Gamma_{\delta} \leq \Gamma_{\delta}(\Delta) := 2^{1 + (2-p)/p} (\Delta^p + \mathbb E[|Y|^p] + \sqrt{8 \sigma^2 / n \delta})^{1/p} \cdot \gamma_{2,p}(\mathcal G, d_{\infty}).$$
By choosing $\Delta$ large enough, it holds $\Gamma_{\delta}(\Delta) \geq \Delta \geq \text{diam}_{d_p}(\mathcal F)^{p/2}$.

\end{proof}

\begin{proof}[Proof of Proposition \ref{prop:kernel}]

By representer theorem, the optimizer of \eqref{obj:kernel} has the form, $\hat h(x) = \sum_{i=1}^n c_i K(x_i,x)$.
Therefore, solving \eqref{obj:kernel} requires handling with an $n$ by $n$ matrix, which is computationally expensive in most cases. In the following, we prove a stronger version. We consider a smaller Hilbert space $\mathcal H_s$ instead of $\mathcal H$.

Note that the kernel $K$ is a Mercer kernel which admits the approximation
$$K(x,y) = \sum_{j=1}^{\infty} \lambda_j \varphi(v_j,x) \varphi(v_j,y).$$
We define a smaller RKHS space
$$\mathcal {H}_s = \{h(x): h(x) = \sum_{i=s}^S c_i \varphi(v_i,x), c_i \in \mathbb R\},$$
where $v_i, i = 1,\ldots, S$ are $S$ features with $S << n$.
Then we practically solve the following estimator,
\begin{eqnarray}
\hat f_{H_s} = \arg\min_{f = L \circ h \in L \circ \mathcal H_s} \{\hat \mu_f + \lambda_n \|h\|_{\mathcal H_s}^2 \}.
\end{eqnarray}

Let $\tilde m^{\ast} := \min_{f \in L \circ \mathcal H_s} m_{f}$.
Given $v_1, \ldots, v_S$ and recalling the fact that
$\gamma_{\beta,p}(L \circ \mathcal H_s, d) \leq \gamma_{\beta,p}(\mathcal F, d)$
for any $\beta$, distance $d$ and sub-space $\mathcal H_s$, we apply Theorem~\ref{thm:bound1} or Theorem~\ref{thm:bound2} (simply modifying the proof by setting $A_{\alpha}(\delta) = h^{-\epsilon} C_p \alpha^{\epsilon} v + \frac{\log(2/\delta)}{\alpha n} + \lambda_n + \text{err}$) and obtain \textbf{stronger result},
\begin{eqnarray}
m_{\hat f_{H_s}} - \tilde m^{\ast} \leq
6 L_p (h^{-\epsilon} C_p \alpha^{\epsilon} v + \frac{\log(2/\delta)}{\alpha n} + \lambda_n + \text{err}) + 2 Q_{1,\mathcal H_s}(\delta) \label{eq:kernel}
\end{eqnarray}
holds with probability $1 - \delta$.
Here $\text{err}$ is an approximation error (which will be explain later in this section) 
and
\begin{eqnarray}
Q_{1,\mathcal H_s}(\delta) &=& L_1 C_{3p} \log (2 /\delta) (\frac{2 \alpha^{p/2-1}}{3 n} \gamma_{1,p}(L \circ \mathcal H_s, D) + \sqrt{\frac{\alpha^{p-2}}{n}} \gamma_{2,p}(L \circ \mathcal H_s, d_p)) \nonumber.
\end{eqnarray}
\begin{remark}\label{rmk:cover}
We can obtain the upper bounds of
$\gamma_{1,p}(L \circ \mathcal H_s, D), \gamma_{2,p}(L \circ \mathcal H_s, d_p))$ by computing the covering number
$N(L \circ \mathcal H_s, D, \epsilon / 2)$
and
$N(L \circ \mathcal H_s, d_p, \epsilon / 2)$ in specific cases.
For example,
suppose loss function $L$ is $c_1$-Lipschitz continuous with respect to argument $h(x)$.
Then
$N(L \circ \mathcal H_s, D, \epsilon / 2) \leq
N(\mathcal H_s, D, \epsilon / 2 c_1)$
and
$N(L \circ \mathcal H_s, d_p, \epsilon / 2) \leq N(\mathcal H_s, d_p, \epsilon / 2 c_1)$.
Write $\mathbb C = \{(c_1, \ldots, c_s): c_i \in [-b,b]\}$ and assume
eigen-functions satisfy $\max_i \sup_{x} \varphi(v_i,x) \leq B$ and $\max_i \mathbb E[|\varphi(v_i, X)|^p]^{1/p} \leq B$.
We know
$N(\mathcal H_s, D, \epsilon / 2 c_1) \leq N(\mathbb C, \ell_1, \epsilon / 2 c_1 B)$
and
$N(\mathcal H_s, d_p, \epsilon / 2 c_1) \leq N(\mathbb C, \ell_p, \epsilon / 2 c_1 B S^{(p-1)/p})$.
Finally, it is know that
$N(\mathbb C, \ell_1, \epsilon / 2 c_1 B) = O\big((\frac{S}{\epsilon})^{S}\big) $
and
$N(\mathbb C, \ell_p, \epsilon / 2 c_1 B S^{(p-1)/p}) = O\big((\frac{\sqrt{S}}{\epsilon})^{S}\big)$.
Upper bounds of
$\gamma_{1,p}(L \circ \mathcal H_s, D), \gamma_{2,p}(L \circ \mathcal H_s, d_p)$ is then obtained from Lemma~\ref{lem:chain}.
\end{remark}

Then problem is reduced to understanding the difference $\tilde m^{\ast} - m^{\ast}$.
By the definition, we know
\begin{eqnarray}
\tilde m^{\ast} - m^{\ast} = \tilde m^{\ast} - m_{f_0} + m_{f_0} - m^{\ast} \leq m_{f_0} - m^{\ast} =: \text{err} \label{eq:err}
\end{eqnarray}
for any $f_0 \in L \circ \mathcal H_s$.
We need to find a suitable $f_0 = L \circ h_0$ such that $m_{f_0} - m^{\ast}$ is as small as possible
(i.e., approximating $L \circ h^{\ast}$ as close as possible).
By definition of $m_f$, we have
\begin{eqnarray}
m_{f_0} - m^{\ast} &=& \mathbb E [L(Y - h_0(X))] - \mathbb E[L(Y - h^{\ast}(X))] \nonumber \\
&\leq& \mathbb E | C(Y) (h_0(X) - h^{\ast}(X))| \nonumber \\
&\leq& \sqrt{E [C^2(Y)]} \sqrt{\mathbb E [(h_0(X) - h^{\ast}(X))^2]} \nonumber \\
&\leq& C \sqrt{\mathbb E [(h_0(X) - h^{\ast}(X))^2]}.   \label{eq:m-diff-middle}
\end{eqnarray}
by adjusting constant $C$. The last inequality uses the assumption that $C(Y)$ is square integrable.

Since $K$ is a Mercer kernel which satisfies
$K(x,y) = \sum_{i=1}^{\infty} \lambda_i \varphi_i(x) \varphi_i(y)$ with $\{\varphi_i(\cdot)\}$ are orthonormal bases in $L_2(X)$, $\lambda_i$ are non-increasing.
Then we know
$\hat K(x,y) = \sum_{j=1}^S \lambda_{i_j} \varphi_{i_j}(x)\varphi_{i_j}(y)$.
In addition, $h^{\ast}$ can be decomposed as $h^{\ast} = \sum_{i=1}^{\infty} a_i^{\ast} \varphi_i(x)$
satisfying that $\sum_{i=1}^{\infty} (a_i^{\ast})^2 / \lambda_i \leq 1$.
To this end, we deliberately choose $h_0(x) = \sum_{j=1}^S a_{i_j}^{\ast} \varphi_{i_j}(x)$.
\begin{eqnarray}
\|h_0(x) - h^{\ast}(x)\|_{L_2} &=& \sqrt{\int |h_0(x) - h^{\ast}(x)|^2 dx}
\nonumber \\
&\leq& \sqrt{\sum_{i=1}^{\infty} (a_{i}^{\ast})^2 - \sum_{j=1}^{S} (a_{i_j}^{\ast})^2 } \nonumber \\
&\leq& \sqrt{\lambda_{i_0}}, \label{eq:eigen}
\end{eqnarray}
where $i_0 = \arg\min\{ i: ~ i ~ \text{is not in}~ i_1, \ldots, i_S\}$.
The last inequality \eqref{eq:eigen} uses the fact that
\[\sum_{i=1}^{\infty} (a_{i}^{\ast})^2 - \sum_{j=1}^{S} (a_{i_j}^{\ast})^2
= \sum_{i = i_0}^{\infty} (a_i^{\ast})^2
\leq \lambda_{i_0} (\sum_{i = i_0}^{\infty} (a_i^{\ast})^2 / \lambda_i ) \leq \lambda_{i_0} \Big(\sum_{j=1}^{\infty} (a_{j}^{\ast})^2 / \lambda_{j} \Big) \leq \lambda_{i_0}. \]
Therefore, with \eqref{eq:m-diff-middle}, we have
\[\text{err} = m_{f_0} - m^{\ast} \leq C \sqrt{\lambda_{i_0}}\]
and plug this back into \eqref{eq:kernel} to conclude the analysis of excess risk, $m_{\tilde f_{H_s}} - m^{\ast}$.
Finally, note that $\text{err} = 0$ when $\mathcal H_s = \mathcal H$.
This completes the proof.
\end{proof}

\newpage

\section{Proof of Results in Section \ref{sec:algorithm}} \label{sec:proofH}

\begin{proof}[Proof of Theorem~\ref{thm:convergence:alg}]
The proof consists of two main steps.

\noindent \textit{Step 1}. The key step is to prove the uniform concentration bounds of differences between gradients $g^{(t)}$ and their expectations.
Let $r_{n,w}(\theta) = \frac{1}{n\alpha} \sum_{i=1}^n \psi(\alpha(\nabla f_w(X_i) - \theta))$ and we let $\hat \theta_w$ be the solution to $r_{n,w}(\theta) = 0$.
Our first goal is to compare the difference between $\hat \theta_{w_2} - \hat \theta_{w_1}$ for $w_1 \neq w_2$.

We consider the following two cases one by one. Case 1 is the special case of Case 2.
\begin{itemize}
	\item[] \textit{Case 1}. It holds $|\nabla f_{w_1}(X) - \nabla f_{w_2}(X)| \leq R |w_1 - w_2|$ for any $X$.  (That is, Assumption \textbf{A1} is replaced by bounded Lipschitz condition.)
	\item[] \textit{Case 2}. It holds $|\nabla f_{w_1}(X) - \nabla f_{w_2}(X)| \leq R_{\eta} |w_1 - w_2|$ with $1 - \eta$ probability.
\end{itemize}

In the first case, we know that
\begin{eqnarray}
\frac{1}{n\alpha} \sum_{i=1}^n \psi(\alpha(\nabla f_{w_1}(X_i) - R|w_1 - w_2| - \theta))  \leq r_{n,w_2}(\theta) \leq \frac{1}{n\alpha} \sum_{i=1}^n \psi(\alpha(\nabla f_{w_1}(X_i) + R|w_1 - w_2| - \theta))
\end{eqnarray}
Since $\psi(\cdot - \theta)$ is non-increasing, then $\hat \theta_{w_2}$ is no greater than the solution to
$\sum_{i=1}^n \psi(\alpha(\nabla f_{w_1}(X_i) + R|w_1 - w_2| - \theta)) = 0$ and
is no smaller than $\sum_{i=1}^n \psi(\alpha(\nabla f_{w_1}(X_i) - R|w_1 - w_2| - \theta)) = 0$.
It is also easy to see that $\hat \theta_{w_1} \pm R|w_1 - w_2|$s are the solutions to $\sum_{i=1}^n \psi(\alpha(\nabla f_{w_1}(X_i) \pm R|w_1 - w_2| - \theta)) = 0$, respectively.
Therefore, we have
\begin{eqnarray}\label{w:diff}
|\hat \theta_{w_1} - \hat \theta_{w_2}| \leq R|w_1 - w_2|.
\end{eqnarray}

In the second case, without loss of generality,
we consider the following influence function
\begin{eqnarray}
\psi(x) =
\begin{cases}
\log \big(1 - \frac{p-1}{p}(p C_p)^{-\frac{1}{p-1}} \big)& \text{if}~ x \leq - (p C_p)^{-\frac{1}{p-1}}, \\
\log(1 + x + C_p|x|^p) &  \text{if} ~ - (p C_p)^{-\frac{1}{p-1}} \leq x \leq 0, \\
- \log(1 - x + C_p|x|^p) &  \text{if} ~ 0 < x \leq (p C_p)^{-\frac{1}{p-1}}, \\
- \log \big(1 - \frac{p-1}{p}(p C_p)^{-\frac{1}{p-1}} \big) & \text{if}~ x \geq (p C_p)^{-\frac{1}{p-1}}.
\end{cases}
\end{eqnarray}

We define $\tilde X_{i,\eta}$ be the truncation version of $X_i$ at level $\eta$, that is,
\begin{eqnarray}\label{eq:truncX}
\tilde X_{i,\eta} = X_i \mathbf 1_{|X_i| \leq B_{\eta}},
\end{eqnarray}
where $B_{\eta}$ is the largest positive constant satisfying that
$|\nabla f_{w_1}(X) - \nabla f_{w_2}(X)| \leq R_{\eta}|w_1 - w_2|$ for any $|X| \leq B_{\eta}$.

Therefore, we are able to define $\tilde r_{n,w,\eta}(\theta) = \frac{1}{n \alpha} \sum_{i=1}^n \psi(\alpha(\nabla f_{w}(\tilde X_{i,\eta}) - \theta))$ and let $\hat \theta_{w,\eta, \zeta}$ be the solution to $\tilde r_{n,w,\eta}(\theta) = \zeta$.
Next, we study the difference between $\hat \theta_{w,\eta, \zeta}$ and $\hat \theta_{w,\eta, 0}$. Without loss of generality, we assume $\zeta > 0$ and it is easy to see that
$\hat \theta_{w,\eta, \zeta} \leq \hat \theta_{w,\eta, 0}$ by the fact that $\psi$ is non-increasing.

By assumption, we can obtain that
\[|\nabla f_{w}(X)| \leq R_{\frac{1}{2}} |w| \leq R_{\frac{1}{2}} D_w\]
with $1/2$ probability, where $D_w$ is the diameter of parameter space.
Let event $E_{\text{small},x} = \{ \#\{i: |\nabla f_{w}(X_i)| \geq R_{\frac{1}{2}} D_w \} \geq 1/4 n \}$. By Hoeffding inequality, we know event $E_{\text{small},x}$ happens with probability at least
$ 1 - \exp\{- n / 4\} \geq 1 - \delta$.

On event $E_{\text{small},x}$,
it is not hard to get that
\begin{eqnarray}
\zeta &=& - \int_{\hat \theta_{w,\eta, \zeta}}^{\hat \theta_{w,\eta, 0}}
\tilde r_{n,w,\eta}'(\theta) d\theta \nonumber \\
&=& - \frac{1}{n \alpha}  \sum_{i=1}^n \int_{\hat \theta_{w,\eta, \zeta}}^{\hat \theta_{w,\eta, 0}} \frac{\partial \psi(\alpha(\nabla f_{w}(\tilde X_{i,\eta}) - \theta))}{\partial \theta} d\theta \nonumber \\
&\geq& \frac{1}{4} \psi'_{0.5} \int_{\hat \theta_{w,\eta, \zeta}}^{\hat \theta_{w,\eta, 0}} 1 d\theta  \label{diff:integral:lower}\\
&=& \frac{\psi'_{0.5}}{4} |\hat \theta_{w,\eta, 0} - \hat \theta_{w,\eta, \zeta}|. \label{diff:integral}
\end{eqnarray}
Here \eqref{diff:integral:lower} uses the fact that
$\frac{\partial \psi(\alpha(\nabla f_{w}(\tilde X_{i,\eta}) - \theta))}{\partial \theta} \geq \alpha \psi'_{0.5}$ provided that
$\alpha (R_{1/2} + 1)D_w \leq (2 p C_p)^{-\frac{1}{p-1}}$.
$\psi'_{0.5} = \inf_{0 \leq x \leq (2 p C_p)^{-\frac{1}{p-1}}} \psi'(x)$.
Remark: it is not hard to compute that
\[\psi'_{0.5} \geq \inf_{0 \leq x \leq (2 p C_p)^{-\frac{1}{p-1}}} \frac{1 - C_p p |x|^{p-1}}{1 - x + C_p |x|^p}
\geq \inf_{0 \leq x \leq (2 p C_p)^{-\frac{1}{p-1}}} 1 - C_p p |x|^{p-1} = 1/2.\]

Therefore, we have
\begin{eqnarray}\label{ineq:theta1}
\theta_{w,\eta, \zeta} \geq  \hat \theta_{w,\eta, 0} - \frac{4 \zeta}{\psi'_{0.5}} ~~ \text{and} ~~ \theta_{w,\eta, - \zeta} \geq  \hat \theta_{w,\eta, 0} + \frac{4 \zeta}{\psi'_{0.5}}
\end{eqnarray}
for any $w$ and $\zeta$ on event $E_{\text{small},x}$.

By the choice of $\psi$, it is easy to derive that
\begin{eqnarray}
\tilde r_{n, w, \eta}(\theta) - \frac{2A \eta}{\alpha} \leq  r_{n, w}(\theta) \leq \tilde r_{n, w, \eta}(\theta) + \frac{2A \eta}{\alpha}, \nonumber
\end{eqnarray}
where $A := - \log \big(1 - \frac{p-1}{p}(p C_p)^{-\frac{1}{p-1}} \big)$.
This implies
\begin{eqnarray}\label{ineq:t1}
\hat \theta_{w, \eta, \frac{2A \eta}{\alpha}} \leq \hat \theta_{w} \leq
\hat \theta_{w, \eta, -\frac{2A \eta}{\alpha}}
\end{eqnarray}
and thus
\begin{eqnarray}\label{ineq:t2}
\hat \theta_{w, \eta, 0} - \frac{8 A \eta}{\alpha \psi'_{0.5}}\leq \hat \theta_{w} \leq
\hat \theta_{w, \eta, 0} + \frac{8 A \eta}{\alpha \psi'_{0.5}}
\end{eqnarray}
by taking $\zeta = 2 A \eta / \alpha$.

Following the proof in Case 1, we can get
\[|\hat \theta_{w_1, \eta, 0} - \hat \theta_{w_2, \eta, 0}| \leq R_{\eta} |w_1 - w_2|.\]
Together with \eqref{ineq:t2}, we arrive at
\begin{eqnarray}
|\hat \theta_{w_1} - \hat \theta_{w_2}| \leq R_{\eta} |w_1 - w_2| + \frac{16 A \eta}{\alpha \psi'_{0.5}}
\end{eqnarray}

Additionally, by Lipschitz assumption, we know that $|\mathbb E[\nabla f_{w_1}(X)] - \mathbb E[\nabla f_{w_2}(X)]| \leq L_f|w_1 - w_2|$.
This implies that
\[|\hat \theta_{w} - \mathbb E[\nabla f_{w}(X)]| - |\hat \theta_{w'} - \mathbb E[\nabla f_{w'}(X)]| \leq (R_{\eta} + L_f)|w - w'| + \frac{32 A \eta}{\alpha \psi'_{0.5}}\]
for any $w, w'$.

\noindent \textit{Case 1}. ~ Taking $\delta' = \frac{\delta}{(2 D_w (R + L_f) /(A_{\alpha}(\delta)))^d}$ and using union bound over grids,
\[w \in \{-D_w, -D_w + \frac{A_{\alpha}(\delta)}{(R +L_f)}, \ldots,  D_w\}^d,\]
we have the uniform concentration inequality,
\begin{eqnarray}
\mathbb P(\sup_w |\hat \theta_{w} - \mathbb E[\nabla f_{w}(X)]| \geq 2 A_{\alpha}(\delta')) \leq \delta \label{ineq:uniform}
\end{eqnarray}
on event $E_{\text{small},x}$.

\noindent \textit{Case 2}. ~ Taking $\delta' = \frac{\delta}{(2 D_w (R_{\eta} +L_f) \alpha \psi'_{0.5}/(16 A \eta))^d}$ and using union bound over grids,
\[w \in \{-D_w, -D_w + \frac{16 A \eta}{(R_{\eta} +L_f) \alpha \psi'_{0.5}}, \ldots,  D_w\}^d,\]
we have the uniform concentration inequality,
\begin{eqnarray}
\mathbb P(\sup_w |\hat \theta_{w} - \mathbb E[\nabla f_{w}(X)]| \geq A_{\alpha}(\delta') + \frac{32 A \eta}{\alpha \psi'_{0.5}}) \leq \delta
\end{eqnarray}
on event $E_{\text{small},x}$.
In particular, we take $64 \eta/\alpha < A_{\alpha}(\delta)$, then term $\frac{32 A \eta}{\alpha \psi'_{0.5}}$ can be absorbed into $A_{\alpha}(\delta')$ by multiplying a constant 2.


\medskip

\noindent \textit{Step 2}.
We prove the convergence of Algorithm~\ref{alg:GD} under two scenarios,
(1) $F$ is differentiable and $L$-Lipschitz continuous, (2) $F$ is $\kappa$ strongly-convex and $L$-Lipschitz continuous.

For the first scenario, by the Lipschitz continuity, we have
\begin{eqnarray}
& & F(w^{(t+1)}) - F(w^{(t)}) \leq \nabla F(w^{(t)})(w^{(t+1)} - w^{(t)}) + \frac{L}{2} \|w^{(t+1)} - w^{(t)}\|^2 \nonumber \\
&=& - \gamma_t \nabla F(w^{(t)}) g^{(t)} + \frac{L \gamma_t^2}{2} \|g^{(t)}\|^2 \nonumber \\
& = & - \gamma_t \nabla F(w^{(t)}) (\nabla F(w^{(t)}) + \zeta^{(t)})
 + \frac{L \gamma_t^2}{2} \|\nabla F(w^{(t)}) + \zeta^{(t)}\|^2 \nonumber \\
 &\leq& - \gamma_t \|\nabla F(w^{(t)})\|^2 + \gamma_t \sqrt{d} \bar A_{\alpha}(\delta) \|\nabla F(w^{(t)})\| + L \gamma_t^2 (\|\nabla F(w^{(t)})\|^2 + d (\bar A_{\alpha}(\delta))^2), \label{eq:gd1}
\end{eqnarray}
where $\zeta^{(t)} :=  g^{(t)} - \nabla F(w^{(t)})$ and it is easy to see that $\|\zeta^{(t)}\| \leq \sqrt{d} \bar A_{\alpha}(\delta)$,
where
\[\bar A_{\alpha}(\delta) := A_{\alpha}(\frac{\delta}{2 D_w (R_{\eta} +L_f) \alpha \psi'_{0.5}/(16 A \eta)}).\]

By direct calculation, we can find that
\[\gamma_t \sqrt{d} \bar A_{\alpha}(\delta) \|\nabla F(w^{(t)})\| +
L \gamma_t^2 d (\bar A_{\alpha}(\delta))^2
\leq \frac{\gamma_t - \gamma_t^2 L}{2} \|\nabla F(w^{(t)})\|^2 \]
when $\gamma_t \leq \frac{4}{9 L}$ and $\|\nabla F(w^{(t)})\| \geq \sqrt{d} \bar A_{\alpha}(\delta)$.
Together with \eqref{eq:gd1}, we know
\begin{eqnarray}
F(w^{(t+1)}) - F(w^{(t)}) \leq - \frac{\gamma_t - \gamma_t^2 L}{2} \|\nabla F(w^{(t)})\|^2 \leq - \frac{5}{18} \gamma_t \|\nabla F(w^{(t)})\|^2.
\end{eqnarray}
Define the $T_{stop}$ be the smallest $t$ such that $\|\nabla F(w^{(t)})\| \leq \sqrt{d} \bar A_{\alpha}(\delta)$. Then we know
\begin{eqnarray}
\sum_{t=1}^{T_{stop}} \gamma_t \leq \frac{18(m_{f^{(0)}} - m_{f^{\ast}})}{5 d  (\bar A_{\alpha}(\delta))^2}
\end{eqnarray}
holds with probability at least $1 - \delta$. This complete the proof by using the assumption that $\gamma_t \equiv \gamma$.

In the second scenario, we can compute that
\begin{eqnarray}
& & \|w^{(t+1)} - w^{\ast}\| = \|w^{(t)} - \gamma_t g^{(t)} - w^{\ast}\| \nonumber \\
&\leq&  \|\|w^{(t)} - \gamma_t \nabla F(w^{(t)}) - w^{\ast}\|\| + \gamma_t \|\nabla F(w^{(t)}) - g^{(t)}\|. \label{eq:convex1}
\end{eqnarray}
The first term of \eqref{eq:convex1} can be handled via standard method in~\citet{nesterov2003introductory}. It then follows that
\[ \|w^{(t)} - \gamma_t \nabla F(w^{(t)}) - w^{\ast} \|^2 \leq (1 - \frac{2 \gamma_t \kappa L}{\kappa + L}) \|w^{(t)} - w^{\ast}\|^2. \]
For the second term, it is bounded via $\gamma_t \zeta$ with statistical error $\zeta := \sqrt{d} \bar A_{\alpha}(\delta)$.

Therefore, we have
\begin{eqnarray}
\|w^{(t+1)} - w^{\ast}\| \leq (\prod_{s=0}^{t} a_t) \|w^{0} - w^{\ast}\| +
\zeta(\sum_{s = 0}^t \gamma_s \prod_{s' = s+1}^t a_s),
\end{eqnarray}
where $a_t = \sqrt{1 - \frac{2 \gamma_t \kappa L}{\kappa + L}}$ and $\prod_{s' = t+1}^t \equiv 1$.
Especially, if $\gamma_t \equiv \gamma$ and we let $a \equiv \sqrt{1 - \frac{2 \gamma \kappa L}{\kappa + L}}$, we have
 \begin{eqnarray}
 \|w^{(t+1)} - w^{\ast}\| \leq a^{t+1} \|w^{0} - w^{\ast}\| +
 \zeta \gamma \frac{1 - a^{t+1}}{1 - a}.
 \end{eqnarray}
Finally, it is bounded that $\frac{1 - a^{t+1}}{1 - a} \leq \gamma / (1 - \sqrt{1 - \frac{2 \gamma \kappa L}{\kappa + L}})$.
\end{proof}

\begin{proof}[Proof of Theorem \ref{thm:alg3}]

We do Taylor expansion for
\begin{eqnarray}
\sum_{i=1}^n \phi(\alpha(f_{w^{(t+1)}}(X_i) - \mu)) \label{eq:solve}
\end{eqnarray}
and get
\begin{eqnarray}
& & \sum_{i=1}^n \phi(\alpha(f_{w^{(t)}}(X_i) - \hat \mu^{(t)}) + \alpha(f_{w^{(t+1)}}(X_i) - f_{w^{(t)}}(X_i) + (\hat \mu^{(t)} - \mu))  ) \nonumber \\
& = & \sum_{i=1}^n \phi(\alpha(f_{w^{(t)}}(X_i) - \hat \mu^{(t)}))
+ \sum_{i=1}^n \phi'(\alpha(f_{w^{(t)}}(X_i) - \hat \mu^{(t)}))
\alpha(f_{w^{(t+1)}}(X_i) - f_{w^{(t)}}(X_i) + (\hat \mu^{(t)} - \mu))
+ \nonumber \\
& & + O(\sum_{i=1}^n \alpha^2(f_{w^{(t+1)}}(X_i) - f_{w^{(t)}}(X_i) + (\hat \mu^{(t)} - \mu))^2).
\end{eqnarray}
Therefore, let $\mu^{(t+1)}$ be the solution to \eqref{eq:solve} = 0, it satisfies
\begin{eqnarray}
\mu^{(t+1)} &=& \frac{\sum_i \phi(\alpha(f_{w^{(t)}}(X_i) - \hat \mu^{(t)}))}{\alpha \sum_i \phi'(\alpha(f_{w^{(t)}}(X_i) - \hat \mu^{(t)})) } + \hat \mu^{(t)} + \sum_i \nu_i^{(t)} (f_{w^{(t+1)}}(X_i) - f_{w^{(t)}}(X_i)) \nonumber \\
& &
+ O(\frac{\sum_i \alpha^2 (f_{w^{(t+1)}}(X_i) - f_{w^{(t)}}(X_i) + (\hat \mu^{(t)} - \mu))^2}{\alpha \sum_i \phi'(\alpha(f_{w^{(t)}}(X_i) - \hat \mu^{(t)}))}).
\end{eqnarray}
By recalling that
\[\hat \mu^{(t+1)} = \hat \mu^{(t)} + \sum_i \nu_i^{(t)} (f_{w^{(t+1)}}(X_i) - f_{w^{(t)}}(X_i)),\]
we could compute the difference, $\text{err}_{\mu}^{(t + 1)} := |\mu^{(t+1)} - \hat \mu^{(t+1)}|$,
\begin{eqnarray}
\mu^{(t+1)} - \hat \mu^{(t+1)} =
\underbrace{\frac{\sum_i \phi(\alpha(f_{w^{(t)}}(X_i) - \hat \mu^{(t)}))}{\alpha \sum_i \phi'(\alpha(f_{w^{(t)}}(X_i) - \hat \mu^{(t)})) }}_{\text{term}_1}
+
\underbrace{O(\frac{\sum_i \alpha^2 (f_{w^{(t+1)}}(X_i) - f_{w^{(t)}}(X_i) + (\hat \mu^{(t)} - \mu))^2}{\alpha \sum_i \phi'(\alpha(f_{w^{(t)}}(X_i) - \hat \mu^{(t)}))})}_{\text{term}_2}.
\end{eqnarray}
We next prove that
(i) $\text{term}_1$ is $O_p(\text{err}_{\mu}^{(t)} + \text{err}_{\mu}^{(t) 2})$;
(ii) $\text{term}_2$ is $O_p(\alpha + \alpha^{p-1})$.

We first control term
$\sum_i \phi'(\alpha(f_{w^{(t)}}(X_i) - \hat \mu^{(t)})) $.
Notice that
$\phi'(x) \geq 1/2$ if $|x| \leq c$. We define the index set
$\mathcal I_{large}^{(t)} := \{i: |\alpha(f_{w^{(t)}}(X_i) - \hat \mu^{(t)})| \geq c\}$.
It can be computed that
\begin{eqnarray}
\mathbb P(\{\alpha(f_{w^{(t)}}(X_i) - \hat \mu^{(t)})| \geq c\}) \leq C \alpha^p
\end{eqnarray}
for a universal constant $C$.
Then by Hoeffding inequalities for binary variables, we have
\begin{eqnarray}
|\mathcal I_{large}^{(t)}| = O_p(\alpha^p n \log n).
\end{eqnarray}
Therefore, we have
\begin{eqnarray}
n \geq \sum_i \phi'(\alpha(f_{w^{(t)}}(X_i) - \hat \mu^{(t)})) \geq \frac{1}{2}(n - n \alpha^p \log n). \label{eq:lower:denominator}
\end{eqnarray}
For $\sum_i \phi(\alpha(f_{w^{(t)}}(X_i) - \hat \mu^{(t)}))$, we know
\begin{eqnarray}
\sum_i \phi(\alpha(f_{w^{(t)}}(X_i) - \hat \mu^{(t)})) &=&
\sum_i \phi(\alpha(f_{w^{(t)}}(X_i) - \hat \mu^{(t)})) - 0 \nonumber \\
&=&
\sum_i \phi(\alpha(f_{w^{(t)}}(X_i) - \hat \mu^{(t)}))
- \sum_i \phi(\alpha(f_{w^{(t)}}(X_i) - \mu^{(t)})) \nonumber \\
&\leq& \sum_i \alpha \phi'(\alpha(f_{w^{(t)}}(X_i) - \hat \mu^{(t)})) |\hat \mu^{(t)} - \mu^{(t)}| + O(n \alpha^2 |\hat \mu^{(t)} - \mu^{(t)}|^2) \nonumber \\
&=& \alpha \sum_i \phi'(\alpha(f_{w^{(t)}}(X_i) - \hat \mu^{(t)})) \text{err}_{\mu}^{(t)} + O(n \alpha^2 \text{err}_{\mu}^{(t) 2}).
\end{eqnarray}
Therefore, the first term can be bounded by,
\begin{eqnarray}
\text{term}_1 &\leq&
\frac{\alpha \sum_i \phi'(\alpha(f_{w^{(t)}}(X_i) - \hat \mu^{(t)})) \text{err}_{\mu}^{(t)} + O(n \alpha^2 \text{err}_{\mu}^{(t) 2})}{\alpha \sum_i \phi'(\alpha(f_{w^{(t)}}(X_i) - \hat \mu^{(t)})) } \nonumber \\
&\leq& \text{err}_{\mu}^{(t)} + O(\alpha \text{err}_{\mu}^{(t)2}).
\end{eqnarray}

For $\text{term}_2$,
we define index set
$\mathcal I_{x}^{(t)} := \{i: |f_{w^{(t+1)}}(X_i) - f_{w^{(t)}}(X_i) + (\hat \mu^{(t)} - \mu^{(t)})| \geq x\}$ given level $x$.
Following the idea of deriving \eqref{ineq:uniform}, we know
$\sup_w |\hat \mu_f - m_f| = o_p(1)$.
Thus both $\hat \mu^{(t)}$ and $\mu^{(t)}$ are bounded by some constant $R$. Furthermore, $\mathbb E[|f_w^{(t+1)}(X_i) - f_w^{(t)}(X_i) + (\hat \mu^{(t)} - \mu^{(t)})|^p] \leq p (\mathbb E[|f_w^{(t+1)}(X_i)|^p] + \mathbb E[|f_w^{(t)}(X_i)|^p] + R) \leq \tilde C$ for some universal constant $\tilde C$.

Again, by calculations, we can get that
\begin{eqnarray}
\mathbb P(|f_w(X_i) - f_w(X_i) + (\hat \mu^{(t)} - \mu^{(t)})| \geq x) \leq \tilde C /x^p
\end{eqnarray}
and
$|\mathcal I_{x}^{(t)}| = O_p(n \log n / x^p) $ for all $x$ and $t$.

Then the numerator in $\text{term}_2$ can be bounded by
\begin{eqnarray}
& & \sum_{i \notin \mathcal I_{1}^{(t)}} \alpha^2 (f_{w^{(t+1)}}(X_i) - f_{w^{(t)}}(X_i) + (\hat \mu^{(t)} - \mu))^2
+
\sum_{i \in \mathcal I_{1}^{(t)}} \alpha^2 (f_{w^{(t+1)}}(X_i) - f_{w^{(t)}}(X_i) + (\hat \mu^{(t)} - \mu))^2 \nonumber \\
&\leq& \alpha^2  |\mathcal I_{1}^{(t)c}| +
\sum_{i \in \mathcal I_{1}^{(t)}} \alpha^2 (f_{w^{(t+1)}}(X_i) - f_{w^{(t)}}(X_i) + (\hat \mu^{(t)} - \mu))^2 \\
&\leq& \alpha^2 n + \sum_{k=2}^{x_{cut}/\alpha} \sum_{i \in \mathcal I_{k-1}^{(t)} - \mathcal I_k^{(t)}} \alpha^2 (f_{w^{(t+1)}}(X_i) - f_{w^{(t)}}(X_i) + (\hat \mu^{(t)} - \mu))^2 \nonumber \\
&\leq& \alpha^2 n  + \sum_{k=2}^{x_{cut}/\alpha} \sum_{i \in \mathcal I_{k-1}^{(t)} - \mathcal I_k^{(t)}} \alpha^2 k^2 \nonumber \\
&\leq& \alpha^2 n + 2 \alpha^2 \int_{1}^{x_{cut}/\alpha} x n \log n / x^p dx \nonumber \\
&\leq& \alpha^2 n + 2 \alpha^2 n \log n  (x_{cut}/\alpha)^{2-p},
\end{eqnarray}
where $x_{cut}$ being the threshold that $\phi'(x) \equiv 0$ if $|x| \geq x_{cut}$.

Therefore,
\begin{eqnarray}
\text{term}_2 &\leq& (\alpha^2 n + 2 \alpha^2 n \log n  (x_{cut}/\alpha)^{2-p}) / \big(\frac{\alpha}{2}(n - n\alpha^p\log n)\big)\nonumber \\
&\leq& 5 (\log n) (\alpha + \alpha^{p-1}).
\end{eqnarray}
Putting everything together, we have
\begin{eqnarray}
\text{err}_{\mu}^{(t+1)} \leq \text{err}_{\mu}^{(t)} + O(\alpha \text{err}_{\mu}^{(t)2}) + 5(\log n)(\alpha + \alpha^{p-1}).
\end{eqnarray}
When $n$ is large enough, $O(\alpha \text{err}_{\mu}^{(t)2})$ can be absorbed into $6(\log n)(\alpha + \alpha^{p-1})$.
Thus we have
\begin{eqnarray}
\text{err}_{\mu}^{(t+1)} \leq \text{err}_{\mu}^{(t)} + 6(\log n)(\alpha + \alpha^{p-1})
\end{eqnarray}
and it arrives at
\begin{eqnarray}
\text{err}_{\mu}^{(t)} \leq 6 t (\log n) (\alpha + \alpha^{(p-1)}). \label{eq:129}
\end{eqnarray}
This also implies that $\hat \mu^{(t+1)} = \mu^{(t+1)} + 	O(\text{err}_{\mu}^{(t)}) = m_{f_{w^{(t+1)}}} + O(\text{err}_{\mu}^{(t)}) + o_p(1) \leq R$.

Moreover, we compare the difference between weights
$\tilde \nu_i := \frac{\phi'(\alpha(f_{w^{(t)}}(X_i) - \mu^{(t)}))}{\sum_i \phi'(\alpha(f_{w^{(t)}}(X_i) - \mu^{(t)}))}$ and $\nu_i^{(t)}$.
That is,
\begin{eqnarray}
& & |\tilde \nu_i  - \nu_i^{(t)}| \nonumber \\
& = & |\frac{\phi'(\alpha(f_{w^{(t)}}(X_i) - \mu^{(t)}))}{\sum_i \phi'(\alpha(f_{w^{(t)}}(X_i) - \mu^{(t)}))}
- \frac{\phi'(\alpha(f_{w^{(t)}}(X_i) - \hat \mu^{(t)}))}{\sum_i \phi'(\alpha(f_{w^{(t)}}(X_i) - \hat \mu^{(t)}))}| \nonumber \\
&\leq& |\frac{|(\sum_i \phi'(\alpha(f_{w^{(t)}}(X_i) - \hat \mu^{(t)})) - \sum_i \phi'(\alpha(f_{w^{(t)}}(X_i) - \mu^{(t)}))) \phi'(\alpha(f_{w^{(t)}}(X_i) - \mu^{(t)})) |}{\sum_i \phi'(\alpha(f_{w^{(t)}}(X_i) - \mu^{(t)})) \cdot \sum_i \phi'(\alpha(f_{w^{(t)}}(X_i) - \hat \mu^{(t)}))}| \nonumber \\
& & +
|\frac{|(\phi'(\alpha(f_{w^{(t)}}(X_i) - \hat \mu^{(t)})) -  \phi'(\alpha(f_{w^{(t)}}(X_i) - \mu^{(t)}))) \cdot \sum_i \phi'(\alpha(f_{w^{(t)}}(X_i) - \mu^{(t)})) |}{\sum_i \phi'(\alpha(f_{w^{(t)}}(X_i) - \mu^{(t)})) \cdot \sum_i \phi'(\alpha(f_{w^{(t)}}(X_i) - \hat \mu^{(t)}))}|  \nonumber \\
&\leq& C_{\phi''} n \alpha \text{err}_{\mu}^{(t)} /  (\sum_i \phi'(\alpha(f_{w^{(t)}}(X_i) - \mu^{(t)})) \cdot \sum_i \phi'(\alpha(f_{w^{(t)}}(X_i) - \hat \mu^{(t)}))) \nonumber \\
& & + C_{\phi''} \alpha \text{err}_{\mu}^{(t)} / \sum_i \phi'(\alpha(f_{w^{(t)}}(X_i) - \hat \mu^{(t)})) \nonumber \\
&\leq& 3 C_{\phi''} \alpha \text{err}_{\mu}^{(t)} / n,
\end{eqnarray}
where $C_{\phi''}:= \max_{x: |x| \leq x_{cut}} \phi''(x)$ and the last inequality uses the fact that
$\sum_i \phi'(\alpha(f_{w^{(t)}}(X_i) - \hat \mu^{(t)})) \geq \frac{1}{2}(n - n\alpha^p \log n)$ by \eqref{eq:lower:denominator}.

As a results, we could get that
\begin{eqnarray}
|\nabla_w \hat \mu_{f_{w^{(t)}}} - g^{(t)}| &\leq& \sum_i \frac{3 C_{\phi''} \alpha \text{err}_{\mu}^{(t)}}{n} \nabla_w f_{(w^{(t)})}(X_i) \nonumber \\
&\leq& 3 C_{\phi''} \alpha \text{err}_{\mu}^{(t)}  \cdot \int_1^{x_{cut}/\alpha} x^{-p} dx \\
&\leq& 3 C_{\phi''} \alpha \text{err}_{\mu}^{(t)}  / (p-1) =: \zeta^{(t)}, \label{eq:133}
\end{eqnarray}
elementwisely.

Finally, we analyze the difference $\hat \mu_f$ of at each step,
\begin{eqnarray}
& & \hat \mu_{f_{w^{(t+1)}}} - \hat \mu_{f_{w^{(t)}}} \leq \nabla_w \hat \mu_{f_{w^{(t)}}}(w^{(t+1)} - w^{(t)}) + \frac{L}{2} \|w^{(t+1)} - w^{(t)}\|^2 \nonumber \\
&=& - \gamma_t \nabla_w \hat \mu_{f_{w^{(t)}}} g^{(t)} + \frac{L \gamma_t^2}{2} \|g^{(t)}\|^2 \nonumber \\
& = & - \gamma_t \nabla_w \hat \mu_{f_{w^{(t)}}} (\nabla_w \hat \mu_{f_{w^{(t)}}} + \zeta^{(t)})
+ \frac{L \gamma_t^2}{2} \|\nabla_w \hat \mu_{f_{w^{(t)}}} + \zeta^{(t)}\|^2 \nonumber \\
&\leq& - \gamma_t \|\nabla_w \hat \mu_{f_{w^{(t)}}}\|^2 + \gamma_t \|\nabla_w \hat \mu_{f_{w^{(t)}}}\| \|\zeta^{(t)}\| + L \gamma_t^2 (\|\nabla_w \hat \mu_{f_{w^{(t)}}}\|^2 + \|\zeta^{(t)}\|^2), \label{eq:gd2}
\end{eqnarray}
When $\|\nabla_w \hat \mu_{f_{w^{(t)}}}\| \geq 2 \|\zeta^{(t)}\|$ and $\gamma_t \leq 1 / 5L$, we have
\begin{eqnarray}
\hat \mu_{f_{w^{(t+1)}}} - \hat \mu_{f_{w^{(t)}}}
\leq - \frac{1}{4} \gamma_t \|\nabla_w \hat \mu_{f_{w^{(t)}}}\|^2
\label{eq:descent}
\end{eqnarray}

Moreover, we assume tuning parameter $\alpha \in (0,1)$, then we know
$\|\zeta^{(t)}\| \leq \sqrt{d} C_{\phi''} \alpha^{p} 36 t (\log n)/(p-1)$ from \eqref{eq:129} and \eqref{eq:133}.
Since $T_{end}(\varrho) := \min \{t: \|\nabla_w \hat \mu_{f_{w^{(t)}}}\| \leq \varrho \}$,
therefore we have
\begin{eqnarray}
T_{end}(\varrho) \leq \frac{\hat \mu_{f_{w^{(0)}}}}{\gamma \varrho^2}
\end{eqnarray}
as long as
\begin{eqnarray}
    \varrho \geq \sqrt{d} C_{\phi''} \alpha^{p} 36 t (\log n)/(p-1)
    \label{eq:varrho}
\end{eqnarray}
holds for any $t \leq T_{end}(\varrho)$.
In other words, it suffices to have $\varrho \geq \Big(36 C_{\phi''} \alpha^{p} \sqrt{d}(\log n) \frac{\hat \mu_{f_{w^{(0)}}}}{\gamma (p-1)}\Big)^{1/3}$ to make \eqref{eq:varrho} held. This concludes the proof.
\end{proof}

\end{document}